\algnewcommand{\Inputs}[0]{\State \textbf{inputs}}
\algnewcommand{\Input}[1]{\State \hspace*{\algorithmicindent} #1}
\algnewcommand{\Outputs}[0]{\State \textbf{outputs}}
\algnewcommand{\Output}[1]{\State \hspace*{\algorithmicindent} #1}
\algrenewcommand{\Return}[1]{\State \textbf{return} #1}
\algnewcommand{\Begin}{\State \hspace*{-\algorithmicindent}\textbf{begin}}
\newcommand{\inlinesection}[1]{\medskip \noindent \textbf{#1}}
\newcommand{\sysname}{\texttt{DataScope}\xspace}
\definecolor{myyellow}{HTML}{FAC802}
\definecolor{myblue}{HTML}{2BBFD9}
\definecolor{myred}{HTML}{DF362A}
\definecolor{mygreen}{HTML}{8AB365}
\definecolor{mypink}{HTML}{C670D2}
\newcommand{\topicitem}[1]{{\small \textit{ \color{cyan}{> #1 }}}}
\newcommand{\citeX}{{\color{red}[CITE]}\xspace}
\newcommand{\at}[1]{\protect\ensuremath{\mathsf{#1}}\xspace}
\definecolor{darkorange}{rgb}{1.0, 0.55, 0.0}
\newcommand{\todo}[1]{{\color{red}(TODO: #1)}}
\newcommand{\ww}[1]{\textcolor{blue}{Wentao: #1}}
\definecolor{darkgray}{rgb}{0.33, 0.33, 0.33}
\newenvironment{customlegend}[1][]{%
    \begingroup
    \let\addlegendimage=\pgfplots@addlegendimage
    \let\addlegendentry=\pgfplots@addlegendentry
    \pgfplots@init@cleared@structures
    \pgfplotsset{#1}%
}{%
    \pgfplots@createlegend
    \endgroup
}%
\title{Data Debugging with Shapley Importance over End-to-End Machine Learning Pipelines \\ \vspace{0.5em}
\large Data Importance and Valuation Meet Feature Extractors and Data Provenance}
\author[1]{Bojan Karlaš}
\author[1]{David Dao}
\author[2]{Matteo Interlandi}
\author[3]{Bo Li}
\author[4]{Sebastian Schelter}
\author[2]{Wentao Wu}
\author[1]{Ce Zhang}
\affil[1]{ETH Zurich, Switzerland}
\affil[2]{Microsoft, USA}
\affil[3]{UIUC, USA}
\affil[4]{University of Amsterdam, Netherlands}
\affil[ ]{\textit {\{bojan.karlas, david.dao, ce.zhang\}@inf.ethz.ch, \{matteo.interlandi, wentao.wu\}@microsoft.com, lbo@uiuc.edu, s.schelter@uva.nl}}
\date{}
\newtheorem{theorem}{Theorem}[section]
\newtheorem{corollary}{Corollary}[theorem]
\newtheorem{lemma}[theorem]{Lemma}
\begin{document}

\setcitestyle{numbers}

\maketitle


\begin{abstract}
Developing modern machine learning (ML) applications is 
\textit{data-centric}, of which one fundamental challenge is to understand the influence of data quality to ML training ---
\textit{``Which training examples are `guilty' in making the trained ML model predictions inaccurate or unfair?''} 
Modeling data influence for ML training has attracted intensive interest over the last decade, and one popular framework is to compute the \textit{Shapley value} of each training example with respect to utilities such as validation accuracy and fairness of the trained ML model. Unfortunately, despite recent intensive 
interests and research, existing methods only consider  a single ML model ``in isolation'' and do not consider an end-to-end ML pipeline that consists of \textit{data transformations}, \textit{feature extractors}, and \textit{ML training}.

We present \texttt{Ease.ML/DataScope}, the first system that efficiently computes Shapley values of training examples over an \emph{end-to-end} ML pipeline, and illustrate its applications in data debugging for ML training.
To this end, we first develop a novel algorithmic framework  that computes Shapley value over a specific family of ML pipelines that we call \textit{canonical pipelines}: \textit{a positive relational algebra query followed by a $K$-nearest-neighbor (KNN) classifier}. 
We show that, for many subfamilies of canonical pipelines, computing Shapley value is in \textsf{PTIME}, contrasting the exponential complexity of computing Shapley value in general.
We then put this to practice --- given an \texttt{sklearn} pipeline, we approximate it with a canonical pipeline to use as a proxy. 
We conduct extensive experiments illustrating different use cases and utilities.
Our results show that \texttt{DataScope} is up to four orders of magnitude faster over state-of-the-art Monte Carlo-based methods, while being comparably, and often even more, effective in data debugging.
\end{abstract}

{\small {\bf Code Availability: \href{https://github.com/easeml/datascope}{github.com/easeml/datascope}; \href{https://github.com/schelterlabs/arguseyes}{github.com/schelterlabs/arguseyes/tree/datascope}}}

%
\begin{Python}[float=tp,frame=none,captionpos=b,texcl=true,numbers=left,xleftmargin=.25in,caption={
A simplified illustration of the core functionality enabled by \sysname --- given an end-to-end ML pipeline (Line 1-19), 
and a utility (e.g., \texttt{sklearn.metrics.accuracy}),
\sysname computes the Shapley value of
each training example as its \textit{importance} with respect to the given utility. 
},label={lst:example}]
# Data loading
train_data = pd.read_csv("...") 
test_data = pd.read_csv("...") 
side_data = pd.read_csv("...")
# Data integration
train_data = train_data.join(side_data, on="item_id")
test_data = test_data.join(side_data, on="item_id")
# Declaratively defined (nested) feature encoding pipeline
pipeline = Pipeline([
  ('features', ColumnTransformer([
    (StandardScaler(), ["num_att1", "num_att2"]),
    (Pipeline([SimpleImputer(), OneHotEncoder()])), 
      ["cat1", "cat2"]),
    (HashingVectorizer(n_features=100), "text_att1")])),
  # ML model for learning  
  ('learner', SVC()])
# Train and evaluate model
pipeline.fit(train_data, train_data.label)
print(pipeline.score(test_data, test_data.label))

# ********************** DATASCOPE **********************
# Run data importance computation over pipeline
(train_data_with_imp, side_data_with_imp) = \
  DataScope.debug(pipeline, sklearn.metrics.accuracy)
# *******************************************************
\end{Python}


\section{Introduction}

Last decade has witnessed the rapid advancement  of machine learning (ML), along which comes the advancement of \textit{machine learning systems}~\cite{Ratner2019-kt}. Thanks to these advancements, training a machine learning model has never been easier today for practitioners --- distributed learning over hundreds of devices~\cite{Liu2020-hb,Li2020-xr,Gan2021-kk, Sergeev2018-om, Jiang2020-qu}, tuning hyper-parameters and selecting the best model~\cite{noauthor_undated-ap, Zoph2016-bz,Feurer2015-cl}, all of which become much more systematic and less mysterious.
Moreover, all major cloud service providers now support AutoML and other model training and serving  services.

\vspace{0.5em}
\noindent 
{\bf \em Data-centric Challenges and Opportunities.} Despite these great advancements, a new collection of challenges start to emerge in building better machine learning applications.
One observation getting great attention recently is that 
\textit{the quality of 
a model is often a reflection 
of the quality of the underlying
training data}. As a result,
often the most practical and
efficient way of improving
ML model quality is to 
improve data quality.
As a result,
recently, researchers have
studied how to conduct 
data cleaning~\cite{Krishnan_undated-mf,karlas2020nearest}, data debugging~\cite{koh2017understanding, koh2019accuracy,ghorbani2019data,jia2019towards,Jia2021-zf, Jia2019-kz}, and data acquisition~\cite{Ratner2017-aw}, specifically for 
the purpose of improving an ML model.

\vspace{0.3em}
\noindent
{\bf \em Data Debugging via Data Importance.}
In this paper, we focus on 
the fundamental problem of 
reasoning about the 
\textit{importance of 
training examples with respect to some utility functions 
(e.g., validation accuracy and fairness) of the trained ML model.} There have been intensive
recent interests to develop methods for 
reasoning about data importance. These 
efforts can be categorized into two 
different views. The \textit{Leave-One-Out (LOO)}
view of this problem tries to calculate,
given a training set $\mathcal{D}$,
the importance of a data example $x \in \mathcal{D}$ modeled as the \emph{utility} decrease after removing this data example: $U(\mathcal{D}) - U(\mathcal{D} \backslash x)$.
To scale-up this process over a large dataset,
researchers have been developing approximation 
methods such as \textit{influence function}
for a diverse set of ML models~\cite{koh2017understanding}.
On the other hand, the \textit{Expected-Improvement (ExpI)} view
of this problem tries to 
calculate such a utility decrease over 
\textit{all possible subsets of $\mathcal{D}$}.
Intuitively, this line of work models data 
importance as an ``expectation'' over 
all possible subsets/sub-sequences of $\mathcal{D}$, instead of trying to reason about it solely on a single 
training set. 
One particularly popular approach is 
to use Shapley value~\cite{ghorbani2019data,jia2019towards,Jia2021-zf},
a concept in game theory that has been 
applied to data importance 
and data valuation~\cite{Jia2019-kz}.

\vspace{0.3em}
\noindent
{\bf \em Shapley-based Data Importance.}
In this paper, we do not champion one 
view over the other (i.e., LOO vs. ExpI).
We scope ourselves and only focus on 
Shapley-based methods since 
previous work has shown applications 
that can only use 
Shapley-based methods because of the favorable
properties enforced by the Shapley value.
Furthermore, taking expectations
can sometimes provide a more reliable
importance measure~\cite{Jia2019-kz} than
simply relying on a single dataset.
Nevertheless,
we believe that it is important 
for future ML systems to support both 
and we hope that this paper
can inspire future research in data importance for both the LOO and ExpI views.

One key challenge of Shapley-based
data importance is its computational 
complexity --- in the worst case,
it needs to enumerate \textit{exponentially} 
many subsets. There have been different
ways to \emph{approximate} this computation, either
with MCMC~\cite{ghorbani2019data} and group testing~\cite{jia2019towards} or 
proxy models such as K-nearest neighbors (KNN)~\cite{Jia2021-zf}.
One surprising result is that 
Shapley-based data importance 
can be calculated efficiently (in 
\textit{polynomial} time) for 
KNN classifiers~\cite{Jia2021-zf}, and
using this as a proxy for 
other classifiers performs well
over a diverse range of tasks~\cite{Jia2019-kz}.

\vspace{0.3em}
\noindent
{\bf \em Data Importance over Pipelines.}
Existing methods for computing Shapley values~\cite{ghorbani2019data,jia2019towards,Jia2021-zf,Jia2019-kz} are designed to directly operate on a single numerical input dataset for an ML model, typically in matrix form. However, in real-world ML applications, this data is typically generated on the fly from multiple data sources with an ML pipeline. Such pipelines often take multiple datasets as input, and transform them into a single numerical input dataset with relational operations (such as joins, filters, and projections) and common feature encoding techniques, often based on nested estimator/transformer pipelines, which are integrated into popular ML libraries such as scikit-learn~\cite{pedregosa2011scikit}, SparkML~\cite{meng2016mllib} or Google~TFX~\cite{baylor2017tfx}. It is an open problem how to apply Shapley-value computation in such a setup.

\autoref{lst:example} shows a toy example of such an end-to-end ML pipeline, which includes relational operations from pandas for data preparation (lines~3-9), a nested estimator/transformer pipeline for encoding numerical, categorical, and textual attributes as features (lines~12-16), and an ML model from scikit-learn (line~18). The code loads the data, splits it temporally into training and test datasets, `fits' the pipeline to train the model, and evaluates the predictive quality on the test dataset. This leads us to the key question we pose in this work: 
\begin{quote}
\textit{Can we efficiently 
compute Shapley-based data importance
over such an end-to-end ML pipeline with \underline{both} data processing and 
ML training?}    
\end{quote}

\vspace{0.3em}
\noindent
{\bf \em Technical Contributions.}
We present 
\texttt{Ease.ML}/\sysname, the first 
system 
that efficiently computes
and approximates Shapley
value over end-to-end ML pipelines. 
\sysname takes as input
an ML pipeline (e.g., 
a \texttt{sklearn} pipeline)
and a given utility function,
and outputs 
the importance, measured
as the Shapley value, of each input tuple of the ML pipeline.
\autoref{lst:example} (lines 21-25) gives a simplified illustration of this core functionality provided by \sysname.
A user points \sysname to the pipeline code, and \sysname executes the pipeline, extracts the input data, which is annotated with the corresponding Shapley value per input tuple. The user could then, for example, retrieve and inspect the least useful input tuples. 
We present several use cases of how these importance values can be used, including label denoising and 
fairness debugging, in \autoref{sec:evaluation}.
We made the following contributions when developing \sysname.


\vspace{0.3em}
\textbf{Our first technical contribution}
is to jointly analyze Shapley-based 
data importance together with a \textit{feature extraction pipeline}. To our best knowledge,
this is the first time that 
these two concepts are 
analyzed together.
We first show 
that we can develop a \textit{PTIME algorithm} 
given a counting oracle relying on data provenance.
We then show that,
for a collection of
``canonical pipelines'', which covers many real-world 
pipelines~\cite{psallidas2019data} (see \autoref{tbl:pipelines} in \autoref{sec:evaluation} for examples), 
this counting oracle itself can be implemented 
in polynomial time. This provides 
an efficient algorithm
for computing Shapley-based 
data importance 
over these ``canonical pipelines''.

\vspace{0.3em}
\textbf{Our second technical contribution} 
is to understand and further adapt 
our technique in the context of real-world ML pipelines. 
We identify scenarios from the aforementioned 500K ML pipelines where our techniques
cannot be directly applied to have PTIME algorithms.
We introduce 
a set of simple yet effective 
approximations and optimizations 
to further improve the performance on these scenarios.

\vspace{0.3em}
\textbf{Our third technical contribution}
is an extensive empirical study of 
\sysname. We show that for a
diverse range of ML pipelines, 
\sysname
provides effective approximations to support
a range of applications to improve 
the accuracy and fairness
of an ML pipeline. 
Compared with strong
state-of-the-art methods based on 
Monte Carlo sampling,
\sysname can be up to four orders of magnitude faster while being comparably, and often even more, effective in data debugging.

\section{Preliminaries}
\label{sec:prelim}

In this section we describe several concepts from existing research that we use as basis for our contributions. Specifically, (1) we present the definition of machine learning pipelines and their semantics, and (2) we describe decision diagrams as a tool for compact representation of Boolean functions.

\subsection{End-to-end ML Pipelines} \label{sec:end-to-end-ml-pipelines}


An end-to-end ML application consists 
of two components: (1) a feature 
extraction pipeline, and (2) a
downstream ML model. To conduct 
a joint analysis over one such 
end-to-end application, 
we leave the precise definite to \autoref{sec:problem-formal}.
One important component in our 
analysis relies on the \textit{provenance} of 
the feature extraction pipeline,
which we will discuss as follows.

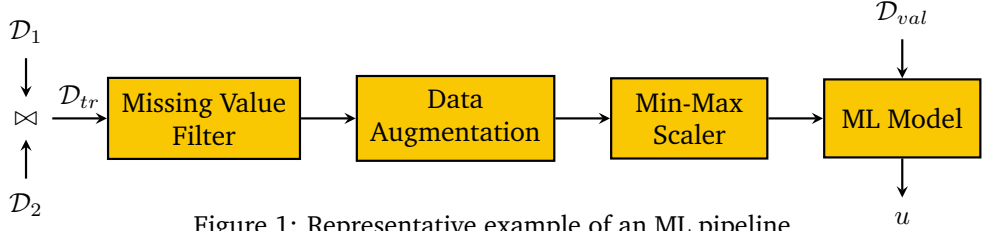
\begin{figure}
    \centering
    \resizebox{0.8\columnwidth}{!}{%
    \begin{tikzpicture}[align=center, node distance=5mm and 7mm, line width=0.8pt]
        \tikzstyle{free} = [inner sep=5pt]
        \tikzstyle{box} = [draw, rectangle, fill=myyellow, inner sep=5pt, minimum width=2cm, minimum height=1cm]
        
        \node[free] (join) {$\bowtie$};
        \node[free] (dataset1) [above=of join] {$\mathcal{D}_1$};
        \node[free] (dataset2) [below=of join] {$\mathcal{D}_2$};
        
        \node[box] (o1) [right=of join] {Missing Value \\ Filter};
        \node[box] (o2) [right=of o1] {Data \\ Augmentation};
        \node[box] (o3) [right=of o2] {Min-Max \\ Scaler};
        \node[box] (model) [right=of o3] {ML Model};
        
        \node[free] (val_dataset) [above=of model] {$\mathcal{D}_{val}$};
        \node[free] (acc) [below=of model] {$u$};

        \draw[-stealth] (dataset1) -- (join);
        \draw[-stealth] (dataset2) -- (join);
        
        \draw[-stealth] (join) -- node[above] {$\mathcal{D}_{tr}$} (o1);
        \draw[-stealth] (o1) -- (o2);
        \draw[-stealth] (o2) -- (o3);
        \draw[-stealth] (o3) --  (model);
        \draw[-stealth] (val_dataset) -- (model);
        \draw[-stealth] (model) -- (acc);
    \end{tikzpicture}
    }
    \vspace{-2em}
    \caption{Representative example of an ML pipeline.}
    \vspace{2em}
    \label{fig:example-pipeline}
    \end{figure}

\inlinesection{Provenance Tracking.}
Input examples (tuples) in $\mathcal{D}_{tr}$
are transformed by a feature processing pipeline before being turned into a 
processed training dataset $\mathcal{D}_{tr}^f := f(\mathcal{D}_{tr})$, which is directly used to train the model. To enable ourselves to compute the importance 
of examples in  $\mathcal{D}_{tr}$,
it is useful to relate the presence of tuples in $\mathcal{D}_{tr}^f$ in the training dataset with respect to the presence of tuples in $\mathcal{D}_{tr}$. In other words, we need to know the \emph{provenance} of training tuples. In this paper, we
rely on the well-established theory of provenance semirings \cite{green2007provenance} to
describe such provenance.

We associate a variable $a_t \in A$ with every tuple $t$ in the training dataset $\mathcal{D}_{tr}$.
We define \emph{value assignments} $v : A \rightarrow \mathbb{B}$ to 
describe whether a given tuple $t$
appears in $\mathcal{D}_{tr}$ --- by 
setting $v(a_t)=0$, we ``exclude'' $t$
from $\mathcal{D}_{tr}$ and by setting
$v(a_t)=1$, we ``include''
$t$ in $\mathcal{D}_{tr}$.
Let $\mathcal{V}_A$
be the set of all possible such 
value assignments ($|\mathcal{V}_A| = 2^{|A|}$).
We use
\[
\mathcal{D}_{tr}[v] = \{t \in \mathcal{D}_{tr} | v(a_t) \ne 0\}
\]
to denote a subset of training 
examples, only containing tuples 
$t$ whose corresponding variable in $a_t$ is
set to 1 according to $v$.

To describe the association between
$\mathcal{D}_{tr}$ and its transformed
version $\mathcal{D}_{tr}^f$, we
annotate each potential tuple in 
$\mathcal{D}_{tr}^f$ with an attribute $p : \mathbb{D} \rightarrow \mathbb{B}[A]$ containing its \emph{provenance polynomial}~\cite{green2007provenance} which is a logical formula with variables in $A$ and binary coefficients (e.g. $a_1 + a_2 \cdot a_3$) --- $p(t)$
is true only if tuple $t$
appears in $\mathcal{D}_{tr}^f$. For such polynomials, an \emph{addition} corresponds to a \emph{union} operator in the ML pipeline, and a \emph{multiplication} corresponds to a \emph{join} operator in the pipeline.
\autoref{fig:patterns}
illustrates some examples
of the association between
$a_t$ and $p(t)$.

Given a value assignment $v \in \mathcal{V}_{\mathcal{A}}$, we can define an evaluation function $\mathrm{eval}_v \ \phi$ that returns the \emph{evaluation} of a
provenance polynomial $\phi$
under the assignment $v$. Given a value assignment $v$, we can obtain the corresponding \emph{transformed dataset} by evaluating all provenance polynomials of its tuples, as such:
\begin{equation} \label{eq:candidate-dataset}
    \mathcal{D}_{tr}^f[v] := \{ t \in \mathcal{D}_{tr}^f \ | \ \mathrm{eval}_v (p(t)) \neq 0 \}
\end{equation}
Intuitively, $\mathcal{D}_{tr}^f[v]$
corresponds to the result of applying
the feature transformation $f$
over a subset of training examples
that only contains tuples $t$ whose 
corresponding variable $a_t$ is
set to 1.

Using this approach, given a feature processing pipeline $f$ and a value assignment $v$, we can obtain the
transformed training set $\mathcal{D}_{tr}^f[v] = f(\mathcal{D}_{tr}[v])$. 

\subsection{Additive Decision Diagrams (ADD's)} \label{sec:additive-decision-diagrams}

\inlinesection{Knowledge Compilation.}
Our approach of computing the Shapley value will rely upon being able to construct functions over Boolean inputs $\phi : \mathcal{V}_A \rightarrow \mathcal{E}$, where $\mathcal{E}$ is some finite \emph{value set}. We require an elementary algebra with $+$, $-$, $\cdot$ and $/$ operations to be defined for this value set. Furthermore, we require this value set to contain a \emph{zero element} $0$, as well as an \emph{invalid element} $\infty$ representing an undefined result (e.g. a result that is out of bounds).
We then need to count the number of value assignments $v \in \mathcal{V}_A$ such that $\phi(v) = e$, for some specific value $e \in \mathcal{E}$. This is referred to as the \emph{model counting} problem, which is \#\textsf{P} complete for arbitrary logical formulas~\cite{Valiant1979-jp,arora2009computational}. For example, if $A = \{a_1, a_2, a_3\}$, we can define $\mathcal{E} = \{0,1,2,3, \infty\}$ to be a value set and a function $\phi(v) := v(a_1) + v(a_2) + v(a_3)$ corresponding to the number of variables in $A$ that are set to $1$ under some value assignment $v \in \mathcal{V}_A$.

\emph{Knowledge compilation}~\cite{cadoli1997survey} has been developed as a well-known approach to tackle this model counting problem.
It was also successfully applied to various problems in data management~\cite{Jha2011}. 
One key result from this line of work is that, if we can construct certain polynomial-size data structures to represent our logical formula, then we can perform model counting in polynomial time. Among the most notable of such data structures are \emph{decision diagrams}, specifically binary decision diagrams~\cite{lee1959bdd,bryant1986bdd} and their various derivatives~\cite{bahar1997algebric,sanner2005affine, lai1996formal}.
For our purpose in this paper, we use the \emph{additive decision diagrams} (ADD), as detailed below.

\vspace{-0.5em}
\inlinesection{Additive Decision Diagrams (ADD).}
We define a simplified version of the \emph{affine algebraic decision diagrams}~\cite{sanner2005affine}.
An ADD is a directed acyclic graph defined over a set of nodes $\mathcal{N}$ and a special \emph{sink node} denoted $\boxdot$. Each node $n \in \mathcal{N}$ is associated with a variable $a(n)\in A$.
Each node has two outgoing edges, 
$c_L(n)$ and $c_H(n)$, that point to its \emph{low} and \emph{high} child nodes, respectively. 
For some value assignment $v$, the low/high edge corresponds to $v(a)=0$/$v(a)=1$. Furthermore, each low/high edge is associated with an increment $w_L$/$w_H$ that maps edges to elements of $\mathcal{E}$.

Note that each node $n \in \mathcal{N}$ represents the root of a subgraph and defines a Boolean function. Given some value assignment $v \in \mathcal{V}_{A}$ we can evaluate this function by constructing a path starting from $n$ and at each step moving towards the low or high child depending on whether the corresponding variable is assigned a $0$ or $1$. The value of the function is the result of adding all the edge increments together. 
\autoref{fig:example-add-structure} presents an example ADD with one path highlighted in red.
Formally, we can define the evaluation of the function defined by the node $n$ as follows:
\begin{equation} \label{eq:dd-eval-definition}
    \mathrm{eval}_v (n) := \begin{cases}
        0,                    & \mathrm{if} \ n = \boxdot, \\
        w_L(n) + \mathrm{eval}_v (c_L(n))     & \mathrm{if} \ v(x(n)) = 0, \\
        w_H(n) + \mathrm{eval}_v (c_H(n))     & \mathrm{if} \ v(x(n)) = 1. \\
    \end{cases}
\end{equation}
In our work we focus specifically on ADD's that are \emph{full} and \emph{ordered}. A diagram is full if every path from root to sink encounters every variable in $A$ exactly once. On top of that, an ADD is ordered when on each path from root to sink variables always appear in the same order. For this purpose, we define $\pi : A \rightarrow \{1,...,|A|\}$ to be a permutation of variables that assigns each variable $a \in A$ an index.

\begin{figure}
    \centering
    \begin{subfigure}[t]{.45\linewidth}
        \centering
        \caption{ADD}\label{fig:example-add-structure}
            \begin{tikzpicture}[align=center, node distance=5mm and 5mm, line width=0.8pt] 
                \tikzstyle{free} = [inner sep=5pt]
                \tikzstyle{var} = [draw, rectangle, inner sep=2pt, minimum width=4mm, minimum height=4mm]
                \tikzstyle{root} = [draw, rectangle, inner sep=2pt, minimum width=4mm, minimum height=4mm]
                
                \tikzstyle{path} = [draw=myred]
                
                
                \node[var] (x11-1) {};
                \draw (x11-1 -| -2,0) node[anchor=west] {$a_{1,1}$};
                
                
                \node[var] (x21-1) [below left=of x11-1] {};
                \draw[-stealth, dashed] (x11-1) to [bend right] (x21-1);
                
                \node[var] (x21-2) [below right=of x11-1] {};
                \draw[-stealth, path] (x11-1) to [bend left] (x21-2);
                
                \draw (x21-1 -| -2,0) node[anchor=west] {$a_{2,1}$};
                
                
                \node[var] (x22-1) [below=of x21-1] {};
                \draw[-stealth, dashed] (x21-1) to [bend right] (x22-1);
                \draw[-stealth] (x21-1) to [bend left] (x22-1);
                
                \node[var] (x22-2) [below=of x21-2] {};
                \draw[-stealth, dashed, path] (x21-2) to [bend right] (x22-2);
                \draw[-stealth] (x21-2) to [bend left] node[right] {$+1$} (x22-2);
                
                \draw (x22-1 -| -2,0) node[anchor=west] {$a_{2,2}$};
                
                
                \node[var] (x12-1) [below right=of x22-1] {};
                \draw[-stealth, dashed] (x22-1) to [bend right] (x12-1);
                \draw[-stealth] (x22-1) to [bend left] (x12-1);
                \draw[-stealth, dashed] (x22-2) to [bend right] (x12-1);
                \draw[-stealth, path] (x22-2) to [bend left] node[right] {$+1$} (x12-1);
                
                \draw (x12-1 -| -2,0) node[anchor=west] {$a_{1,2}$};
                
                
                \node[var] (x23-1) [below left=of x12-1] {};
                \draw[-stealth, dashed] (x12-1) to [bend right] (x23-1);
                
                \node[var] (x23-2) [below right=of x12-1] {};
                \draw[-stealth, path] (x12-1) to [bend left] (x23-2);
                
                \draw (x23-1 -| -2,0) node[anchor=west] {$a_{2,3}$};
                
                
                \node[root] (xs) [below right=of x23-1] {$\cdot$};
                \draw[-stealth, dashed] (x23-1) to [bend right] (xs);
                \draw[-stealth] (x23-1) to [bend left] (xs);
                \draw[-stealth, dashed] (x23-2) to [bend right] (xs);
                \draw[-stealth, path] (x23-2) to [bend left] node[right] {$+1$} (xs);
            \end{tikzpicture}
    \end{subfigure}
    \begin{subfigure}[t]{.45\linewidth}
        \centering
        \caption{Uniform ADD}\label{fig:example-uniform-add}
    \begin{tikzpicture}[align=center, node distance=5mm and 5mm, line width=0.8pt] 
        \tikzstyle{free} = [inner sep=5pt]
        \tikzstyle{var} = [draw, rectangle, inner sep=2pt, minimum width=4mm, minimum height=4mm]
        \tikzstyle{root} = [draw, rectangle, inner sep=2pt, minimum width=4mm, minimum height=4mm]
        
        
        \node[var] (x1) {};
        
        \draw (x1 -| -1,0) node[anchor=west] {$a_{1}$};
        
        
        \node[var] (x2) [below=of x1] {};
        \draw[-stealth, dashed] (x1) to [bend right] (x2);
        \draw[-stealth] (x1) to [bend left] node[right] {$+5$} (x2);
        
        \draw (x2 -| -1,0) node[anchor=west] {$a_{2}$};
        
        
        \node[var] (x3) [below=of x2] {};
        \draw[-stealth, dashed] (x2) to [bend right] (x3);
        \draw[-stealth] (x2) to [bend left] node[right] {$+5$} (x3);
        
        \draw (x3 -| -1,0) node[anchor=west] {$a_{3}$};
        
        
        \node[root] (xs) [below=of x3] {$\cdot$};
        \draw[-stealth, dashed] (x3) to [bend right] (xs);
        \draw[-stealth] (x3) to [bend left] node[right] {$+5$} (xs);
    \end{tikzpicture}
    \end{subfigure}
    \vspace{-1em}
    \caption{(a) An ordered and full ADD for computing $\phi(v) := v(a_{1,1}) \cdot \big( v(a_{2,1}) + v(a_{2,2}) \big) + v(a_{1,2}) \cdot v(a_{2,3}).$
    (b) A uniform ADD 
    for computing $\phi(v) := 5 \cdot (v(a_1) + v(a_2) + v(a_3))$.
    }
    \vspace{2em}
\end{figure}

\inlinesection{Model Counting.}
We define a model counting operator 
\begin{equation} \label{eq:dd-count-definition}
    \mathrm{count}_e (n) := \Big| \Big\{ v \in \mathcal{V}_{A [\leq \pi(a(n))]} \ | \ \mathrm{eval}_v (n) = e \Big\} \Big|,
\end{equation}
where $A [\leq \pi(a(n))]$ is the subset of variables in $A$ that include $x(n)$ and all variables that come before it in the permutation $\pi$.
For an ordered and full ADD, $\mathrm{count}_e (n)$ satisfies the following recursion:
\begin{equation} \label{eq:dd-count-recursion}
\begin{split}
\mathrm{count}_e (n) := \begin{cases}
    1,     & \mathrm{if} \ e=0 \ \mathrm{and} \ n = \boxdot, \\
    0,                    & \mathrm{if} \ e = \infty \ \mathrm{or} \ n = \boxdot, \\
    \mathrm{count}_{e - w_L(n)} (c_L(n)) + \mathrm{count}_{e - w_H(n)} (c_H(n)),     & \mathrm{otherwise}. \\
\end{cases}
\end{split}
\end{equation}
The above recursion can be implemented as a dynamic program with computational complexity $O(|\mathcal{N}| \cdot |\mathcal{E}|)$.

In special cases when the ADD is structured as a chain with one node per variable, all low increments equal to zero and all high increments equal to some constant $E \in \mathcal{E}$, we can perform model counting in constant time. We call this a \emph{uniform} ADD, and \autoref{fig:example-uniform-add} presents an example. 
The $\mathrm{count}_e$ operator for a uniform ADD can be defined as
\begin{equation} \label{eq:dd-count-uniform}
    \mathrm{count}_{e} (n) := \begin{cases}
        \binom{\pi(a(n))}{ e / E},     & \mathrm{if} \ e \ \mathrm{mod} \ E = 0, \\
        0     & \mathrm{otherwise}. \\
    \end{cases}
\end{equation}
Intuitively, if we observe the uniform ADD shown in \autoref{fig:example-uniform-add}, we see that the result of an evaluation must be a multiple of $5$. For example, to evaluate to $10$, the evaluation path must pass a \emph{high} edge exactly twice. Therefore, in a $3$-node ADD with root node $n_R$, the result of $\mathrm{count}_{10} (n_R)$ will be exactly $\binom{3}{2}$.

\inlinesection{Special Operations on ADD's.}
Given an ADD with node set $\mathcal{N}$, we define two operations that will become useful later on when constructing diagrams for our specific scenario:
\begin{enumerate}[leftmargin=*]
    \item \emph{Variable restriction}, denoted as $\mathcal{N}[a_i \gets V]$, which restricts the domain of variables $A$ by forcing the variable $a_i$ to be assigned the value $V$. This operation removes every node $n \in \mathcal{N}$ where $a(n) = a_i$ and rewires all incoming edges to point to the node's high or low child depending on whether $V=1$ or $V=0$.
    \item \emph{Diagram summation}, denoted as $\mathcal{N}_1 + \mathcal{N}_2$, where $\mathcal{N}_1$ and $\mathcal{N}_2$ are two ADD's over the same (ordered) set of variables $A$. 
    It starts from the respective root nodes $n_1$ and $n_2$ and produces a new node $n := n_1 + n_2$. We then apply the same operation to child nodes. Therefore, $c_L(n_1 + n_2) := c_L(n_1) + c_L(n_2)$ and $c_H(n_1 + n_2) := c_H(n_1) + c_H(n_2)$. Also, for the increments, we can define $w_L(n_1 + n_2) := w_L(n_1) + w_L(n_2)$ and $w_H(n_1 + n_2) := w_H(n_1) + w_H(n_2)$.
\end{enumerate}



\section{Data Importance over ML Pipelines}
\label{sec:problem}

We first recap the problem of computing data importance for ML pipelines in \autoref{sec:problem-importance}, formalise the problem in \autoref{sec:problem-formal}, and outline core technical efficiency and scalability issues afterwards.
We will describe the \sysname approach 
in \autoref{sec:approach}
and our theoretical framework in 
\autoref{sec:framework}.

\subsection{Data Importance for ML Pipelines}
\label{sec:problem-importance}

In real-world ML, one often encounters data-related problems in the input training set (e.g., wrong labels, outliers, biased samples) that lead to sub-optimal quality of the user's model. As illustrated in previous work~\cite{koh2017understanding, koh2019accuracy,ghorbani2019data,jia2019towards,Jia2021-zf, Jia2019-kz}, many data debugging and understanding problems hinge on the following  fundamental question:
\begin{quote}
\em Which data examples in the training set are most important for the model utility ?
\end{quote}

A common approach is to model this problem as computing the {\em Shapley value} of each data example as a measure of its importance to a model, which has been applied to a wide range use cases~\cite{ghorbani2019data,jia2019towards,Jia2021-zf, Jia2019-kz}.
However, this line of work focused solely on ML model training but ignored the \textit{data pre-processing pipeline} prior to model training, which includes steps such as feature extraction, data augmentation, 
etc. This significantly limits its applications to real-world scenarios, most of which consist
of a non-trivial data processing pipeline~\cite{psallidas2019data}.
In this paper, we take the first step in applying
Shapley values to debug end-to-end ML pipelines.

\subsection{Formal Problem Definition}
\label{sec:problem-formal}

We first formally define the core technical problem. 

\inlinesection{ML Pipelines.} Let $\mathcal{D}_{{e}}$ be an input training set for a machine learning task, potentially accompanied by additional relational side datasets $\mathcal{D}_{s_1},\dots,\mathcal{D}_{s_k}$. We assume the data to be in a \emph{star} database schema, where each tuple from a side dataset $\mathcal{D}_{s_i}$ (the ``dimension'' tables) can be joined with multiple tuples from $\mathcal{D}_{{e}}$ (the ``fact'' table).
Let $f$ be a feature extraction pipeline that transforms the relational inputs $\mathcal{D}_{tr} = \{ \mathcal{D}_{e}, \mathcal{D}_{s_1},\dots,\mathcal{D}_{s_k} \}$ into a set of training tuples $\{t_i = (x_i, y_i)\}_{i \in [m]}$ made up of feature and label pairs that the ML training algorithm $\mathcal{A}$ takes as input.
Note that $\mathcal{D}_{e}$ represents \texttt{train\_data} in our toy example in \autoref{lst:example}, $\mathcal{D}_{s}$ represents \texttt{side\_data}, while $f$ refers to the data preparation operations from lines 6-14, and the model $\mathcal{A}$ corresponds to the support vector machine \texttt{SVC} from line~16.

After feature extraction and training, we obtain an ML model:  
\[
\mathcal{A} \circ f (\mathcal{D}_{tr}).
\]
We can measure the \emph{quality} of this model in various ways, e.g., via validation accuracy and a fairness metric. Let $\mathcal{D}_{v}$ be a given set of relational validation data with the same schema as $\mathcal{D}_{e}$. Applying $f$ to $\mathcal{D}_{val} = \{ \mathcal{D}_{v}, \mathcal{D}_{s_1},\dots,\mathcal{D}_{s_k} \}$ produces a set of validation tuples $\{t_i = (\tilde{x}_i, 
\tilde{y}_i)\}_{i \in [p]}$ made up of feature and label pairs, on which we can derive predictions with our trained model $\mathcal{A} \circ f (\mathcal{D}_{tr})$. Based on this, we define a utility function $u$, which measures the performance of the predictions:
\[
u (\mathcal{A} \circ f (\mathcal{D}_{tr}), f(\mathcal{D}_{val})) \mapsto [0, 1].
\]
For readability, we use the following notation in cases where the model $\mathcal{A}$ and pipeline $f$ are clear from context:
\begin{equation} \label{eq:model-acc-definition}
  u (\mathcal{D}_{tr}, \mathcal{D}_{val}) := u (\mathcal{A} \circ f (\mathcal{D}_{tr}), f(\mathcal{D}_{val}))
\end{equation}

\inlinesection{Additive Utilities.}
In this paper, we focus on \textit{additive utilities} that cover the most important set of utility functions in practice (e.g., validation loss, validation accuracy, various fairness metrics, etc.). 
A utility function $u$ is \textit{additive} if there exists a \emph{tuple-wise} utility $u_T$ such that $u$ can be rewritten as
\begin{equation} \label{eq:additive-utility}
u (\mathcal{D}_{tr}, \mathcal{D}_{val})
= 
w \cdot
\sum_{t_{val} \in f(\mathcal{D}_{val})}
   u_T \bigg( \Big(\mathcal{A} \circ f (\mathcal{D}_{tr})\Big)(t_{val}), t_{val} \bigg).
\end{equation}
Here, $w$ is a scaling factor only relying on $\mathcal{D}_{val}$. The tuple-wise utility $u_{T} : (y_{pred}, t_{val}) \mapsto [0, 1]$ takes a validation tuple $t_{val} \in \mathcal{D}_{val}$ as well as a class label $y_{pred} \in \mathcal{Y}$ predicted by the model for 
$t_{val}$. 
It is easy to see that 
popular utilities such as validation accuracy are all additive, e.g., the accuracy utility is simply defined by plugging $u_T(y_{pred}, (x_{val}, y_{val})) := \mathbbm{1} \{y_{pred} = y_{val}\}$ into \autoref{eq:additive-utility}. 

\inlinesection{Example: False Negative Rate as an Additive Utility.} Apart from accuracy which represents a trivial example of an additive utility, we can show how some more complex utilities happen to be additive and can therefore be decomposed according to \autoref{eq:additive-utility}. As an example, we use \emph{false negative rate (FNR)} which can be defined as such:

\begin{equation}
  u(\mathcal{D}_{tr}, \mathcal{D}_{val}) :=
  \frac{\sum_{t_{val} \in f(\mathcal{D}_{val})} \mathbbm{1}\{ (\mathcal{A} \circ f(\mathcal{D}_{tr}))(t_{val}) = 0 \} \mathbbm{1} \{ y(t_{val}) = 1 \} }{|\{ t_{val} \in \mathcal{D}_{val} \ : \ y(t_{val}) = 1 \}|}.
\end{equation}
In the above expression we can see that the denominator only depends on $\mathcal{D}_{val}$ which means it can be interpreted as the scaling factor $w$. We can easily see that the expression in the numerator neatly fits the structure of \autoref{eq:additive-utility} as long as we we define $u_T$ as $u_T (y_{pred}, (x_{val}, y_{val})) := \mathbbm{1} \{ y_{pred} = 0 \} \mathbbm{1} \{ y_{val} = 1 \}$. Similarly, we are able to easily represent various other utilities, including: false positive rate, true positive rate (i.e. recall), true negative rate (i.e. specificity), etc. We describe 
an additional example in \autoref{sec:approach-approx}.

\inlinesection{Shapley Value.} The Shapley value, denoting the importance of an input tuple $t_i$ for the ML pipeline, is defined as
\[
\varphi_i = \frac{1}{|\mathcal{D}_{tr}|} \sum_{S \subseteq \mathcal{D}_{tr} \backslash \{t_i\}} {n - 1 \choose |S|}^{-1} \left( 
    u (S \cup \{t_i\}, \mathcal{D}_{val}) -
    u (S, \mathcal{D}_{val}) 
\right). 
\]
Intuitively, the {\em importance} of $t_i$ over a subset $S \subseteq \mathcal{D}_{tr} \backslash \{t_i\}$ is measured as the difference of the utility $u \circ \mathcal{A} \circ f (S \cup \{t_i\})$ \textit{with} $t_i$ to the utility $u \circ \mathcal{A} \circ f (S)$ \textit{without} $t_i$. 
The Shapley value takes the average of all such possible subsets $S \subseteq \mathcal{D}_{tr} \backslash \{t_i\}$, which allows it to have a range of desired properties that significantly benefit data debugging tasks,
often leading to more effective 
data debugging mechanisms 
compared to other leave-one-out methods.

\subsection{Prior Work and Challenges}

All previous research focuses on the scenario in which there is no ML pipeline $f$ (i.e., one directly works with the vectorised training examples $\{t_i\}$). Even in this case, computing Shapley values is tremendously difficult since its complexity for general ML model is \texttt{\#P}-hard. To accommodate this computational challenge, previous work falls into two categories:

\begin{enumerate}[noitemsep,topsep=0pt,parsep=0pt,partopsep=0pt,topsep=0pt, leftmargin=*]
\item {\em Monte Carlo Shapley}: One natural line of efforts tries to estimate Shapley value with Markov Chain Monte Carlo (MCMC) approaches. This includes vanilla Monte Carlo sampling, group testing~\cite{jia2019towards,Zhou2014-ca}, and truncated Monte Carlo sampling~\cite{ghorbani2019data}. 
\item {\em KNN Shapley}: Even the most efficient Monte Carlo Shapley methods need to train multiple ML models (i.e., evaluate $\mathcal{A}$ multiple times) and thus exhibit long running time for datasets of modest sizes. Another line of research proposes to approximate the model $\mathcal{A}$ using a simpler proxy model. Specifically, previous work shows that Shapley values can be computed over K-nearest neighbors (KNN) classifiers in PTIME~\cite{Jia2019-kz} and using KNN classifiers as a proxy is very effective in various real-world scenarios~\cite{Jia2021-zf}.
\end{enumerate}
In this work, we face an even harder problem given the presence of an ML pipeline $f$ in addition to the model $\mathcal{A}$. Nevertheless, as a baseline, it is important to realize that all Monte Carlo Shapley approaches~\cite{ghorbani2019data,jia2019towards} can be directly extended to support our scenario. This is because most, if not all, Monte Carlo Shapley approaches operate on {\em black-box functions} and thus, can be used directly to handle an end-to-end pipeline $\mathcal{A} \circ f$.

\inlinesection{Core Technical Problem.} Despite the existence of such a Monte Carlo baseline, there remain tremendous challenges with respect to scalability and speed --- in our experiments in \autoref{sec:evaluation}, it is not uncommon for such a Monte Carlo baseline to take a full hour to compute Shapley values even on a small dataset with only 1,000 examples. To bring data debugging and understanding into practice, we are in dire need for a more efficient and scalable alternative. Without an ML pipeline, using a KNN proxy model has been shown to be orders of magnitude faster than its Monte Carlo counterpart~\cite{Jia2019-kz} while being equally, if not more, effective on many applications~\cite{Jia2021-zf}. 

As a consequence, we focus on the following question: {\em Can we similarly use a KNN classifier as a proxy when dealing with end-to-end ML pipelines}? Today's KNN Shapley algorithm heavily relies on the structure of the KNN classifier. The presence of an ML pipeline will drastically change the underlying algorithm and time complexity --- in fact, for many ML pipelines, computation of Shapley value is \texttt{\#P}-hard even for KNN classifiers. 

\section{The DataScope Approach}
\label{sec:approach}

We summarize our main theoretical contribution in \autoref{sec:approach-overview}, followed by the characteristics of ML pipelines to which these results are applicable (\autoref{sec:approach-characteristics}). 
We further discuss how we can approximate many real-world pipelines as \textit{canonical pipelines} to make them compatible with our algorithmic approach (\autoref{sec:approach-approx}). 
We defer the details of our (non-trivial) theoretical results to \autoref{sec:framework}.

\subsection{Overview}
\label{sec:approach-overview}

The key technical contribution of this paper is a novel algorithmic framework that covers a large sub-family of ML pipelines whose KNN Shapley can be computed in \textsf{PTIME}. We call these pipelines \textit{canonical pipelines}.

\begin{theorem} \label{thm:shapley-using-counting-oracle}
Let $\mathcal{D}_{tr}$ be a set of $n$ training tuples, $f$ be an ML pipeline over $\mathcal{D}_{tr}$, and  $\mathcal{A}_{knn}$ be a $K$-nearest neighbor classifier. If $f$ can be expressed as an Additive Decision Diagram (ADD) with polynomial size, then computing
\begin{small}
\[
\varphi_i = \frac{1}{n} \sum_{S \subseteq \mathcal{D}_{tr} \backslash \{t_i\}} {n - 1 \choose |S|}^{-1} \left( 
    u \circ \mathcal{A}_{knn} \circ f (S \cup \{t_i\}) -
    u \circ \mathcal{A}_{knn} \circ f (S) 
\right)
\]
\end{small}
is in \textsf{PTIME} for all additive utilities $u$.
\label{theorem:main}
\end{theorem}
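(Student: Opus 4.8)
The plan is to combine three ingredients: linearity of the Shapley value in the utility, a reformulation of the Shapley sum as a family of subset-counting problems indexed by the ``boundary'' nearest neighbor, and the polynomial-time model counting afforded by the ADD of $f$.

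First I would exploit additivity. Since $u$ is additive (\autoref{eq:additive-utility}) and the Shapley value is linear in the utility, I would write $\varphi_i = w \sum_{t_{val} \in f(\mathcal{D}_{val})} \varphi_i^{(t_{val})}$, where $\varphi_i^{(t_{val})}$ is the Shapley value computed with the single tuple-wise utility $u_T(\cdot, t_{val})$. It therefore suffices to compute each $\varphi_i^{(t_{val})}$ in \textsf{PTIME} and sum over the (polynomially many) validation tuples. Fix such a $t_{val}$ and sort the potential transformed tuples $\mathcal{D}_{tr}^f = \{\tilde{t}_1, \dots, \tilde{t}_M\}$ (with $M$ polynomial, from the polynomial-size ADD assumption) by decreasing similarity to $t_{val}$. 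For any value assignment $v$, the KNN prediction on $t_{val}$ is a function only of the labels of the top-$K$ \emph{present} transformed tuples, i.e. the $K$ tuples of smallest rank whose provenance polynomial satisfies $\mathrm{eval}_v(p(\tilde{t}_\beta)) \ne 0$.

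Next I would turn the Shapley sum into counting. Grouping subsets $S \subseteq \mathcal{D}_{tr} \setminus \{t_i\}$ by their cardinality $s = |S|$ and expanding the marginal contribution, the quantity $\sum_{|S|=s}\big(u_T(\mathrm{pred}(f(S\cup\{t_i\}))) - u_T(\mathrm{pred}(f(S)))\big)$ reduces to $\sum_{o} u_T(o,t_{val})\,\big(N_s^{+}(o) - N_s^{-}(o)\big)$, where $N_s^{+}(o)$ (resp. $N_s^{-}(o)$) counts assignments with $a_{t_i}$ fixed to $1$ (resp. $0$), exactly $s$ other input variables set to $1$, and KNN prediction equal to $o$. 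The prediction $o$ is determined by a label tally over the top-$K$, so I would further decompose each count by the identity of the $K$-th nearest present transformed tuple: for each candidate boundary $\tilde{t}_\alpha$, I count assignments in which $\tilde{t}_\alpha$ is present, exactly $K-1$ of $\tilde{t}_1,\dots,\tilde{t}_{\alpha-1}$ are present, and these top-$K$ tuples carry a prescribed label tally $\tau$. The prediction is then a fixed function of $\tau$ and the label of $\tilde{t}_\alpha$.

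Then I would realize this counting with the ADD. The assumption gives a polynomial-size ordered, full ADD for $f$; I would augment it so that $\mathrm{eval}_v$ at the root returns the triple consisting of the number of present transformed tuples among the prefix above $\alpha$, their label-$1$ tally, and the number of input variables set to $1$ (the last obtained by diagram-summing with a uniform ADD that increments by one on each high edge, as in \autoref{fig:example-uniform-add}). Fixing $a_{t_i}$ and forcing $\tilde{t}_\alpha$ present is done with the variable-restriction operation $\mathcal{N}[a \gets V]$. Encoding the triple in a single mixed-radix value keeps the value set $\mathcal{E}$ of size $O(n^2 K)$, so one application of the model-counting recursion (\autoref{eq:dd-count-recursion}) runs in time $O(|\mathcal{N}|\cdot|\mathcal{E}|)$ and returns, for every $(s,\tau)$ with present-count $=K-1$, the number of consistent assignments. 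Summing these counts against $u_T$ and the Shapley weights $\frac{1}{n}\binom{n-1}{s}^{-1}$, over all boundaries $\alpha$ and all validation tuples, yields every $\varphi_i$ in polynomial time.

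I expect the main obstacle to be the counting step in the presence of nontrivial provenance: a single input tuple can induce many transformed tuples, and a transformed tuple's presence can depend jointly on several input variables, so the joint law of (boundary position, label tally, subset size) cannot be read off tuple-by-tuple and must instead be threaded through the ADD while preserving the ordered/full structure and a polynomial-size value set. Secondary difficulties are the boundary cases where fewer than $K$ transformed tuples are present, ties in similarity to $t_{val}$, and the tie-breaking rule in the majority vote, all of which must be folded consistently into both the prediction function and the telescoping of marginal contributions.
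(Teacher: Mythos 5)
Your proposal is correct and follows essentially the same route as the paper's proof: decompose the additive utility per validation tuple, rewrite the Shapley sum as counts of value assignments indexed by support size, the $K$-th-nearest \emph{present} (``boundary'') tuple, and the top-$K$ label tally, and compute those counts in \textsf{PTIME} by model counting on a polynomial-size ADD augmented with tally and support-size increments --- exactly the paper's counting-oracle reduction (\autoref{eq:counting-oracle}, \autoref{eq:shap-main}) combined with \autoref{thm:decision-diagram}. The only cosmetic difference is that you split the marginal contribution by linearity into two separately counted families $N_s^{+}(o)$ and $N_s^{-}(o)$ (each with its own boundary tuple), whereas the paper keeps a joint oracle $\omega_{t,t'}(\alpha,\gamma,\gamma')$ over the pair of boundary tuples and tallies arising from restricting $a_i$ to $1$ and to $0$ and summing the two restricted ADDs; both bookkeeping choices are valid and yield the same polynomial bound.
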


\begin{figure}
\centering
\includegraphics[width=0.5\textwidth]{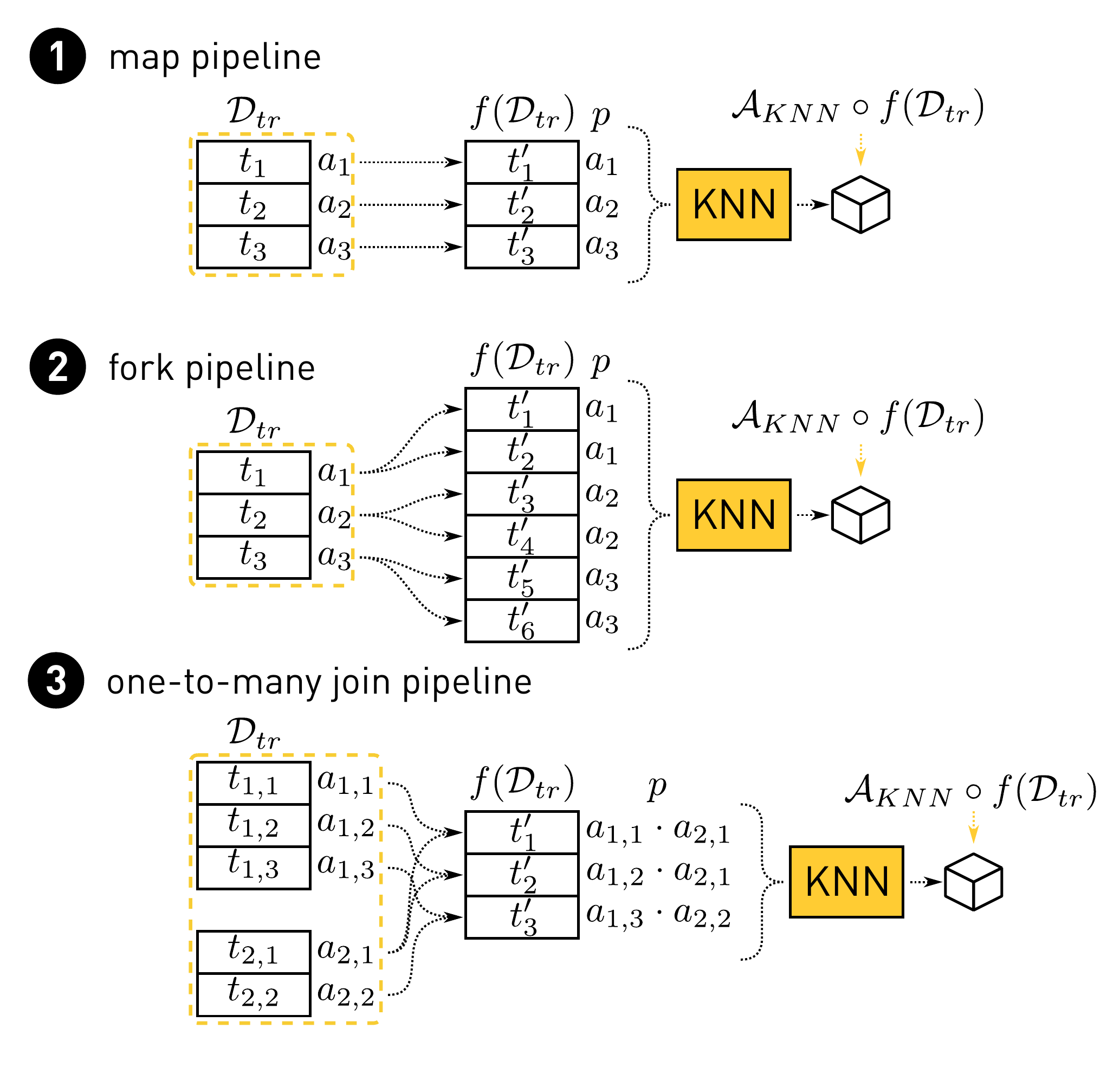}
\vspace{-2em}
\caption{Three types of canonical pipelines over which 
Shapley values can be computed in PTIME.}
\vspace{2em}
\label{fig:patterns}
\end{figure}

We leave the details about this 
Theorem to \autoref{sec:framework}.
This theorem provides a sufficient condition under which we can compute Shapley values for KNN classifiers over ML pipelines. We can instantiate this general framework with concrete types of ML pipelines.

\subsection{Canonical ML Pipelines}
\label{sec:approach-characteristics}

As a prerequisite for an efficient Shapley-value computation over pipelines, we need to understand how the removal of an input tuple $t_i$ from $\mathcal{D}_{tr}$ impacts the featurised training data $f(\mathcal{D}_{tr})$ produced by the pipeline. In particular, we need to be able to reason about the difference between $f(\mathcal{D}_{tr})$ and $f(\mathcal{D}_{tr} \setminus \{t_i\})$, which requires us to understand the \textit{data provenance}~\cite{green2007provenance,cheney2009provenance} of the pipeline $f$. In the following, we summarise three common types of pipelines (illustrated in \autoref{fig:patterns}), to which we refer as {\em canonical pipelines}. We will formally prove that Shapley values over these pipelines can be computed in PTIME in \autoref{sec:framework}. 

\inlinesection{Map pipelines} are a family of pipelines that satisfy the condition in \autoref{theorem:main}, in which the feature extraction $f$ has the following property: each input training tuple $t_i$ is transformed into a unique output training example $t_i$ with a tuple-at-a-time transformation function $h_f$: $t_i \mapsto t_i' = h_f(t_i)$. Map pipelines are the standard case for supervised learning, where each tuple of the input data is encoded as a feature vector for the model's training data. The provenance polynomial for the output $t_i'$ is $p(t_i) = a_i$ in this case, where $a_i$ denotes the presence of $t_i$ in the input to the pipeline~$f$.

\inlinesection{Fork pipelines} are a superset of Map pipelines, which requires that for each output example $t_j$, there exists a \textit{unique} 
input tuple $t_i$, such that $t_j$ is generated by applying a tuple-at-a-time transformation function $h_f$ over $t_i$: $t_j = h_f(t_i)$. As illustrated in \autoref{fig:patterns}(b), the output examples $t_1$ and $t_2$ are both generated from the input example $t_1$. Fork pipelines also satisfy the condition in \autoref{theorem:main}. Fork pipelines typically originate from data augmentation operations for supervised learning, where multiple variants of a single tuple of the input data are generated (e.g., various rotations of an image in computer vision), and each copy is encoded as a feature vector for the model's training data. The provenance polynomial for an output $t_j$ is again $p(t_j) = a_i$ in this case, where $a_i$ denotes the presence of $t_i$ in the input to the pipeline~$f$.

\inlinesection{One-to-Many Join pipelines} are a superset of Fork pipelines, which rely on the star-schema structure of the relational inputs. Given the relational inputs $\mathcal{D}_e$ (``fact table'') and $\mathcal{D}_s$  (``dimension table''), we require that, for each output example $t_k$, there exist \textit{unique} input tuples $t_i \in \mathcal{D}_e$ and $t_j \in \mathcal{D}_s$ such that $t_k$ is generated by applying a tuple-at-a-time transformation function $h_f$ over the join pair $(t_i, t_j)$: $t_k = h_f(t_i, t_j)$. One-to-Many Join pipelines also satisfy the condition in \autoref{theorem:main}. Such pipelines occur when we have multiple input datasets in supervised learning, with the ``fact'' relation holding data for the entities to classify (e.g., emails in a spam detection scenario), and the ``dimension'' relations holding additional side data for these entities, which might result in additional helpful features. 
The provenance polynomial for an output $t_k$ is $p(t_k) = a_i \cdot a_j$ in this case, where $a_i$ and $a_j$ denote the presence of $t_i$ and $t_j$ in the input to the pipeline~$f$. Note that the polynomials states that both $t_i$ and $t_j$ must be present in the input at the same time (otherwise no join pair can be formed from them).

\inlinesection{Discussion.} We note that this classification of pipelines assumes that the relational operations applied by the pipeline are restricted to the positive relational algebra (SPJU: Select, Project, Join, Union), where the pipeline applies no aggregations, and joins the input data according to the star schema. In our experience, this covers a lot of real-world use cases in modern ML infrastructures, where the ML pipeline consumes pre-aggregated input data from so-called ``feature stores,'' which is naturally modeled in a star schema.  Furthermore, pipelines in the real-world operate on relational datasets using dataframe semantics~\cite{petersohn13towards}, where unions and projections do not deduplicate their results, which (together with the absence of aggregations), has the effect that there are no additions present in provenance polynomials of the outputs of our discussed pipeline types. This pipeline model has also been proven helpful for interactive data distribution debugging~\cite{grafberger2022data,grafberger2021mlinspect}. 

\begin{figure}[t!]
\centering
\includegraphics[width=0.5\textwidth]{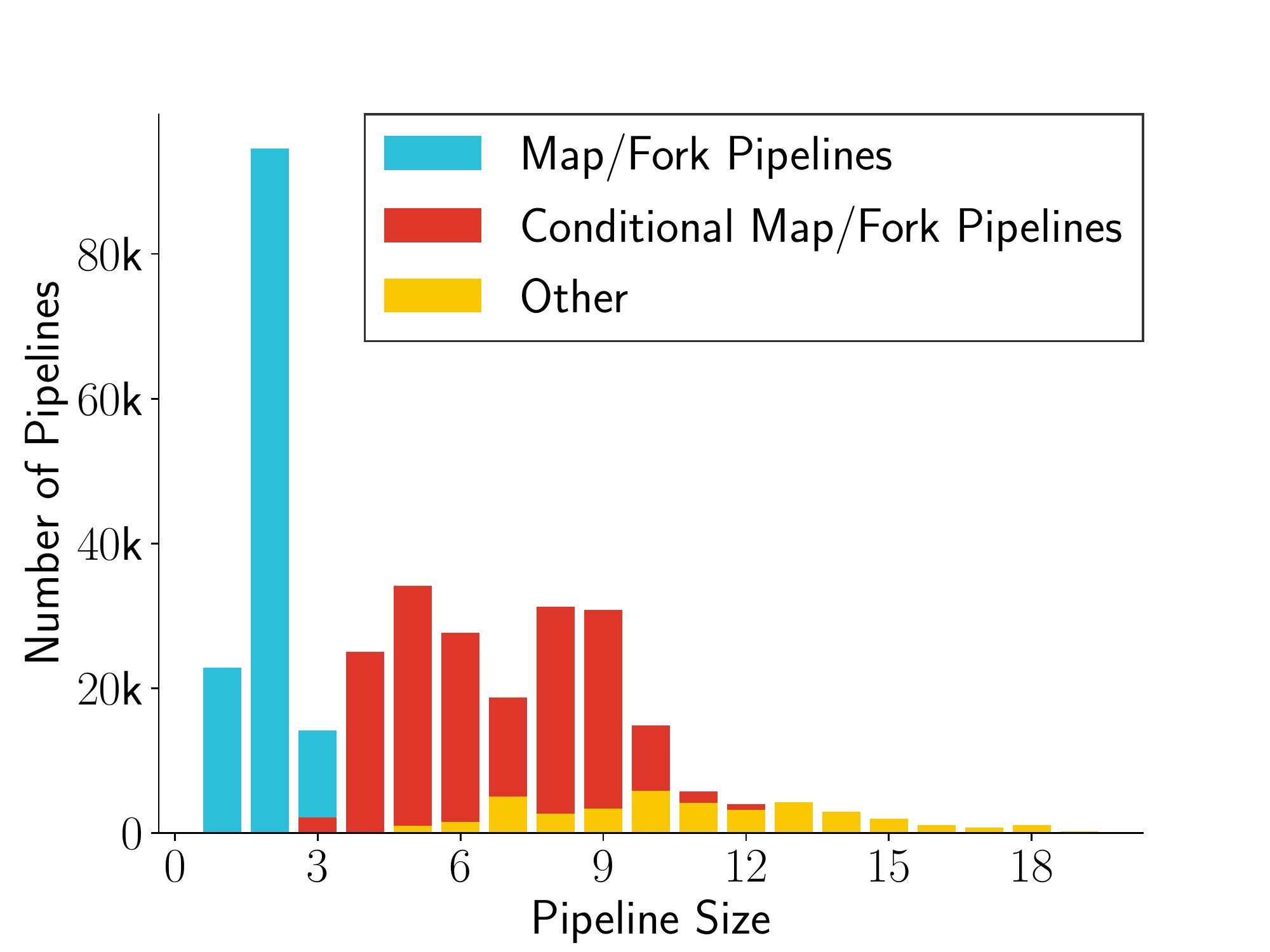}
\vspace{-1em}
\caption{A majority of real-world ML pipelines~\cite{psallidas2019data}
either already exhibit a canonical map-fork pipeline pattern, or are easily convertible to it using our approximation scheme.}
\vspace{2em}
\label{fig:coverage}
\end{figure}

\subsection{Approximating Real-World ML Pipelines}
\label{sec:approach-approx}

In practice, an ML pipeline $f$ and its corresponding ML model $\mathcal{A}$ will often not directly give us a canonical pipeline whose Shapley value can be computed in \textsf{PTIME}. The reasons for this are twofold: $(i)$~there might be no technique known to compute the Shapley value in \textsf{PTIME} for the given model; $(ii)$~the estimator/transformer operations for feature encoding in the pipeline require global aggregations (e.g., to compute the mean of an attribute for normalising it). In such cases, each output depends on the whole input, and the pipeline does not fit into one of the canonical pipeline types that we discussed earlier.

As a consequence, we \textit{approximate} an ML pipeline into a canonical pipeline in two ways.

\inlinesection{Approximating~$\mathcal{A}$.} The first approximation follows various previous efforts summarised as KNN~Shapley before, and has been shown to work well, if not better, in a diverse range of scenarios. In this step, we approximate the pipeline's ML model with a KNN classifier
\[
\mathcal{A} \mapsto \mathcal{A}_{knn}.
\]

\inlinesection{Approximating the estimator/transformer steps in $f$.} In terms of the pipeline operations, we have to deal with the global aggregations applied by the estimators for feature encoding. Common feature encoding and dimensionality reduction techniques often
base on a \texttt{reduce-map} pattern over the data:
\[
op(\mathcal{D}) = \texttt{map}(\texttt{reduce}(\mathcal{D}), \mathcal{D}).
\]

During the \texttt{reduce} step, the estimator computes some global statistics over the dataset --- e.g., the estimator for \texttt{MinMaxScaling} computes the minimum and maximum of an attribute, and the \texttt{TFIDF} estimator computes the inverse document frequencies of terms. The estimator then generates a transformer, which applies the \texttt{map} step to the data, transforming the input dataset based on the computed global statistics, e.g., to normalise each data example based on the computed minimum and maximum values in case of \texttt{MinMaxScaling} . 

The global aggregation conducted by the \texttt{reduce} step is often the key reason that we cannot compute Shapley value in \textsf{PTIME} over a given pipeline --- such a global 
aggregation requires us to enumerate
all possible subsets of
data examples, each of which 
corresponds to a potentially different
global statistic. Fortunately, we also observe, and will validate
empirically later, that the results of these global aggregations are relatively stable given different subsets of the data, especially in cases where what we want to compute is the \textit{difference} when a single example is added or removed. The approximation that we conduct is to reuse the result of the \texttt{reduce} step computed over the whole dataset $\mathcal{D}_{tr}$ for a subset $\mathcal{D} \subset \mathcal{D}_{tr}$:
\begin{equation} \label{eq:conditional-op-conversion}
    op(\mathcal{D}) = \texttt{map}(\texttt{reduce}(\mathcal{D}), \mathcal{D}) \mapsto 
    op^*(\mathcal{D}) = \texttt{map}(\texttt{reduce}(\mathcal{D}_{tr}), \mathcal{D}).
\end{equation}

In the case of scikit-learn, this means that we reuse the transformer generated by fitting the estimator on the whole dataset. Once all estimators $op$ in an input pipeline are transformed into their approximate variant $op^*$, a large majority of realistic pipelines become canonical pipelines of a \texttt{map} or \texttt{fork} pattern.

\inlinesection{Statistics of Real-world Pipelines.} A natural question is how common these families of pipelines are in practice. \autoref{fig:coverage} illustrates a case study that we conducted over 500K real-world pipelines provided by Microsoft~\cite{psallidas2019data}. We divide pipelines into three categories: (1) "pure" map/fork pipelines, based on our definition of canonical pipelines; (2) "conditional" map/fork pipelines, which are comprised of a reduce operator that can be effectively approximated using the scheme we just described; and (3) other pipelines, which contain complex operators that cannot be approximated. We see that a vast majority of pipelines we encountered in our case study fall into the first two categories that we can effectively approximate using our canonical pipelines framework.

\paragraph*{\underline{Discussion: What if These Two Approximations Fail?}}
Computing Shapley values for a generic pipeline
$(\mathcal{A}, f)$ is \#\textsf{P}-hard, and by
approximating it into
$(\mathcal{A}_{knn}, f^*)$, we obtain an 
efficient \textsf{PTIME} solution. This drastic improvement
on complexity also means that \textit{we should
expect that there exist scenarios under which
this $(\mathcal{A}, f) \mapsto (\mathcal{A}_{knn}, f^*)$
approximation is not a good approximation.}

{\em How often would this failure case happen in
practice?} 
When the training set is large, 
as illustrated in many previous studies focusing on the KNN proxy, we are confident that the $\mathcal{A} \mapsto \mathcal{A}_{knn}$ approximation should work well in many practical scenarios except those 
relying on some very strong global 
properties that KNN does not model (e.g., global population balance).
As for the $f \mapsto f^*$ approximation, we expect the failure cases to be rare, especially when the training set is large. 
In our experiments, we have empirically verified these two beliefs, which were also backed up by previous empirical results on KNN Shapley~\cite{Jia2021-zf}.

{\em What should we do when such a failure case
happens?} Nevertheless, we should expect 
such a failure case can happen in practice.
In such situations, we will resort to the Monte Carlo baseline,
which will be orders of magnitude slower but
should provide a backup alternative.
It is an interesting future direction to further 
explore the limitations of both approximations and 
develop more efficient Monte Carlo methods.

\paragraph*{\underline{Approximating Additive Utilities: Equalized Odds Difference}} 
We show how slightly more complex utilities can also be represented as additive, with a little approximation, similar to the one described above. We will demonstrate this using the ``equalized odds difference'' utility, a measure of (un)fairness commonly used in research~\cite{moritz2016equality,barocas-hardt-narayanan} that we also use in our experiments. It can be defined as such:
\begin{equation} \label{eq:eqodds-diff}
  u (\mathcal{D}_{tr}, \mathcal{D}_{val}) := \max\{ TPR_{\Delta}(\mathcal{D}_{tr}, \mathcal{D}_{val}), FPR_{\Delta}(\mathcal{D}_{tr}, \mathcal{D}_{val}) \}.
\end{equation}
Here, $TPR_{\Delta}$ and $FPR_{\Delta}$ are \emph{true positive rate difference} and \emph{false positive rate difference} respectively. We assume that each tuple $t_{tr} \in f(\mathcal{D}_{tr})$ and $t_{val} \in f(\mathcal{D}_{val})$ have some sensitive feature $g$ (e.g. ethnicity) with values taken from some finite set 
$\{G_1, G_2, ... \}$, that allows us to partition the dataset into \emph{sensitive groups}. We can define $TPR_{\Delta}$ and $FPR_{\Delta}$ respectively as
\begin{equation} \label{eq:tpr-diff}
\begin{split}
  TPR_{\Delta}(\mathcal{D}_{tr}, \mathcal{D}_{val}) &:= 
  \max_{G_i \in G} TPR_{G_i}(\mathcal{D}_{tr}, \mathcal{D}_{val})
  -
  \min_{G_j \in G} TPR_{G_j}(\mathcal{D}_{tr}, \mathcal{D}_{val}), \ \textrm{and} \\
  FPR_{\Delta}(\mathcal{D}_{tr}, \mathcal{D}_{val}) &:= 
  \max_{G_i \in G} FPR_{G_i}(\mathcal{D}_{tr}, \mathcal{D}_{val})
  -
  \min_{G_j \in G} FPR_{G_j}(\mathcal{D}_{tr}, \mathcal{D}_{val}).
\end{split}
\end{equation}
For some sensitive group $G_i$, we define $TPR_{G_i}$ and $FPR_{G_i}$ respectively as:
\begin{equation*}
\begin{split}
  TPR_{G_i}(\mathcal{D}_{tr}, \mathcal{D}_{val}) &:= \frac{\sum_{t_{val} \in f(\mathcal{D}_{val})} \mathbbm{1}\{ (\mathcal{A} \circ f (\mathcal{D}_{tr}))(t_{val}) = 1 \} \mathbbm{1} \{ y(t_{val}) = 1 \}  \mathbbm{1} \{ g(t_{val}) = G_i \} }{|\{ t_{val} \in \mathcal{D}_{val} \ : \ y(t_{val}) = 1 \wedge g(t_{val}) = G_i \}|}, \ \textrm{and} \\
  FPR_{G_i}(\mathcal{D}_{tr}, \mathcal{D}_{val}) &:= \frac{\sum_{t_{val} \in f(\mathcal{D}_{val})} \mathbbm{1}\{ (\mathcal{A} \circ f (\mathcal{D}_{tr}))(t_{val}) = 1 \} \mathbbm{1} \{ y(t_{val}) = 0 \}  \mathbbm{1} \{ g(t_{val}) = G_i \} }{|\{ t_{val} \in \mathcal{D}_{val} \ : \ y(t_{val}) = 0 \wedge g(t_{val}) = G_i \}|}
\end{split}
\end{equation*}

For a given training dataset $\mathcal{D}_{tr}$, we can determine \autoref{eq:eqodds-diff} whether $TPR_{\Delta}$ or $FPR_{\Delta}$ is going to be the dominant metric. Similarly, given that choice, we can determine a pair of sensitive groups $(G_{max}, G_{min})$ that would end up be selected as minimal and maximal in \autoref{eq:tpr-diff}. Similarly to the conversion shown in \autoref{eq:conditional-op-conversion}, we can treat these two steps as a \texttt{reduce} operation over the whole dataset. Then, if we assume that this intermediate result will remain stable over subsets of $\mathcal{D}_{tr}$, we can approximatly represent the equalized odds difference utility as an additive utility.

As an example, let us assume that we have determined that $TPR_\Delta$ dominates over $FPR_\Delta$, and similarly that the pair of sensitive groups $(G_{max}, G_{min})$ will end up being selected in \autoref{eq:tpr-diff}. Then, our tuple-wise utility $u_T$ and the scaling factor $w$ become
\begin{align*}
  u_T(y_{pred}, t_{val}) &:= TPR_{G_{max},T}(y_{pred}, t_{val}) - TPR_{G_{min},T}(y_{pred}, t_{val}), \\
  w &:= 1/|\{ t_{val} \in \mathcal{D}_{val} \ : \ y(t_{val}) = 1 \wedge g(t_{val}) = G_i \}|,
\end{align*}
where 
\begin{equation*}
  TPR_{G_i,T}(y_{pred}, t_{val}) := \mathbbm{1}\{ y_{pred} = 1 \} \mathbbm{1} \{ y(t_{val}) = 1 \}  \mathbbm{1} \{ g(t_{val}) = G_i \}.
\end{equation*}
A similar approach can be taken to define $u_T$ and $w$ for the case when $FPR_\Delta$ dominates over $TPR_\Delta$.

\section{Algorithm Framework: KNN Shapley Over Data Provenance}
\label{sec:framework}

We now provide details for our 
theoretical results that are 
mentioned in \autoref{sec:approach}.
We present an algorithmic framework that efficiently computes the Shapley value over the KNN accuracy utility (defined in \autoref{eq:model-acc-definition} when $\mathcal{A}$ is the KNN model).
Our framework is based on the following key ideas: (1) the computation can be reduced to computing a set of \emph{counting oracles}; (2) we can develop PTIME algorithms to compute such counting oracles for the canonical ML pipelines, by translating their \emph{provenance polynomials} into an Additive Decision Diagram (ADD).

\subsection{Counting Oracles}


We now unpack \autoref{theorem:main}.
Using the notations of data provenance introduced in \autoref{sec:prelim}, 
we can rewrite the definition of the Shapley value as follows, computing
the value of tuple $t_i$, with 
the corresponding varible $a_i \in A$:
\begin{equation} \label{eq:shap-pipeline}
\varphi_i = \frac{1}{|A|} \sum_{v \in \mathcal{V}_{A \setminus \{ a_i \}}}
\frac{u(\mathcal{D}_{tr}^f[v[a_i \gets 1]]) - u(\mathcal{D}_{tr}^f[v[a_i \gets 0]])}
{\binom{|A|-1}{|\mathrm{supp}(v)|}}.
\end{equation}
Here, $v[a_i \gets X]$ represents the same value assignment as $v$, except that we enforce $v(a_i) = X$ for some constant $X$. Moreover, the support $\mathrm{supp}(v)$ of a value assignment $v$ is the subset of variables in $A$ that are assigned value $1$ according to $v$.




\inlinesection{Nearest Neighbor Utility.}
When the downstream classifier is 
a K-nearest neighbor classifier, we have 
additional structure of 
the utility function $u(-)$ that we 
can take advantage of. 
Given a data example $t_{val}$ from the validation dataset, the hyperparameter $K$ controlling the size of the neighborhood and the set of class labels $\mathcal{Y}$, we formally define the KNN utility $u_{t_{val}, K, \mathcal{Y}}$ as follows.
Given the transformed training set
$\mathcal{D}_{tr}^f$,
let $\sigma$ be a scoring
function that computes, for each tuple
$t \in \mathcal{D}_{tr}^f$, its 
similarity with 
the validation example $t_{val}$:
$\sigma(t, t_{val})$. In the following,
we often write $\sigma(t)$ whenever
$t_{val}$ is clear from the context.
We also omit $\sigma$ when the scoring
function is clear from the context. 
Given this scoring function $\sigma$, 
the KNN utility can be defined as follows: 
\begin{equation}
u_{t_{val}, K, \mathcal{Y}} (\mathcal{D}) := 
u_T
\left( \mathrm{argmax}_{y \in \mathcal{Y}} \Big(
        \mathrm{tally}_{y, \mathrm{top}_K \mathcal{D}_{tr}^f} (\mathcal{D}_{tr}^f)
    \Big),
t_{val}
\right)
\label{eq:knn-acc-definition}
\end{equation}
where $\mathrm{top}_K \mathcal{D}_{tr}^f$ returns the tuple $t$ which ranks at the $K$-th spot 
when all tuples in $\mathcal{D}_{tr}^f$ 
are ordered by decreasing similarity $\sigma$. Given this tuple $t$ and a class label $y \in \mathcal{Y}$, the $\mathrm{tally}_{y, t}$ operator returns the number of tuples with similarity score greater or equal to $t$ that have label $y$. We assume a standard majority voting scheme where the predicted label is selected to be the one with the greatest tally ($\arg\max_y$). The accuracy is then computed by simply comparing the predicted label with the label of the validation tuple $t_{val}$.

Plugging the KNN accuracy utility into \autoref{eq:shap-pipeline}, we can augment the expression for computing $\varphi_i$ as
\begin{equation}
    \begin{split}
        \varphi_i = \frac{1}{|A|}
        \sum_{v \in \mathcal{V}_{A \setminus \{ a_i \}}}
        \sum_{\alpha=1}^{|A|}
            & \mathbbm{1} \{ \alpha = |\mathrm{supp}(v)| \}
            \binom{|A|-1}{\alpha}^{-1} \\
        \cdot \sum_{t, t' \in \mathcal{D}_{tr}^f}
            & \mathbbm{1} \{ t = \mathrm{top}_K \mathcal{D}_{tr}^f[v[a_i \gets 0]] \} \\
            \cdot & \mathbbm{1} \{ t' = \mathrm{top}_K \mathcal{D}_{tr}^f[v[a_i \gets 1]] \} \\
        \cdot \sum_{\gamma, \gamma' \in \Gamma}
            & \mathbbm{1} \{ \gamma = \mathrm{tally}_{t} \mathcal{D}_{tr}^f[v[a_i \gets 0]] \} \\
            \cdot & \mathbbm{1} \{ \gamma' = \mathrm{tally}_{t'} \mathcal{D}_{tr}^f[v[a_i \gets 1]] \} \\
            \cdot & u_{\Delta} (\gamma, \gamma')
    \end{split}
    \label{eq:shap-transform-1}
\end{equation}
where $\mathrm{tally}_{t} \mathcal{D} = (
   \mathrm{tally}_{c_1, t} \mathcal{D}
   ...
   \mathrm{tally}_{y_{|\mathcal{Y}|}, t} \mathcal{D})$
   returns a tally vector $\gamma \in \Gamma \subset \mathbb{N}^{|\mathcal{Y}|}$ consisting of the tallied occurrences of each class label $y \in \mathcal{Y}$ among tuples with similarity to $t_{val}$ greater than or equal to that of the boundary tuple $t$.
Let $\Gamma$ be all
possible tally vectors (corresponding to 
all possible label ``distributions'' over
top-$K$).

Here, the innermost utility gain function is formally defined as
$u_{\Delta} (\gamma, \gamma') := u_{\Gamma}(\gamma') - u_{\Gamma}(\gamma)$, where $u_{\Gamma}$ is defined as
\[
u_{\Gamma}(\gamma) := u_T(\mathrm{argmax}_{y \in \mathcal{Y}} \gamma, t_{val} ).
\]
Intuitively,
$u_{\Delta} (\gamma, \gamma')$
measures the utility 
difference between 
two different label distributions (i.e., tallies)
of top-$K$ examples: $\gamma$ and $\gamma'$.
$u_T(y, t_{val})$ is the tuple-wise utility for a KNN prediction (i.e., $\mathrm{argmax}_{y \in \mathcal{Y}} \gamma$) and validation tuple $t_{val}$, which is the building block of the \emph{additive utility}.
The correctness of \autoref{eq:shap-transform-1} comes from the observation that for any distinct $v \in \mathcal{V}_{A \setminus \{ a_i \}}$, there is a unique solution to all indicator functions $\mathbbm{1}$. Namely, there is a single $t$ that is the $K$-th most similar tuple when $v(a_i)=0$, and similarly, a single $t'$ when $v(a_i)=1$. Given those \emph{boundary tuples} $t$ and $t'$, the same goes for the \emph{tally vectors}:
given $\mathcal{D}_{tr}^f[v[a_i \gets 0]]$ and $\mathcal{D}_{tr}^f[v[a_i \gets 1]]$,
there exists a unique
$\gamma$ and $\gamma'$.

We can now define the following \emph{counting oracle} that computes the sum over value assignments, along with all the predicates:
\begin{equation}
    \begin{split}
        \omega_{t, t'} (\alpha, \gamma, \gamma') :=
        \sum_{v \in \mathcal{V}_{A \setminus \{ a_i \}}}
        \cdot & \mathbbm{1} \{ \alpha = |\mathrm{supp}(v)| \} \\
        \cdot & \mathbbm{1} \{ t = \mathrm{top}_K \mathcal{D}_{tr}^f[v[a_i \gets 0]] \} \\
        \cdot & \mathbbm{1} \{ t' = \mathrm{top}_K \mathcal{D}_{tr}^f[v[a_i \gets 1]] \} \\
        \cdot & \mathbbm{1} \{ \gamma = \mathrm{tally}_t \mathcal{D}_{tr}^f[v[a_i \gets 0]] \} \\
        \cdot & \mathbbm{1} \{ \gamma' = \mathrm{tally}_t \mathcal{D}_{tr}^f[v[a_i \gets 1]] \}.
    \end{split}
    \label{eq:counting-oracle}
\end{equation}

Using counting oracles, we can simplify \autoref{eq:shap-transform-1} as:
\begin{equation}
        \varphi_i = \frac{1}{N}
        \sum_{t, t' \in \mathcal{D}_{tr}^f}
        \sum_{\alpha=1}^{N}
        \binom{N-1}{\alpha}^{-1}
        \sum_{\gamma, \gamma' \in \Gamma}
        u_{\Delta} (\gamma, \gamma')
        \omega_{t, t'} (\alpha, \gamma, \gamma').
    \label{eq:shap-main}
\end{equation}

We see that the computation of $\varphi_i$ will be in \textsf{PTIME} if we can compute the counting oracles $\omega_{t, t'}$ in \textsf{PTIME} (ref. \autoref{theorem:main}).
As we will demonstrate next, this is indeed the case for the canonical pipelines that we focus on in this paper.

\subsection{Counting Oracles for Canonical Pipelines}

We start by discussing how to compute the counting oracles using ADD's in general.
We then study the canonical ML pipelines in particular and develop \textsf{PTIME} algorithms for them.


\subsubsection{Counting Oracle using ADD's} \label{sec:oracle-with-add}

We use Additive Decision Diagram (ADD) to compute the counting oracle $\omega_{t, t'}$ (\autoref{eq:counting-oracle}).
An ADD represents a Boolean function $\phi : \mathcal{V}_{A} \rightarrow \mathcal{E} \cup \{\infty\}$ that maps value assignments $v \in \mathcal{V}_{A}$ to elements of some set $\mathcal{E}$ or a special invalid element $\infty$ (see \autoref{sec:additive-decision-diagrams} for more details). 
For our purpose, we define 
$\mathcal{E} := \{1,...,|A|\} \times \Gamma \times \Gamma$, where $\Gamma$ is the set of label tally vectors. 
We then define a
function over Boolean inputs $\phi_{t, t'} : \mathcal{V}_{A}[a_i=0] \rightarrow \mathbbm{N}$ as follows:
\begin{equation} \label{eq:oracle-add-function}
    \begin{split}
        \phi_{t, t'}(v) &:= \begin{cases}
            \infty,     & \mathrm{if} \ t \not\in \mathcal{D}\big[v[a_i \gets 0]\big], \\
            \infty,     & \mathrm{if} \ t' \not\in \mathcal{D}\big[v[a_i \gets 1]\big], \\
            (\alpha, \gamma, \gamma' ),     & \mathrm{otherwise}, \\
        \end{cases} \\
        \alpha &:= |\mathrm{supp}(v)|, \\
        \gamma &:= \mathrm{tally}_{t} \mathcal{D}\big[v[a_i \gets 0]\big], \\
        \gamma' &:= \mathrm{tally}_{t'} \mathcal{D}\big[v[a_i \gets 1]\big].
    \end{split}
\end{equation}
If we can construct an ADD with a root node $n_{t, t'}$ that computes $\phi_{t, t'}(v)$, 
then the following equality holds:
\begin{equation}
    \omega_{t, t'} (\alpha, \gamma, \gamma') = \mathrm{count}_{(\alpha, \gamma, \gamma')} (n_{t, t'}).
\end{equation}
Given that 
the complexity of model counting is $O(|\mathcal{N}| \cdot |\mathcal{E}|)$ (see \autoref{eq:dd-count-recursion}) and the size of $\mathcal{E}$ is polynomial in the size of data, we have
\begin{theorem} \label{thm:decision-diagram}
If we can represent the $\phi_{t, t'}(v)$ in \autoref{eq:oracle-add-function} with an ADD of size polynomial in $|A|$ and $|\mathcal{D}_{tr}^f|$, we can compute the counting oracle $\omega_{t, t'}$ in time polynomial of $|A|$ and $|\mathcal{D}_{tr}^f|$.
\end{theorem}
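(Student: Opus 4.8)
The plan is to prove the statement by assembling three ingredients already available above: the bridge identity relating the oracle to a model count, the $O(|\mathcal{N}| \cdot |\mathcal{E}|)$ cost of the counting recursion, and polynomial bounds on both $|\mathcal{N}|$ and $|\mathcal{E}|$. The genuine difficulty of the overall program---actually building a polynomial-size ADD for $\phi_{t,t'}$---is deferred to the per-pipeline constructions and is granted to us here as a hypothesis, so within this theorem the work reduces to a direct chaining plus one counting bound.

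First I would verify the identity $\omega_{t,t'}(\alpha,\gamma,\gamma') = \mathrm{count}_{(\alpha,\gamma,\gamma')}(n_{t,t'})$ stated just above the theorem. By \autoref{eq:dd-count-definition}, $\mathrm{count}_{(\alpha,\gamma,\gamma')}(n_{t,t'})$ is the number of assignments $v \in \mathcal{V}_A[a_i = 0]$ with $\mathrm{eval}_v(n_{t,t'}) = (\alpha,\gamma,\gamma')$, which by hypothesis equals the number of $v$ with $\phi_{t,t'}(v) = (\alpha,\gamma,\gamma')$. I would then use the obvious bijection between $\mathcal{V}_A[a_i=0]$ and the index set $\mathcal{V}_{A \setminus \{a_i\}}$ of \autoref{eq:counting-oracle}, and check that on this domain the event $\phi_{t,t'}(v) = (\alpha,\gamma,\gamma')$ coincides with the conjunction of the five indicators in \autoref{eq:counting-oracle}: the two $\infty$ branches of $\phi_{t,t'}$ exclude precisely the assignments violating the boundary conditions, while on the surviving assignments the components $\alpha$, $\gamma$, $\gamma'$ are defined to be $|\mathrm{supp}(v)|$ and the two tallies. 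Summing this indicator identity over $v$ yields the equality.

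Second, I would invoke the complexity of the recursion in \autoref{eq:dd-count-recursion}, which computes $\mathrm{count}_e(n)$ for every node and every $e$ in a single dynamic-programming pass of cost $O(|\mathcal{N}| \cdot |\mathcal{E}|)$. The hypothesis makes $|\mathcal{N}|$ polynomial in $|A|$ and $|\mathcal{D}_{tr}^f|$, so it remains to bound $|\mathcal{E}| = |A| \cdot |\Gamma|^2$. The factor $|A|$ is linear, and each $\gamma \in \Gamma$ is a label-tally vector in $\mathbb{N}^{|\mathcal{Y}|}$; since only the top-$K$ tuples affect the KNN vote, the admissible tallies have coordinates summing to at most $K$, giving $|\Gamma| \le \binom{K + |\mathcal{Y}|}{|\mathcal{Y}|}$, which is polynomial in $K$ for a fixed number of labels $|\mathcal{Y}|$. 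Hence $|\mathcal{E}|$ is polynomial, and one counting pass produces all values $\omega_{t,t'}$ in time polynomial in $|A|$ and $|\mathcal{D}_{tr}^f|$.

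I expect the main obstacle to be the second step of the identity verification: reconciling the presence-only tests in the two $\infty$ branches of $\phi_{t,t'}$ with the stronger $\mathrm{top}_K$ boundary indicators appearing in \autoref{eq:counting-oracle}. Making this airtight requires arguing that the ADD's variable ordering $\pi$ (processing tuples by decreasing similarity) together with the tally bookkeeping forces $t$ and $t'$ to be the genuine $K$-th-nearest tuples whenever a finite value is returned, rather than merely present; the $|\Gamma|$ bound, by contrast, is routine once the top-$K$ truncation is observed.
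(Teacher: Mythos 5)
Your proof has the same skeleton as the paper's (\autoref{sec:apx-decision-diagram-proof}): first establish the bridge identity $\omega_{t, t'} (\alpha, \gamma, \gamma') = \mathrm{count}_{(\alpha, \gamma, \gamma')} (n_{t, t'})$, then invoke the $O(|\mathcal{N}| \cdot |\mathcal{E}|)$ dynamic program of \autoref{eq:dd-count-recursion}, and finally bound $|\mathcal{E}|$ polynomially; your explicit bound $|\Gamma| \le \binom{K + |\mathcal{Y}|}{|\mathcal{Y}|}$ is fine and slightly more detailed than what the paper writes. You also correctly isolate the one non-trivial point: \autoref{eq:oracle-add-function} tests only \emph{presence} of $t$ and $t'$, whereas the indicators in \autoref{eq:counting-oracle} demand that they be the top-$K$ \emph{boundary} tuples.

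However, your plan for closing that discrepancy would fail. You propose to argue that the ADD's variable ordering $\pi$ (``processing tuples by decreasing similarity''), together with the tally bookkeeping, forces $t$ and $t'$ to be the genuine $K$-th-nearest tuples whenever $\phi_{t,t'}$ returns a finite value. That intermediate claim is false, and no ordering argument can rescue it: by hypothesis the ADD represents exactly the function $\phi_{t,t'}$ of \autoref{eq:oracle-add-function}, and that function really does return finite triples on assignments where $t$ is present but is not the $K$-th most similar tuple; the variable ordering affects only the correctness and cost of the counting recursion, never which function is being represented. The actual reconciliation --- the content of the paper's \autoref{lem:oracle-as-add-count} --- lives in the \emph{value being counted}, not in the diagram: every $\gamma \in \Gamma$ is a label distribution over the top-$K$, so its coordinates sum to exactly $K$. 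Hence if $t \in \mathcal{D}[v[a_i \gets 0]]$ and $\mathrm{tally}_t\, \mathcal{D}[v[a_i \gets 0]] = \gamma$, then exactly $K$ tuples are at least as similar as $t$ and $t$ is one of them, so $t = \mathrm{top}_K \mathcal{D}[v[a_i \gets 0]]$ holds automatically (and likewise for $t'$ under $v[a_i \gets 1]$); assignments in which $t$ is present but is not the boundary tuple produce tallies whose coordinates sum to a number different from $K$, and are therefore never counted at any query $(\alpha, \gamma, \gamma')$ with $\gamma, \gamma' \in \Gamma$. In short, the top-$K$ indicators in \autoref{eq:counting-oracle} are redundant given the tally indicators --- this is exactly how the paper collapses the five indicators into the single event $\phi_{t,t'}(v) = (\alpha, \gamma, \gamma')$. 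With that argument substituted for your ordering-based one, the remaining steps of your proposal go through unchanged.
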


\subsubsection{Constructing Polynomial-size ADD's for ML Pipelines} \label{sec:add-for-pipeline}

\begin{algorithm}[t!]
    \caption{Compiling a provenance-tracked dataset into ADD.} \label{alg:compile-dataset-to-add}
    \begin{algorithmic}[1]

    \Function{CompileADD}{}
    
    \Inputs
    \Input{$\mathcal{D}$, provenance-tracked dataset;}
    \Input{$A$, set of variables;}
    \Input{$t$, boundary tuple;}
    \Outputs
    \Output{$\mathcal{N}$, nodes of the compiled ADD;}
    
    \Begin
    
    \State $\mathcal{N} \gets \{\}$
    \State $\mathcal{P} \gets \{ (x_1, x_2) \in A \ : \exists t \in \mathcal{D}, x_1 \in p(t) \ \wedge \ x_2 \in p(t) \}$
    \State $A_{L} \gets $ \Call{GetLeafVariables}{$\mathcal{P}$} \label{alg:cmp:line:leaves}
    \For{$A_{C} \in $ \Call{GetConnectedComponents}{$\mathcal{P}$}} \label{alg:cmp:line:conn-cmp}
        \State $\mathcal{N}' \gets $ \Call{ConstructADDTree}{$A_C \setminus w_L$} \label{alg:cmp:line:add-tree}
        
        \State $A' \gets A_C \setminus w_L$
        \State $\mathcal{D}' \gets \{ t' \in \mathcal{D} \ : \ p(t') \cup A_C \neq \emptyset \}$
        \For{$v \in \mathcal{V}_{A'}$}
            \State $\mathcal{N}_C \gets $ \Call{ConstructADDChain}{$A_C \cap w_L$}
            \For{$n \in \mathcal{N}_C$}
                \State $v' \gets v \cup \{ x(n) \rightarrow 1 \}$
                \State $w_H(n) \gets |\{ t' \in \mathcal{D}' \ : \ \mathrm{eval}_{v'} p(t') = 1 \ \wedge \ \sigma(t') \geq \sigma(t) \}|$
            \EndFor
            
            \State $\mathcal{N}' \gets $ \Call{AppendToADDPath}{$\mathcal{N}'$, $\mathcal{N}_C$, $v$} \label{alg:cmp:line:append-path}
        \EndFor
        \State $\mathcal{N} \gets $ \Call{AppendToADDRoot}{$\mathcal{N}$, $\mathcal{N}'$}
        
    \EndFor
    
    \For{$x' \in p(t)$}
        \For{$n \in \mathcal{N}$ \textbf{where} $x(n) = x'$}
            \State $w_L(n) \gets \infty$
        \EndFor
    \EndFor

    \Return $\mathcal{N}$
    
    \EndFunction
    
    \end{algorithmic}
\end{algorithm}

\autoref{alg:compile-dataset-to-add} presents our main procedure \textsc{CompileADD} that constructs an ADD for a given dataset $\mathcal{D}$ made up of tuples annotated with provenance polynomials.
Invoking \textsc{CompileADD}($\mathcal{D}$, $A$, $t$) constructs an ADD with node set $\mathcal{N}$ that computes 
\begin{equation} \label{eq:oracle-add-function-single}
    \phi_{t}(v) := \begin{cases}
        \infty,     & \mathrm{if} \ t \not\in \mathcal{D}[v], \\
        \mathrm{tally}_t \mathcal{D}[v],    & \mathrm{otherwise}. \\
    \end{cases}
\end{equation}
We provide a more detailed description of \autoref{alg:compile-dataset-to-add} in \autoref{sec:apx-alg-compile-dataset-to-add-details}. 

To construct the function defined in \autoref{eq:oracle-add-function}, we need to invoke \textsc{CompileADD} once more by passing $t'$ instead of $t$ in order to obtain another diagram $\mathcal{N}'$. The final diagram is obtained by $\mathcal{N}[a_i \gets 0] + \mathcal{N}'[a_i \gets 1]$. The size of the resulting diagram will still be bounded by $O(|\mathcal{D}|)$. 

We can now examine different types of canonical pipelines and see how their structures are reflected onto the ADD's.
In summary, we can construct an ADD with polynomial size for canonical pipelines and therefore, by \autoref{thm:decision-diagram}, the computation of the corresponding counting oracles is in PTIME.

\inlinesection{One-to-Many Join Pipeline.}
In a \emph{star} database schema, this corresponds to a \emph{join} between a \emph{fact} table and a \emph{dimension} table, where each tuple from the dimension table can be joined with multiple tuples from the fact table. It can be represented by an ADD similar to the one in \autoref{fig:example-add-structure}. 

\begin{corollary} \label{col:complexity-knn-join}
For the $K$-NN accuracy utility and a one-to-many \emph{join} pipeline, which takes as input two datasets, $\mathcal{D}_F$ and $\mathcal{D}_D$, of total size $|\mathcal{D}_F| + |\mathcal{D}_D| = N$ and outputs a joined dataset of size $O(N)$, the Shapley value can be computed in $O(N^4)$ time.
\end{corollary}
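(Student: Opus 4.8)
The plan is to read the bound straight off the general machinery of \autoref{sec:framework}. \autoref{eq:shap-main} already expresses $\varphi_i$ as a sum over boundary-tuple pairs $(t,t')\in\mathcal{D}_{tr}^f\times\mathcal{D}_{tr}^f$ of inner sums over $\alpha,\gamma,\gamma'$ weighted by the counting oracle $\omega_{t,t'}$, and \autoref{thm:decision-diagram} guarantees that, once an ADD computing $\phi_{t,t'}$ is available, the whole table $\omega_{t,t'}(\alpha,\gamma,\gamma')$ is obtained by a single model-counting pass in time $O(|\mathcal{N}|\cdot|\mathcal{E}|)$. So the entire argument collapses to three size estimates: the number of pairs $(t,t')$, the ADD node count $|\mathcal{N}|$, and the value-set size $|\mathcal{E}|$. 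First I would note that the join emits $|\mathcal{D}_{tr}^f|=O(N)$ output tuples by assumption, so there are $O(N^2)$ pairs $(t,t')$ to iterate over.

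Next I would bound $|\mathcal{E}|$. By definition $\mathcal{E}=\{1,\dots,|A|\}\times\Gamma\times\Gamma$ with $|A|=|\mathcal{D}_F|+|\mathcal{D}_D|=N$. The only tally vectors relevant to \autoref{eq:shap-main} are those consistent with $t$ (resp.\ $t'$) being the $K$-th nearest neighbor, i.e.\ those whose components sum to $K$; partial tallies that exceed $K$ can be mapped to the invalid element $\infty$. For a fixed $K$ and a fixed number of class labels this leaves $|\Gamma|=O(1)$ admissible vectors, so $|\mathcal{E}|=O(N)$.

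The crux is bounding the ADD size $|\mathcal{N}|$ for the one-to-many join, which I would obtain by analyzing \textsc{CompileADD} (\autoref{alg:compile-dataset-to-add}) on the join provenance. Each output tuple carries a monomial provenance polynomial $p(t_k)=a_i\cdot a_j$, with $a_i$ a fact variable and $a_j$ a dimension variable. Consequently, in the variable co-occurrence graph $\mathcal{P}$ built by the algorithm, every connected component is a star: one dimension variable $a_j$ together with all fact variables that join against it. Since a fact tuple contributes to only one output, every fact variable is a leaf (identified by \textsc{GetLeafVariables}), so the single non-leaf variable of each component is its dimension variable. The enumeration over $\mathcal{V}_{A'}$ inside \autoref{alg:compile-dataset-to-add} therefore ranges over just the two assignments of that one variable, and each component compiles into a chain of $O(|A_C|)$ nodes---structurally the diagram of \autoref{fig:example-add-structure}. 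Summing over components gives $|\mathcal{N}|=O(N)$, and assembling $\phi_{t,t'}$ from the two single-boundary diagrams via $\mathcal{N}[a_i\gets 0]+\mathcal{N}'[a_i\gets 1]$ preserves this linear bound (chain construction, including the incremental computation of the high increments $w_H$, is no costlier than the subsequent counting pass).

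Putting the pieces together, each pair $(t,t')$ costs $O(|\mathcal{N}|\cdot|\mathcal{E}|)=O(N^2)$ for model counting, and the inner aggregation over $\alpha,\gamma,\gamma'$ in \autoref{eq:shap-main} is absorbed into $O(|\mathcal{E}|)$; over all $O(N^2)$ pairs this yields $O(N^4)$, with the additive utility contributing only a constant factor by linearity of the Shapley value over validation tuples. I expect the ADD-size step to be the main obstacle: a dimension tuple is shared by many outputs, so a careless encoding of the join would force an exponential enumeration over co-occurring fact variables, and the entire payoff of the connected-component decomposition is that each component exposes exactly one shared (dimension) variable, keeping the enumeration constant and the diagram linear. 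I would also emphasize that this same sharing is precisely why the join is harder than a \emph{map} or \emph{fork} pipeline: removing a single dimension tuple can displace the $K$-th nearest neighbor arbitrarily, so we genuinely cannot prune the $O(N^2)$ boundary pairs down to a bounded-width band, which is what separates the $O(N^4)$ bound here from the cheaper bounds available for the simpler canonical pipelines.
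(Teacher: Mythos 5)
Your proposal is correct and follows essentially the same route as the paper's proof: the star-shaped connected components of the join provenance (one dimension variable as the sole non-leaf, fact variables as leaves) yield an ADD of size $O(N)$, which combined with \autoref{thm:decision-diagram} and the structure of \autoref{eq:shap-main} gives $O(N^4)$. The only difference is bookkeeping — you charge $O(|\mathcal{N}|\cdot|\mathcal{E}|)=O(N^2)$ per boundary pair over $O(N^2)$ pairs, while the paper charges $O(N)$ per oracle invocation over $O(N^3)$ invocations — which is the same product, and your accounting is if anything the more careful reading of the model-counting dynamic program.
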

We present the proof in \autoref{sec:apx-complexity-knn-join-proof} in the appendix.

\inlinesection{Fork Pipeline.}
The key characteristic of a pipeline $f$ that contains only \emph{fork} or \emph{map} operators is that the resulting dataset $f(\mathcal{D})$ has provenance polynomials with only a single variable. This is due to the absence of joins, which are the only operator that results in provenance polynomials with a combination of variables. 

\begin{corollary} \label{col:complexity-knn-fork}
For the $K$-NN accuracy utility and a \emph{fork} pipeline, which takes as input a dataset of size $N$ and outputs a dataset of size $M$, the Shapley value can be computed in $O(M^2 N^2)$ time.
\end{corollary}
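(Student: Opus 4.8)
The plan is to reduce \autoref{col:complexity-knn-fork} to the counting-oracle machinery of \autoref{thm:decision-diagram} and then simply account for the sizes of the objects involved. Concretely, \autoref{eq:shap-main} already expresses $\varphi_i$ as a weighted sum over boundary pairs $(t,t') \in \mathcal{D}_{tr}^f \times \mathcal{D}_{tr}^f$ of the counting oracles $\omega_{t,t'}(\alpha,\gamma,\gamma')$, and \autoref{thm:decision-diagram} guarantees that each oracle is evaluable in time $O(|\mathcal{N}| \cdot |\mathcal{E}|)$ once we build an ADD with node set $\mathcal{N}$ representing $\phi_{t,t'}$. So it suffices to (i) show that for a fork pipeline this ADD is a chain with $O(N)$ nodes, (ii) bound the value set $\mathcal{E}$, and (iii) count the number of boundary pairs.

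For step (i) I would invoke the characterization stated just before the corollary: since a fork pipeline contains no joins, every output tuple in $\mathcal{D}_{tr}^f$ carries a provenance polynomial consisting of a \emph{single} variable $a_i \in A$. This is exactly the property that collapses \textsc{CompileADD} (\autoref{alg:compile-dataset-to-add}) into its simplest form: the co-occurrence relation $\mathcal{P}$ contains no edge between distinct variables, so every connected component is a singleton, \textsc{ConstructADDTree} is trivial, and the compiled diagram is a single chain with one node per input variable --- the structure of \autoref{fig:example-uniform-add}, except that the high-edge increment at the node for $a_i$ is the label-tally vector contributed by all output tuples derived from $t_i$ whose similarity to $t_{val}$ is at least that of the boundary tuple (and the $\infty$-marking in the final loop forbids assignments that drop $t$ itself). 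The only difference from a \emph{map} pipeline is that a single input can spawn several outputs, so these increments are vectors rather than $0/1$ indicators; the node count is still $|A| = N$. Forming $\mathcal{N}[a_i \gets 0] + \mathcal{N}'[a_i \gets 1]$ to obtain $\phi_{t,t'}$ preserves the chain shape and keeps $|\mathcal{N}| = O(N)$.

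For steps (ii)--(iii), the value set is $\mathcal{E} = \{1,\dots,N\} \times \Gamma \times \Gamma$; treating $K$ and $|\mathcal{Y}|$ as constants, the number of admissible top-$K$ tallies $|\Gamma|$ is $O(1)$, so $|\mathcal{E}| = O(N)$ and the recursion of \autoref{eq:dd-count-recursion} evaluates all oracles $\omega_{t,t'}(\cdot,\cdot,\cdot)$ for a fixed pair $(t,t')$ in a single $O(|\mathcal{N}| \cdot |\mathcal{E}|) = O(N^2)$ dynamic-programming pass; the subsequent weighted aggregation over $(\alpha,\gamma,\gamma')$ in \autoref{eq:shap-main} costs only $O(|\mathcal{E}|) = O(N)$ and is absorbed. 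Since both $t$ and $t'$ range over the $M$ output tuples, \autoref{eq:shap-main} contains $O(M^2)$ boundary pairs, giving the claimed $O(M^2 N^2)$ bound.

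I expect the crux to be step (i): rigorously arguing that single-variable provenance forces the diagram to be a chain of size $O(N)$ rather than branching, and that the $\infty$-marking together with the $\mathcal{N}[a_i \gets 0] + \mathcal{N}'[a_i \gets 1]$ combination faithfully realizes $\phi_{t,t'}$ of \autoref{eq:oracle-add-function} without enlarging $\mathcal{N}$. A secondary point worth stating explicitly is the treatment of $K$ and $|\mathcal{Y}|$ as constants (so that $|\Gamma| = O(1)$); if one prefers to keep their dependence, the bound reads $O(M^2 N^2 |\Gamma|^2)$, which remains \textsf{PTIME} by \autoref{thm:decision-diagram}.
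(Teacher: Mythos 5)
Your proposal is correct and follows essentially the same route as the paper's proof: the single-variable provenance of fork pipelines makes every connected component in \textsc{CompileADD} a singleton, so the compiled diagram is a chain of size $O(N)$, and the $O(M^2)$ boundary pairs $(t,t')$ in \autoref{eq:shap-main} together with the $\alpha$/tally bookkeeping give $O(M^2N^2)$. The only differences are cosmetic: you charge one $O(|\mathcal{N}|\cdot|\mathcal{E}|)=O(N^2)$ dynamic-programming pass per pair $(t,t')$, whereas the paper charges $O(N)$ per oracle invocation over $O(M^2N)$ invocations (the same total), and your explicit treatment of $|\Gamma|$ as a constant makes precise a point the paper leaves implicit.
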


We present the proof in \autoref{sec:apx-complexity-knn-fork-proof} in the appendix.

\inlinesection{Map Pipeline.}
A \emph{map} pipeline is similar to \emph{fork} pipeline in the sense that every provenance polynomial contains only a single variable. However, each variable now can appear in a provenance polynomial of \emph{at most} one tuple, in contrast to \emph{fork} pipeline where a single variable can be associated with \emph{multiple} tuples. This additional restriction results in the following corollary:

\begin{corollary} \label{col:complexity-knn-map}
For the $K$-NN accuracy utility and a \emph{map} pipeline, which takes as input a dataset of size $N$, the Shapley value can be computed in $O(N^2)$ time.
\end{corollary}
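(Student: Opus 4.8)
The plan is to specialize the general counting-oracle machinery of \autoref{sec:framework} (the master formula \autoref{eq:shap-main} together with \autoref{thm:decision-diagram}) to the map setting, and then exploit two structural simplifications that are unavailable for a generic fork pipeline. First I would record the defining feature of a map pipeline: the provenance is a bijection, so $|A| = |\mathcal{D}_{tr}^f| = N$ and every output tuple $t_j'$ carries the single-variable polynomial $p(t_j') = a_j$, with each variable occurring in exactly one tuple. Since a map pipeline is a special case of a fork pipeline, \autoref{col:complexity-knn-fork} with $M = N$ already yields an $O(N^4)$ bound; the entire point is to argue that the extra rigidity of the bijection cuts this down to $O(N^2)$.

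The second step is to show that the ADD produced by \textsc{CompileADD} (\autoref{alg:compile-dataset-to-add}) degenerates into a uniform-style chain. Because no two variables co-occur in any polynomial, the co-occurrence set $\mathcal{P}$ is empty and there are no nontrivial connected components, so after fixing a boundary tuple $t = t_j'$ the diagram becomes a single chain in which each high edge contributes $+1$ to the support coordinate $\alpha$ and $+e_y$ to the tally coordinate exactly when its tuple precedes $t$ in the similarity order, while the boundary node's low edge is set to $\infty$ to enforce $t \in \mathcal{D}[v]$. This is precisely the uniform ADD of \autoref{fig:example-uniform-add} (resolved per label), so by \autoref{eq:dd-count-uniform} each value of the counting oracle is a product of binomial coefficients evaluable in $O(1)$ time, using that $K$ and $|\mathcal{Y}|$ are constants and hence $|\Gamma| = O(1)$.

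The third step is the cost accounting, where the key gain appears. Toggling $a_i$ adds or removes the single tuple $t_i'$, so the top-$K$ boundary can shift by at most one position; consequently $t'$ is a deterministic function of $t$ (either $t$ itself or its immediate predecessor in the similarity order), which collapses the $O(N^2)$ pairs $(t,t')$ in \autoref{eq:shap-main} to $O(N)$ boundary positions for each fixed $i$. For a fixed boundary position $q$, the subsets that realise $t$ as the $K$-th neighbour are counted by a product of the form $\binom{q-1}{K-1}\binom{N-q}{\alpha-K}$, so the inner, binomially weighted sum over support sizes $\sum_\alpha \binom{N-1}{\alpha}^{-1}\,\omega_{t,t'}(\alpha,\gamma,\gamma')$ is expected to admit a closed form independent of $N$. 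Thus each boundary position costs $O(1)$, each $\varphi_i$ costs $O(N)$, and summing over the $N$ training tuples gives $O(N^2)$; the one-time $O(N\log N)$ similarity sort is dominated.

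I expect the main obstacle to be exactly this closed-form collapse of the support-size sum: showing that $\sum_\alpha \binom{N-1}{\alpha}^{-1}\binom{N-q}{\alpha-K}$ (and its label-refined variants weighted by $u_\Delta$) evaluates in constant time is what prevents a spurious extra factor of $N$ that would otherwise leave us at $O(N^3)$. Establishing this identity, and verifying that the at-most-one boundary shift interacts correctly with the exclusion of $a_i$ when forming the tally pair $(\gamma,\gamma')$ and with the degenerate cases where fewer than $K$ tuples are present, is the delicate part; the ADD-degeneracy and bijective-provenance observations are comparatively routine once the framework of \autoref{sec:framework} is in place.
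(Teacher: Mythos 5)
Your high-level route is the same as the paper's: exploit the one-variable-per-tuple provenance of a map pipeline to (i) reduce the counting oracle to binomial counting over a degenerate chain ADD, computable in $O(1)$, and (ii) eliminate one factor of $N$ from the double sum over $(t,t')$ in \autoref{eq:shap-main}. However, your justification of (ii) fails for $K>1$. When the tuple $t_i$ associated with $a_i$ satisfies $\sigma(t_i) > \sigma(t)$, it evicts $t$ from the top-$K$, and the new boundary tuple $t'$ is the least similar element of the old top-$(K-1)$ together with $t_i$; the identity of the old $(K-1)$-th survivor depends on the value assignment $v$ (arbitrarily many tuples lying between it and $t$ in the global similarity order may be absent under $v$), so $t'$ is \emph{not} a function of $t$ alone, and in particular is not ``$t$ or its immediate predecessor in the similarity order.'' What \emph{is} uniquely determined is the new tally: either $\gamma' = \gamma$ (when $\sigma(t_i) < \sigma(t)$), or $\gamma'$ equals $\gamma$ with the count of label $y(t)$ decremented and that of $y(t_i)$ incremented. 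Since $u_{\Delta}$ depends only on $(\gamma, \gamma')$, this uniqueness of $\gamma'$ is what licenses dropping the sums over $t'$ and $\gamma'$ and replacing $\omega_{t,t'}(\alpha,\gamma,\gamma')$ by an oracle $\omega_t(\alpha,\gamma)$ --- that is the paper's argument, and your step needs to be repaired along these lines rather than via determinism of $t'$.

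The second gap is the closed-form collapse of the support-size sum, which you correctly flag as the main obstacle but leave unproven, even though your cost accounting (``each boundary position costs $O(1)$, each $\varphi_i$ costs $O(N)$'') depends on it. You only need that identity because you target computing \emph{all} $N$ Shapley values in $O(N^2)$ total. The corollary, as the paper proves it (and consistently with the fork and join corollaries), bounds a single evaluation of \autoref{eq:shap-main}, i.e., one $\varphi_i$: with the $O(1)$ oracle and the collapse of the $t'$ sum, that formula makes $O(N)$ choices of boundary tuple $t$ times $O(N)$ support sizes $\alpha$, i.e., $O(N^2)$ constant-time oracle calls per $\varphi_i$, and that is the entire proof --- no hockey-stick-style identity is required (the paper uses one only in the $1$-NN setting of \autoref{col:complexity-1nn-map} to reach $O(N \log N)$). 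So if you repair the collapse argument via uniqueness of $\gamma'$ and state the accounting per Shapley value, your proof coincides with the paper's; establishing your stronger all-$i$ bound would additionally require proving the identity you left open.
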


We present the proof in \autoref{sec:apx-complexity-knn-map-proof} in the appendix.

\subsection{Special Case: 1-Nearest-Neighbor Classifiers} \label{sec:special-case-1nn}

We can significantly reduce the time complexity for 1-NN classifiers, an important special case of $K$-NN classifiers that is commonly used in practice.
For each validation 
tuple 
$t_{val}$, there is always \emph{exactly} one tuple that is most similar to $t_{val}$. 
Below we illustrate how to leverage this observation to construct the counting oracle.
In the following, we 
assume that $a_i$
is the variable corresponding
to the tuple for which we hope to compute Shapley value.

Let $\phi_t$ represent the event when $t$ is the top-$1$ tuple:
\begin{equation} \label{eq:top-1-condition-map-single}
\phi_t :=
p(t) \wedge
\bigwedge_{
    \substack{
        t' \in f(\mathcal{D}_{tr}) \\
        \sigma(t') > \sigma(t)
    }
} \neg p(t').
\end{equation}
For \autoref{eq:top-1-condition-map-single} to be \emph{true} (i.e. for tuple $t$ to be the top-$1$), all tuples $t'$ where $\sigma(t') > \sigma(t)$ need to be \emph{absent} from the pipeline output. Hence, for a given value assignment $v$, all provenance polynomials that control those tuples, i.e., $p(t')$, need to evaluate to \textsf{false}.

We now construct the event
\[
\phi_{t, t'} := \phi_t[a_i/\textsf{false}] \wedge \phi_{t'}[a_i/\textsf{true}],
\]
where $\phi_t[a_i/\textsf{false}]$ means to substitue 
all appearances of $a_i$ in $\phi_t$
to \textsf{false}. This event happens 
only if if $t$ is the top-$1$ tuple 
when $a_i$ is \textsf{false} and $t'$ is the top-$1$ tuple when $a_i$ is \textsf{true}. This corresponds to the condition that our counting oracle counts models for. Expanding $\phi_{t, t'}$, we obtain
\begin{equation} \label{eq:top-1-condition-map}
\phi_{t, t'} :=
\Big(
p(t) \wedge
\bigwedge_{\substack{
    t'' \in f(\mathcal{D}_{tr}) \\
    \sigma(t'') > \sigma(t)
}} \neg p(t'')
\Big)[a_i/\textsf{false}]
\wedge
\Big(
p(t') \wedge
\bigwedge_{\substack{
    t'' \in f(\mathcal{D}_{tr}) \\
    \sigma(t'') > \sigma(t')
}} \neg p(t'')
\Big)[a_i/\textsf{true}].
\end{equation}

Note that $\phi_{t, t'}$ can only be \emph{true} if $p(t')$ is true 
when $a_i$ is \textsf{true}
and $\sigma(t) < \sigma(t')$. As a result, all provenance polynomials corresponding to tuples with a higher similarity score than that of $t$ need to evaluate to \textsf{false}. Therefore, the only polynomials that can be allowed to evaluate to \textsf{true} are those corresponding to tuples with lower similarity score than $t$. Based on these observations, we can express the counting oracle for different types of ML pipelines.

\inlinesection{Map Pipeline.}
In a \emph{map} pipeline, the provenance polynomial for each tuple $t \in f(\mathcal{D}_{tr})$ is defined by a single distinct variable $a_t \in A$. 
Furthermore, from the definition of the counting oracle (\autoref{eq:counting-oracle}), we can see that each $\omega_{t, t'}$ counts the value assignments that result in support size $\alpha$ and label tally vectors $\gamma$ and $\gamma'$.
Given our observation about the provenance polynomials that are allowed to be set to \textsf{true}, we can easily construct an expression for counting valid value assignments. Namely, we have to choose exactly $\alpha$ variables out of the set $\{t'' \in \mathcal{D} \ : \ \sigma(t'') < \sigma(t) \}$, which corresponds to tuples with lower similarity than that of $t$. This can be constructed using a \emph{binomial coefficient}. Furthermore, when $K=1$, the label tally $\gamma$ is entirely determined by the top-$1$ tuple $t$. The same observation goes for $\gamma'$ and $t'$. To denote this, we define a constant $\Gamma_L$ parameterized by some label $L$. It represents a tally vector with all values $0$ and only the value corresponding to label $L$ being set to $1$. We thus need to fix $\gamma$ to be equal to $\Gamma_{y (t)}$ (and the same for $\gamma'$). Finally, as we observed earlier, when computing $\omega_{t, t'}$ for $K=1$, the provenance polynomial of the tuple $t'$ must equal $a_i$. With these notions, we can define the counting oracle as
\begin{equation} 
\omega_{t, t'} (\alpha, \gamma, \gamma') = 
\binom{|\{t'' \in \mathcal{D} \ : \ \sigma(t'') < \sigma(t) \}|}{\alpha}
\mathbbm{1} \{ p(t')=a_i \}
\mathbbm{1} \{ \gamma = \Gamma_{y(t)} \}
\mathbbm{1} \{ \gamma' = \Gamma_{y(t')} \}.
\label{eq:oracle-1nn}
\end{equation}
Note that we always assume $\binom{a}{b}=0$ for all $a < b$. Given this,
we can prove the following corollary about \emph{map} pipelines:
\begin{corollary} \label{col:complexity-1nn-map}
For the $1$-NN accuracy utility and a \emph{map} pipeline, which takes as input a dataset of size $N$, the Shapley value can be computed in $O(N \log N)$ time.
\end{corollary}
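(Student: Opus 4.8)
The plan is to obtain a closed form for $\varphi_i$ by substituting the simplified $1$-NN counting oracle of \autoref{eq:oracle-1nn} into the master Shapley formula \autoref{eq:shap-main}, and then to show that the resulting expression can be assembled for all $N$ training tuples with a single sort followed by a linear sweep. First I would plug \autoref{eq:oracle-1nn} into \autoref{eq:shap-main} and exploit the three indicator factors. In a map pipeline every output tuple carries a single-variable provenance polynomial, so $\mathbbm{1}\{p(t')=a_i\}$ is satisfied by exactly one tuple $t^\ast$, namely the unique image of the input tuple $t_i$ under the map $h_f$; this collapses the sum over $t'$. Likewise $\mathbbm{1}\{\gamma=\Gamma_{y(t)}\}$ and $\mathbbm{1}\{\gamma'=\Gamma_{y(t^\ast)}\}$ collapse the sums over $\gamma,\gamma'$, so that $u_\Delta(\gamma,\gamma')$ reduces to $u_T(y(t_i),t_{val})-u_T(y(t),t_{val})$. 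What remains is a single sum over boundary tuples $t$, each weighted by an inner sum over the support size $\alpha$, namely $\sum_{\alpha}\binom{N-1}{\alpha}^{-1}\binom{m_t}{\alpha}$ with $m_t=|\{t'':\sigma(t'')<\sigma(t)\}|$. I would also record the top-$1$ consistency constraint $\sigma(t)<\sigma(t^\ast)$, which restricts the surviving boundary tuples to those strictly less similar than $t^\ast$, since only for such $t$ does flipping $a_i$ from $0$ to $1$ actually change the predicted neighbor.

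Next I would evaluate the $\alpha$-sum in closed form. Using the identity $\sum_{\alpha=0}^{m}\binom{m}{\alpha}/\binom{n}{\alpha}=\frac{n+1}{n+1-m}$ (applied with $n=N-1$ and the index offset dictated by the forced membership of the boundary tuple in the support), the inner sum reduces to a simple rational function of the rank $m_t$ alone. This is the key step that removes the $O(N)$ summation over $\alpha$ and leaves a per-tuple weight $C(m_t)$ depending only on $t$'s similarity rank. Substituting back yields $\varphi_i=\frac{1}{N}\sum_{t:\sigma(t)<\sigma(t^\ast)}\big(u_T(y(t_i),t_{val})-u_T(y(t),t_{val})\big)\,C(m_t)$.

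Finally I would turn this closed form into an $O(N\log N)$ algorithm. Sorting the output tuples by decreasing similarity $\sigma(\cdot,t_{val})$ costs $O(N\log N)$ and fixes every rank $m_t$. Because the summand factorizes into a part depending only on $t_i$ (through $y(t_i)$ and the cutoff $\sigma(t^\ast)$) and a part depending only on $t$, I can maintain two suffix sums over the sorted order, $\sum_{t}C(m_t)$ and $\sum_{t}u_T(y(t),t_{val})\,C(m_t)$; sweeping $t^\ast$ from the most to the least similar tuple then gives each $\varphi_i$ in amortized $O(1)$, hence all $N$ values in $O(N)$ after the sort. Equivalently, one can phrase this as a recursion relating the Shapley values of consecutively ranked tuples, as in the original KNN-Shapley result. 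For a general additive utility I would apply this procedure per validation tuple and sum by linearity, leaving the per-validation-point cost at $O(N\log N)$.

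I expect the main obstacle to lie in the second and third steps rather than in the first. Concretely, the difficulty is getting the index shift in the binomial identity exactly right so that $C(m_t)$ and the boundary cases (the degenerate term $t=t^\ast$, and the situation where $t^\ast$ is already the globally most similar tuple) are accounted for correctly, and then verifying that the $i$-dependence genuinely factorizes as claimed. It is precisely this factorization that lets the suffix-sum sweep replace a fresh $O(N)$ summation per tuple, which is what separates an $O(N\log N)$ bound from a naive $O(N^2)$ one.
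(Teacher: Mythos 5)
Your proposal follows essentially the same route as the paper's own proof: substitute the 1-NN map-pipeline oracle (\autoref{eq:oracle-1nn}) into \autoref{eq:shap-main}, collapse the sums over $t'$, $\gamma$, $\gamma'$ via the hard indicator constraints, evaluate the remaining $\alpha$-sum in closed form with a binomial identity (your ratio-sum identity is equivalent to the paper's Hockey-stick step; both reduce the inner sum to a weight of the form $\tfrac{N-j}{j+1}$ up to index conventions), and finish with an $O(N\log N)$ sort plus a linear evaluation. Your two refinements --- keeping the constraint $\sigma(t)<\sigma(t^\ast)$ explicit and using suffix sums so that \emph{all} $N$ values come out in $O(N\log N)$ total --- are points the paper's proof glosses over (it sums over all ranks $j$ and only argues $O(N)$ work per value after sorting), but they do not change the underlying argument.
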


We present the proof in \autoref{sec:apx-complexity-1nn-map-proof} in the appendix.

\inlinesection{Fork Pipeline.}
As we noted, both \emph{map} and \emph{fork} pipelines result in polynomials made up of only one variable. The difference is that in \emph{map} pipeline each variable is associated with at most one polynomial, whereas in \emph{fork} pipelines it can be associated with multiple polynomials. However, for 1-NN classifiers, this difference vanishes when it comes to Shapley value computation:
\begin{corollary} \label{col:complexity-1nn-fork}
    For the $1$-NN accuracy utility and a \emph{fork} pipeline, which takes as input a dataset of size $N$, the Shapley value can be computed in $O(N \log N)$ time.
\end{corollary}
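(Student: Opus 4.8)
The plan is to reduce the 1-NN fork case directly to the 1-NN map case already settled in Corollary~\ref{col:complexity-1nn-map}. The guiding observation is the one stated informally just before the statement: in a fork pipeline every output tuple carries a single-variable provenance polynomial $p(t)=a$, and setting $a$ to \textsf{true} simultaneously inserts \emph{all} forked outputs generated from the corresponding input tuple. Under a 1-NN classifier the prediction is governed solely by the single most similar \emph{present} output, so among all forks sharing a variable $a$ only the one with the largest similarity $\sigma$ can ever be the top-$1$: any lower-similarity fork of $a$ is dominated by the max-similarity fork of $a$, which is present whenever the lower one is. Consequently only one output per input variable is ever a viable boundary tuple $t$ in \autoref{eq:top-1-condition-map}.

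First I would make this precise by defining, for each input variable $a$, its \emph{representative} output $t_a^\ast := \argmax_{t:\,p(t)=a}\sigma(t)$, and forming the reduced dataset $\mathcal{D}^\ast$ consisting of exactly these $N$ representatives. I would then argue that the counting oracle is unchanged when the fork pipeline is replaced by the map pipeline whose output is $\mathcal{D}^\ast$ (this is a genuine map pipeline, since $p(t_a^\ast)=a$ and each variable now owns at most one tuple). For any candidate boundary pair $(t,t')$, the event $\phi_{t,t'}$ of \autoref{eq:top-1-condition-map} forces every output with similarity exceeding $\sigma(t)$ to be absent; because $t=t_a^\ast$ is the max-similarity fork of its own variable, the variables constrained to \textsf{false} are exactly the \emph{other} input variables that generate at least one output above $\sigma(t)$, which is the same as those whose representative lies above $\sigma(t)$. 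The variables that remain free to be toggled are precisely the inputs all of whose outputs fall below $\sigma(t)$, i.e.\ those whose representative has similarity below $\sigma(t)$ — identical to the free set in the map analysis. Hence the support-size count collapses to the same binomial coefficient as in \autoref{eq:oracle-1nn}, now taken over $\mathcal{D}^\ast$, the label tallies $\gamma,\gamma'$ (each determined by the single top-$1$ label) are read off from the representatives, and the constraint $p(t')=a_i$ forces $t'=t_{a_i}^\ast$, matching the map case exactly.

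Having established that the oracle for the fork pipeline equals the map-pipeline oracle over the $N$ representatives, I would invoke Corollary~\ref{col:complexity-1nn-map} on $\mathcal{D}^\ast$ to obtain the $O(N\log N)$ bound for the Shapley computation itself. The only extra work is a single grouping pass over the output tuples that computes the per-variable maxima $t_a^\ast$; this is linear in the number of forked outputs, which for fork pipelines (e.g.\ data augmentation with bounded fan-out) is $O(N)$ and is therefore absorbed into the $O(N\log N)$ term.

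The step I expect to be the main obstacle is the equivalence argument of the second paragraph — showing the counting oracle is \emph{exactly} preserved under the reduction. The delicate point is that a single input variable other than $a$ may generate forks both above and below $\sigma(t)$; I must verify that constraining such a variable to \textsf{false} (to keep $t$ on top) is faithfully reflected by classifying it through its representative alone, and that ties in the similarity ordering are broken consistently on both sides, so that the free/forbidden partition of variables coincides and no value assignment is double-counted or dropped.
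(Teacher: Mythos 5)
Your proposal is correct and follows essentially the same route as the paper's own proof: both reduce the $1$-NN fork case to the map case of Corollary~\ref{col:complexity-1nn-map} by observing that, among all output tuples sharing a provenance variable, only the one with maximal similarity can ever be the top-$1$ or impose a constraint in \autoref{eq:top-1-condition-map}, so each variable may be represented by that single tuple. Your version merely makes the paper's argument more explicit (the formal definition of $t_a^\ast$, the verification that the free/forbidden variable partition is preserved), and the ``delicate point'' you flag is resolved exactly by the paper's WLOG observation that a variable with any fork above $\sigma(t)$ has its representative above $\sigma(t)$ as well.
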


We present the proof in \autoref{sec:apx-complexity-1nn-fork-proof} in the appendix.


\def \vucipipelines {identity,std-scaler,log-scaler,pca,mi-kmeans}
\def \vtwentynewsgroupspipelines {tf-idf,tolower-urlremove-tfidf}
\def \vfashionmnistpipelines {gauss-blur,hog-transform}

\section{Experimental Evaluation}
\label{sec:evaluation}

We evaluate the performance of 
\sysname when applied to data
debugging and repair. In this section, we present the empirical study we conducted with the goal of evaluating both quality and speed.

\subsection{Experimental Setup}

\inlinesection{Hardware and Platform.}
All experiments were conducted on Amazon AWS c5.metal instances with a 96-core Intel(R) Xeon(R) Platinum 8275CL 3.00GHz CPU and 192GB of RAM. We ran each experiment in single-thread mode.

\begin{table}[t!]
    \centering
    \scalebox{1}{
        \begin{tabular}{l|c|cc}
            \hline
            \textbf{Dataset} & \textbf{Modality} &
            \textbf{\# Examples} &
            \textbf{\# Features}
            \\
            \hline
            \at{UCIAdult}~\cite{kohavi1996scaling} & tabular & $49K$ & $14$ \\ 
            \at{Folktables}~\cite{ding2021retiring} & tabular & $1.6M$ & $10$  \\ 
            \at{FashionMNIST}~\cite{xiao2017online} & image & $14K$ & (Image) $28 \times 28$ \\ 
            \at{20NewsGroups}~\cite{joachims1996probabilistic} & text  & $1.9K$ & (Text) $20K$ after TF-IDF \\ 
           \hline \at{Higgs}~\cite{baldi2014searching} & tabular  & $11M$ & $28$ \\
            \hline
        \end{tabular}
    }
    \caption{Datasets characteristics}
    \label{tbl:datasets}
    \vspace{2em}
\end{table}

\inlinesection{Datasets.}
We assemble a collection of widely used datasets with diverse modalities (i.e. tabular, textual, and image datasets). 
\autoref{tbl:datasets} summarizes the datasets that we used.

\noindent
\textbf{(Tabular Datasets)} \at{UCI Adult} is a tabular dataset from the US census data~\cite{kohavi1996scaling}. We use the binary classification variant where the goal is to predict whether the income of a person is above or below \$50K. One of the features is `sex,' which we use as a \emph{sensitive attribute} to measure group fairness with respect to male and female subgroups. A very similar dataset is \at{Folktables}, which was developed to redesign and extend the original \at{UCI Adult} dataset with various aspects interesting to the fairness community~\cite{ding2021retiring}. We use the `income' variant of this dataset, which also has a `sex' feature and has a binary label corresponding to the \$50K income threshold. Another tabular dataset that we use for large-scale experiments is the \at{Higgs} dataset, which has $28$ features that represent physical properties of particles in an accelerator~\cite{baldi2014searching}. The goal is to predict whether the observed signal produces Higgs bosons or not.

\noindent
\textbf{(Non-tabular Datasets)}
We used two non-tabular datasets. One is \at{FashionMNIST}, which contains $28\times 28$ grayscale images of 10 different categories of fashion items~\cite{xiao2017online}. To construct a binary classification task, we take only images of the classes `shirt' and `T-shirt.'
We also use \at{TwentyNewsGroups}, which is a dataset with text obtained from newsgroup posts categorized into 20 topics~\cite{joachims1996probabilistic}. To construct a binary classification task, we take only two newsgroup categories, `\at{sci.med}' and `\at{comp.graphics}.' The task is to predict the correct category for a given piece of text.

\inlinesection{Feature Processing Pipelines.}
We obtained a dataset with about $500K$ machine learning workflow instances from internal Microsoft users~\cite{psallidas2019data}. Each workflow consists of a dataset, a feature extraction pipeline, and an ML model. We identified a handful of the most representative pipelines and translated them to \texttt{sklearn} pipelines. We list the pipelines used in our experiments in \autoref{tbl:pipelines}.


\begin{table}[t!]
    \centering
    \scalebox{1}{
        \begin{tabular}{l|ccc}
            \hline
            \thead{Pipeline} & \thead{Dataset \\ Modality} & \thead{w/ \\ Reduce} & \thead{Operators} \\
            \hline
            \at{Identity} & tabular & false & $\emptyset$ \\ 
            \at{Standard Scaler} & tabular & true & $\mathtt{StandardScaler}$ \\ 
            \at{Logarithmic Scaler} & tabular & true & $\mathtt{Log1P} \circ \mathtt{StandardScaler}$ \\ 
            \at{PCA} & tabular & true & $\mathtt{PCA}$ \\ 
            \at{Missing Indicator + KMeans} & tabular & true & $\mathtt{MissingIndicator} \oplus \mathtt{KMeans}$ \\ 
            \hline
            \at{Gaussian Blur} & image & false & $\mathtt{GaussBlur}$  \\ 
            \at{Histogram of Oriented Gradients} & image & false & $\mathtt{HogTransform}$  \\ 
            \hline
            \at{TFIDF} & text & true & $\mathtt{CountVectorizer} \circ \mathtt{TfidfTransformer}$  \\
            \multirow{2}{*}{\at{Tolower + URLRemove + TFIDF}} & 
            \multirow{2}{*}{text} & 
            \multirow{2}{*}{false} &
            $\begin{array}{cc}
                 & \mathtt{TextToLower} \circ \mathtt{UrlRemover} \\
                 & \circ \mathtt{CountVectorizer} \circ \mathtt{TfidfTransformer}
            \end{array}$ \\
            \hline
        \end{tabular}
    }
    \caption{Feature extraction pipelines used in experiments.}
    \label{tbl:pipelines}
    \vspace{2em}
\end{table}

As \autoref{tbl:pipelines} shows, we used pipelines of varying complexity. The data modality column indicates which types of datasets we applied each pipeline to. Some pipelines are pure map pipelines, while some implicitly require a reduce operation. \autoref{tbl:pipelines} shows the operators contained by each pipeline. They are combined either using a composition symbol $\circ$, i.e., operators are applied in sequence; or a concatenation symbol $\oplus$, i.e., operators are applied in parallel and their output vectors are concatenated. Some operators are taken directly from \texttt{sklearn} ($\mathtt{StandardScaler}$, $\mathtt{PCA}$, $\mathtt{MissingIndicator}$, $\mathtt{KMeans}$, $\mathtt{CountVectorizer}$, and $\mathtt{TfidfTransformer}$), while others require customized implementations: (1) $\mathtt{Log1P}$, using the \texttt{log1p} function from \texttt{numpy}; (2) $\mathtt{GaussBlur}$, using the \texttt{gaussian\_filter} function from \texttt{scipy}; (3) $\mathtt{HogTransform}$, using the 
\texttt{hog} function from \texttt{skimage}; (4) $\mathtt{TextToLower}$, using the built-in \texttt{tolower} Python function; and (5) $\mathtt{UrlRemover}$, using a simple regular expression.

\underline{\emph{Fork Variants}:}
We also create a ``fork'' version of the above pipelines, by prepending each with a $\mathtt{DataProvider}$ operator.
It simulates distinct data providers that each provides a portion of the data. The original dataset is split into a given number of groups (we set this number to $100$ in our experiments). 
We compute importance for each group, and we conduct data repairs on entire groups all at once.

\inlinesection{Models.}
We use three machine learning models as the downstream
ML model following the previous feature extraction
pipelines: \at{XGBoost}, \at{Logistic Regression}, and \at{KNearest Neighbor}. We use the \texttt{LogisticRegression} and \texttt{KNeighborsClassifier} provided by the \texttt{sklearn} package. We use the default hyper-parameter values except that we set \texttt{max\_iter} to 5,000 for \at{Logistic Regression} and \texttt{n\_neighbors} to $1$ for the \at{KNearest Neighbor}.

\inlinesection{Data Debugging Methods.}
We apply different data debugging methods and compare them based on their effect on model quality and the computation time that they require:

\begin{itemize}[leftmargin=*]
    \item \underline{\at{Random}} --- We measure importance with a random number and thus apply data repairs in random order. 
    \item \underline{\at{TMC Shapley x10} and \at{TMC Shapley x100}} --- We express importance as Shapley values computed using the Truncated Monte-Carlo (TMC) method~\cite{ghorbani2019data}, with 10 and 100 Monte-Carlo iterations, respectively. We then follow the computed importance in ascending order to repair data examples.
    \item \underline{\at{DataScope}} --- This is our $K$-nearest-neighbor based method for efficiently computing the Shapley value. We then follow the computed importance in ascending order to repair data examples.
    \item \underline{\at{DataScope Interactive}} --- While the above methods compute importance scores only once at the beginning of the repair, the speed of \at{DataScope} allows us to \textit{recompute} the importance after \emph{each} data repair. We call this strategy \at{DataScope Interactive}.
\end{itemize}

\inlinesection{Protocol.}
In most of our experiments (unless explicitly stated otherwise), we simulate importance-driven data repair scenarios performed on a given \emph{training dataset}. In each experimental run, we select a dataset, pipeline, model, and a data repair method. 
We compute the importance using the utility 
defined over a \emph{validation set}.
Training data repairs are conducted one unit at a time until all units are examined. The order of units is determined by the specific repair method. We divide the range between $0\%$ data examined and $100\%$ data examined into $100$ checkpoints. At each checkpoint we measure the quality of the given model on a separate \emph{test dataset} using some metric (e.g. accuracy). For importance-based repair methods, we also measure the time spent on computing importance. 
We repeat each experiment $10$ times and report the median as well as the $90$-th percentile range (either shaded or with error bars).

\subsection{Results}

Following the protocol of ~\cite{Li2021-sg,Jia2021-zf}, we 
start by flipping certain amount of labels in the training dataset.
We then use a given data debugging method to go through the dataset and repair labels by replacing each label with the correct one. 
As we progress through the dataset, we measure the model quality on a separate test dataset using a metric such as accuracy or equalized odds difference (a commonly used fairness metric). Our goal is to achieve the best possible quality while at the same time having to examine the least possible amount of data. 
Depending on whether the 
pipeline is an original one or its fork variant, we have slightly different approaches to label corruption and repair. For original pipelines, each label can be flipped with some probability (by default this is $50\%$). Importance is computed for independent data examples, and repairs are performed independently as well.
For fork variants, data examples are divided into groups corresponding to their respective data providers. By default, we set the number of data providers to $100$. Each label inside a single group is flipped based on a fixed probability. However, this probability differs across data providers (going from $0\%$ to $100\%$). Importance is computed for individual providers, and when a provider is selected for repair, all its labels get repaired.

\begin{figure*}
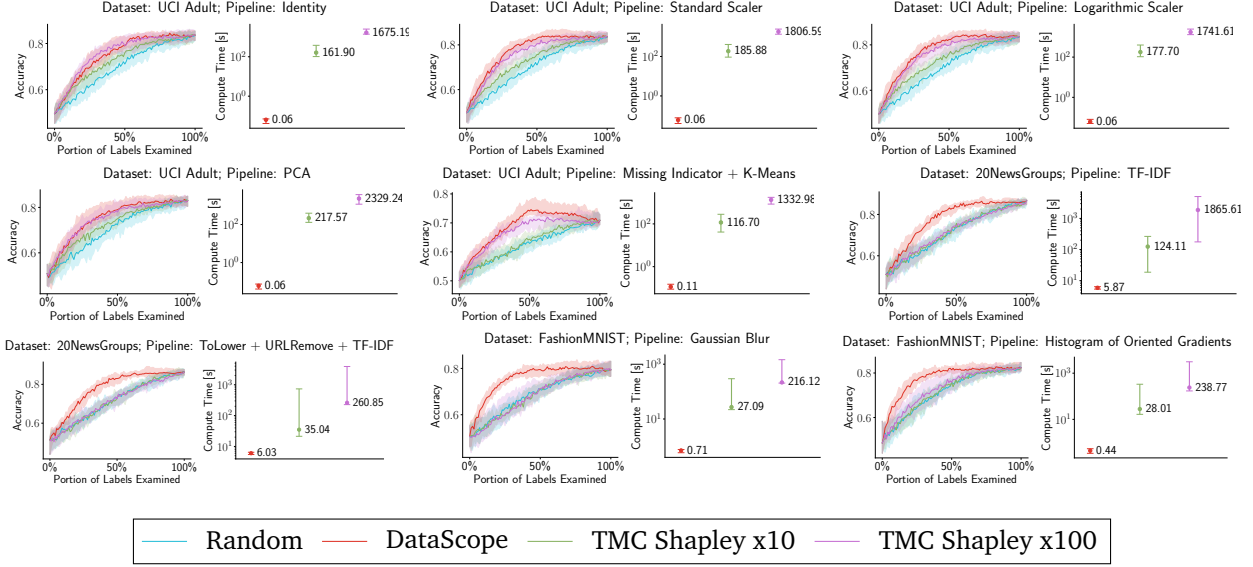

    \centering

    \def \vscenario {label-repair}
    \def \vtrainsize {1k}
    \def \vrepairgoal {accuracy}
    \def \vproviders {0}
    \def \vxpipetype {map}
    \def \vmodel {xgb}
    \def \vxmodel {XGBoost}
    
    \makeatletter
    \def \vdataset {UCI}
    \@for\vpipeline:={\vucipipelines}\do{
        \begin{subfigure}[b]{.32\linewidth}
            \def \vpath {figures/scenario=\vscenario/trainsize=\vtrainsize/repairgoal=\vrepairgoal/providers=\vproviders/model=\vmodel/dataset=\vdataset/pipeline=\vpipeline/report.figure.pdf}
            \includegraphics[width=\linewidth]{\vpath}
        \end{subfigure}
    }
    \def \vdataset {TwentyNewsGroups}
    \@for\vpipeline:={\vtwentynewsgroupspipelines}\do{
        \begin{subfigure}[b]{.32\linewidth}
            \def \vpath {figures/scenario=\vscenario/trainsize=\vtrainsize/repairgoal=\vrepairgoal/providers=\vproviders/model=\vmodel/dataset=\vdataset/pipeline=\vpipeline/report.figure.pdf}
            \includegraphics[width=\linewidth]{\vpath}
        \end{subfigure}
    }
    \def \vdataset {FashionMNIST}
    \@for\vpipeline:={\vfashionmnistpipelines}\do{
        \begin{subfigure}[b]{.32\linewidth}
            \def \vpath {figures/scenario=\vscenario/trainsize=\vtrainsize/repairgoal=\vrepairgoal/providers=\vproviders/model=\vmodel/dataset=\vdataset/pipeline=\vpipeline/report.figure.pdf}
            \includegraphics[width=\linewidth]{\vpath}
        \end{subfigure}
    }
    \makeatother

    \begin{subfigure}[c][][c]{\linewidth}
        \begin{center}
            \vspace{10pt}
            \begin{tikzpicture}
                \begin{customlegend}[legend columns=-1,legend style={column sep=5pt}]
                    \addlegendimage{myblue}\addlegendentry{Random}
                    \addlegendimage{myred}\addlegendentry{DataScope}
                    
                    \addlegendimage{mygreen}\addlegendentry{TMC Shapley x10}
                    \addlegendimage{mypink}\addlegendentry{TMC Shapley x100}
                \end{customlegend}
            \end{tikzpicture}
        \end{center}
    \end{subfigure}

    \caption{Label Repair experiment results over various combinations of datasets (\vtrainsize\xspace samples) and \vxpipetype\xspace pipelines. We optimize for \vrepairgoal\xspace. The model is \vxmodel\xspace.}
    \label{fig:exp-\vscenario-\vrepairgoal-\vxpipetype-\vmodel-\vtrainsize}
    \vspace{2em}
\end{figure*}

\inlinesection{Improving Accuracy with Label Repair.}
In this set of experiments, we aim to improve the accuracy as much as possible with as least as possible labels examined. 
We show the case for XGBoost in \autoref{fig:exp-label-repair-accuracy-map-xgb-1k}
while leave other scenarios (logistic regression, $K$-nearest neighbor, original pipelines and fork variants) to Appendix (\autoref{fig:exp-label-repair-accuracy-map-logreg-1k} to \autoref{fig:exp-label-repair-accuracy-fork-xgb-1k}. Figures differ with respect to the target model used (XGBoost, logistic regression, or $K$-nearest neighbor) and the type of pipeline (either map or fork). In each figure, we show results for different pairs of dataset and pipeline, and we measure the performance of the target model as well as the time it took to compute the importance scores.

We see that \sysname is significantly faster than 
TMC-based methods. The speed-up is in the order of $100\times$ to $1,000\times$ for models such as logistic regression. For models requiring slightly longer training time (e.g., XGBoost), the speed-up can be up to $10,000\times$.

In terms of quality, we see that \sysname is comparable with or better than the TMC-based methods (mostly for the logistic regression model),
both outperforming the \at{Random} repair method. In certain cases, \sysname,
despite its orders of magnitude speed-up,
also clearly dominates the TMC-based methods, especially
when the pipelines produce features of high-dimensional datasets (such as the text-based pipelines used for the \at{20NewsGroups} dataset and the image-based pipelines used for the \at{FashionMNIST} dataset).

\begin{figure*}
    \centering

    \def \vscenario {label-repair}
    \def \vtrainsize {1k}
    \def \vrepairgoal {fairness}
    \def \vproviders {0}
    \def \vdirtybias {0.0}
    \def \vxpipetype {map}
    \def \vmodel {xgb}
    \def \vxmodel {XGBoost}
    
    \makeatletter
    \def \vdataset {FolkUCI}
    \@for\vpipeline:={\vucipipelines}\do{
        \@for\vutility:={acc,eqodds}\do{
            \begin{subfigure}[b]{.49\linewidth}
                \def \vpath {figures/scenario=\vscenario/trainsize=\vtrainsize/repairgoal=\vrepairgoal/providers=\vproviders/model=\vmodel/dirtybias=\vdirtybias/dataset=\vdataset/pipeline=\vpipeline/utility=\vutility/report.figure.pdf}
                \includegraphics[width=\linewidth]{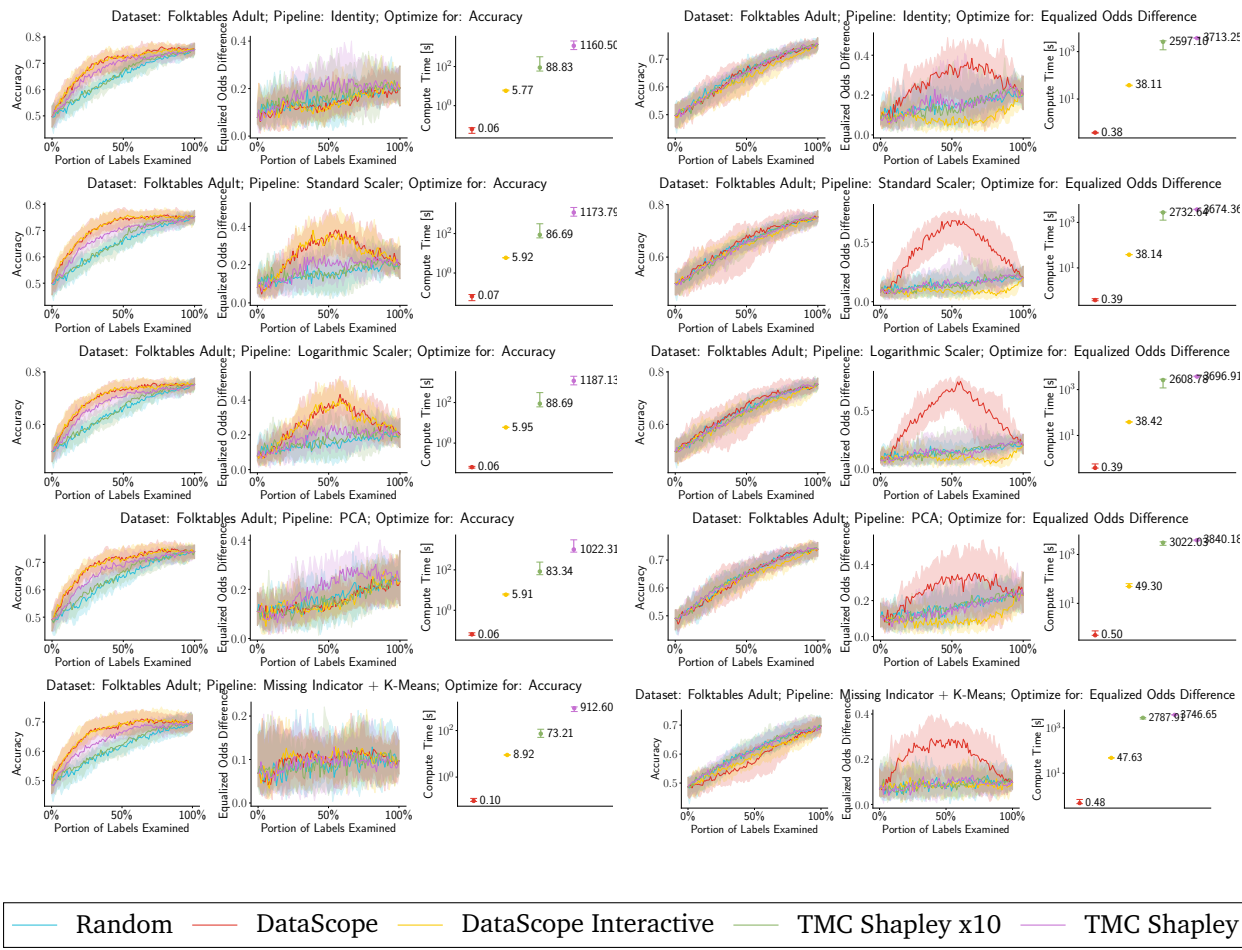}
            \end{subfigure}
        }
    }
    \makeatother

    \begin{subfigure}[c][][c]{\linewidth}
        \begin{center}
            \vspace{10pt}
            \begin{tikzpicture}
                \begin{customlegend}[legend columns=-1,legend style={column sep=5pt}]
                    \addlegendimage{myblue}\addlegendentry{Random}
                    \addlegendimage{myred}\addlegendentry{DataScope}
                    \addlegendimage{myyellow}\addlegendentry{DataScope Interactive}
                    \addlegendimage{mygreen}\addlegendentry{TMC Shapley x10}
                    \addlegendimage{mypink}\addlegendentry{TMC Shapley x100}
                \end{customlegend}
            \end{tikzpicture}
        \end{center}
    \end{subfigure}

    \caption{Label Repair experiment results over various combinations of datasets (\vtrainsize\xspace samples) and \vxpipetype\xspace pipelines. We optimize for \vrepairgoal\xspace. The model is \vxmodel\xspace.}
    \label{fig:exp-\vscenario-\vrepairgoal-\vxpipetype-\vmodel-\vtrainsize}
    \vspace{2em}
\end{figure*}

\inlinesection{Improving Accuracy and Fairness.}
We then explore the relationship between accuracy and fairness when performing label repairs. 
\autoref{fig:exp-label-repair-fairness-map-xgb-1k}
shows the result for XGBoost over original pipelines
and we leave other scenarios to the Appendix (\autoref{fig:exp-label-repair-fairness-map-logreg-1k} to \autoref{fig:exp-label-repair-fairness-fork-xgb-1k}). In these experiments we only use the two tabular datasets \at{UCI Adult} and \at{Folktables}, which have a `sex' feature that we use to compute group fairness using \emph{equalized odds difference}, one of the commonly used fairness metrics~\cite{moritz2016equality}.
We use equalized odds difference as 
the utility function for both \sysname and TMC-based methods.

We first see that being able to debug
specifically for fairness is important --- the left panel of \autoref{fig:exp-label-repair-fairness-map-xgb-1k} illustrates the behavior of 
optimizing for accuracy whereas the right 
panel illustrates the behavior of 
optimizing for fairness. 
In this example, the 100\% clean
dataset is unfair.
When optimizing
for accuracy, we see that the unfairness
of model can also increase. On the other hand, when taking fairness into consideration, \at{DataScope Interactive} is
able to maintain fairness while 
improving accuracy significantly ---
the unfairness increase of 
\at{DataScope Interactive} only 
happens at the very end of the cleaning
process, where all other ``fair''
data examples have already been cleaned.  This is likely due to the way that the equalized odds difference utility is approximated in \at{DataScope}. When computing the utility, we first make a choice on which $G_i$ and $G_j$ to choose in \autoref{eq:tpr-diff}, as well as a choice between $TPR_\Delta$ and $FPR_\Delta$ in \autoref{eq:eqodds-diff}; only then we compute the Shapley value. We assume that these choices are stable over the entire process of label repair. However, if these choices are ought to change, only \at{DataScopeInteractive} is able to make the necessary adjustment because the Shapley value is recomputed after every repair. 

In terms of speed, \sysname significantly outperforms TMC-based methods --- in the order of $100\times$ to $1,000\times$ for models like logistic regression and up to $10,000\times$ for XGBoost.
In terms of quality, \sysname is comparable to TMC-based methods, while 
\at{DataScope Interactive}, in certain cases, dramatically outperforms \sysname and TMC-based methods.
\at{DataScope Interactive} achieves much better fairness (measured by equalized odds difference, lower the better) 
while maintaining similar, if not better, accuracy
compared with other methods.
When optimizing for fairness we can observe that sometimes
non-interactive methods suffer in pipelines that use standard scalers. 
It might be possible that importance scores do not remain stable over the course of our data repair process. Because equalized odds difference is a non-trivial measure, even though it may work in the beginning of our process, it might mislead us in the wrong direction after some portion of the labels get repaired. As a result, being able to compute 
data importance frequently, which is enabled by 
our efficient algorithm, is crucial to effectively 
navigate and balance accuracy and fairness.

\inlinesection{Scalability.} We now evaluate the quality and speed of \sysname for larger training datasets. We test the runtime for various sizes of the training set ($10k$-$1M$), the validation set ($100$-$10k$), and the number of features ($100$-$1k$). As expected, the impact of training set size and validation set size is roughly linear. Furthermore, we see that even for large datasets, \sysname can compute Shapley scores in minutes.

When integrated into an intearctive data repair workflow, this could have a dramatic impact on the productivity of data scientists. We have clearly observed that Monte Carlo approaches do improve the efficiency of importance-based data debugging. However, given their lengthy runtime, one could argue that many users would likely prefer not to wait and consequently end up opting for the random repair approach. What \sysname offers is a viable alternative to random which is equally attainable while at the same time offering the significant efficiency gains provided by Shapley-based importance.

\begin{figure}[t!]
    \centering
    \begin{subfigure}[b]{.30\linewidth}
    \includegraphics[width=\linewidth]{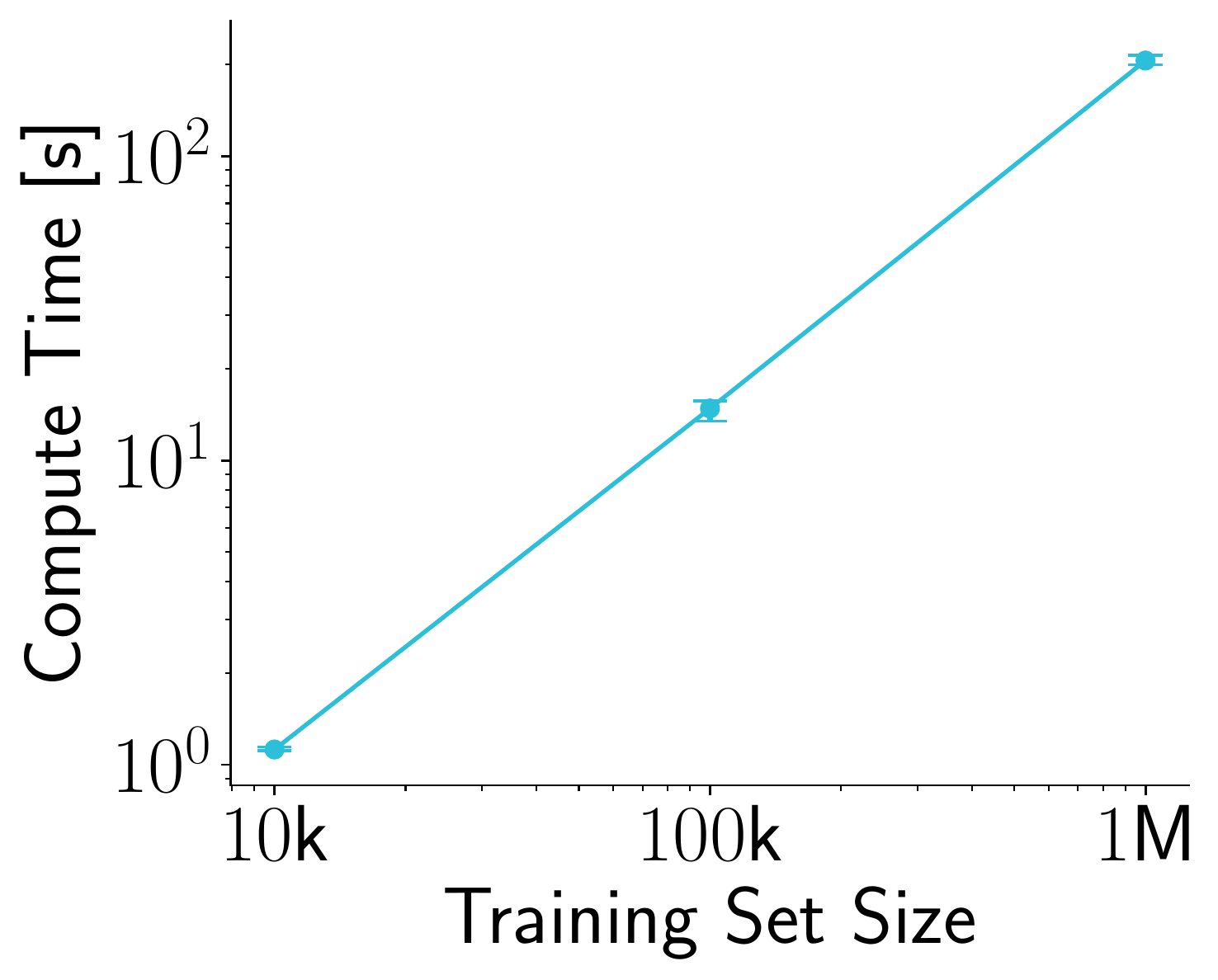}
    \end{subfigure}
    \begin{subfigure}[b]{.30\linewidth}
        \includegraphics[width=\linewidth]{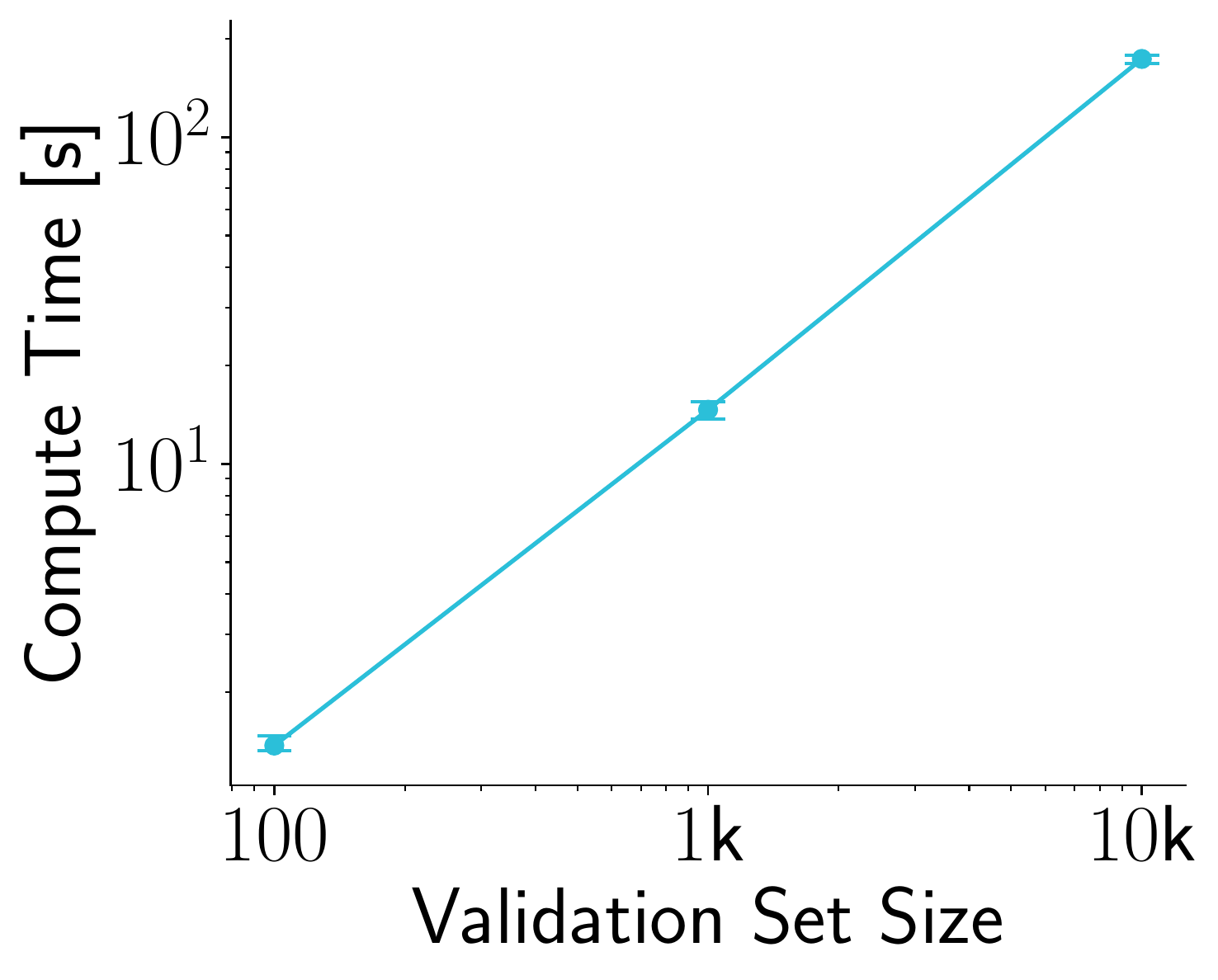}
    \end{subfigure}
    \begin{subfigure}[b]{.30\linewidth}
        \includegraphics[width=\linewidth]{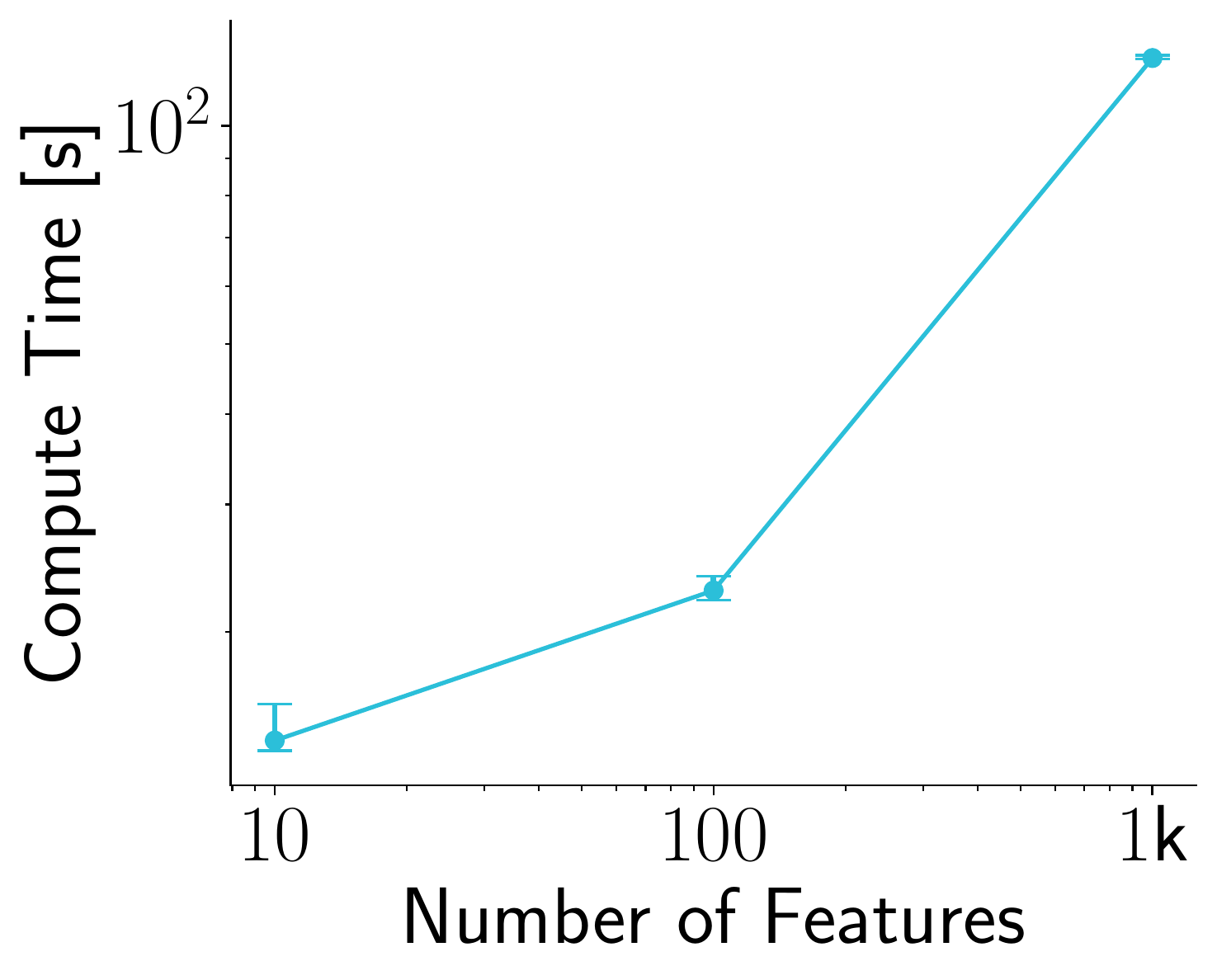}
    \end{subfigure}

    \caption{Scalability of \sysname}
    \label{fig:exp-compute-time}
    \vspace{2em}
\end{figure}

\section{Related Work}

\inlinesection{Analyzing ML models.}
Understanding model predictions and handling problems when they arise has been an important area since the early days. In recent years, this topic has gained even more traction and is better known under the terms explainability and interpretability~\cite{adadi2018peeking,guidotti2018survey,gilpin2018explaining}. Typically, the goal is to understand why a model is making some specific prediction for a data example. Some approaches use surrogate models to produce explanations~\cite{ribeiro2016should, krishnan2017palm}. In computer vision, saliency maps have gained prominence~\cite{zeiler2014visualizing, shrikumar2017learning}. Saliency maps can be seen as a type of a \emph{feature importance} approach to explainability, although they also aim at interpreting the internals of a model. Other feature importance frameworks have also been developed~\cite{sundararajan2017axiomatic,covert2021explaining}, and some even focus on Shapley-based feature importance~\cite{lundberg2017unified}.

Another approach to model interpretability can be referred to as \emph{data importance} (i.e. using training data examples to explain predictions). This can have broader applications, including data valuation~\cite{pei2020datapricing}. One important line of work expresses data importance in terms of \emph{influence fucnctions}~\cite{basu2020second, koh2019accuracy, koh2017understanding, sharchilev2018finding}. Another line of work expresses data importance using Shapley values. Some apply Monte Carlo methods for efficiently computing it~\cite{ghorbani2019data} and some take advantage of the KNN model~\cite{jia2019towards,Jia2019-kz}. The KNN model has also been used for computing an entropy-based importance method targeted specifically for the data-cleaning application~\cite{karlas2020nearest}. In general, all of the mentioned approaches focus on interpreting a single model, without the ability to efficiently analyze it in the context of an end-to-end ML pipeline.

\inlinesection{Analyzing Data Processing Pipelines.}
For decades, the data management community has been studying how to analyze data importance for data-processing pipelines through various forms of query analysis. Some broader approaches include: causal analysis of interventions to queries~\cite{meliou2014causality}, and reverse data management~\cite{meliou2011reverse}. Methods also differ in terms of what is the target of their explanation, that is, with respect to what is the query output being explained. Some methods target queried relations~\cite{kanagal2011sensitivity}. Others target specific predicates that make up a query~\cite{roy2015explaining, roy2014formal}. Finally, some methods target specific tuples in input relations~\cite{meliou2012tiresias}.

A prominent line of work employs \emph{data provenance} as a means to analyze data processing pipelines~\cite{buneman2001and}. \emph{Provenance semiring} represents a theoretical framework for dealing with provenance~\cite{green2007provenance}. This framework gives us as a theoretical foundation to develop algorithms for analyzing ML pipelines. This analysis typically requires us to operate in an exponentially large problem space. This can be made manageable by transforming provenance information to decision diagrams through a process known as \emph{knowledge compilation}~\cite{Jha2011}. However, this framework is not guaranteed to lead us to tractable solutions. Some types of queries have been shown to be quite difficult~\cite{amsterdamer2011provenance, amsterdamer2011limitations}. In this work, we demonstrate tractability of a concrete class of pipelines compried of both a set of feature extraction operators, as well as an ML model.

Recent research under the umbrella of ``mlinspect''~ \cite{grafberger2022data,grafberger2021mlinspect,schelter2022screening} details how to compute data provenance over end-to-end ML pipelines similar to the ones in the focus of this work, based on lightweight code instrumentation.

\inlinesection{Joint Analysis of End-to-End Pipelines.}
Joint analysis of machine learning pipelines is a relatively novel, but nevertheless, an important field~\cite{polyzotis2017data}. Systems such as Data X-Ray can debug data processing pipelines by finding groups of data errors that might have the same cause~\cite{wang2015xray}. Some work has been done in the area of end-to-end pipeline compilation to tensor operations~\cite{nakandala2020tensor}. A notable piece of work leverages influence functions as a method for analyzing pipelines comprising of a model and a post-processing query~\cite{wu2020complaint}. This work also leverages data provenance as a key ingredient. In general, there are indications that data provenance is going to be a key ingredient of future ML systems~\cite{agrawal2020cloudy,schelter2022screening}, something that our own system depends upon.
\section{Conclusion}

We present
\texttt{Ease.ML/DataScope}, the first system that efficiently computes Shapley values of training examples over an \emph{end-to-end} ML pipeline.
Our core contribution is a 
novel algorithmic framework  that computes Shapley value over a specific family of ML pipelines that we call \textit{canonical pipelines}, connecting decades of research on relational data provenance and recent advancement of machine earning. 
For many subfamilies of canonical pipelines, computing Shapley value is in \textsf{PTIME}, contrasting the exponential complexity of computing Shapley value in general.
We conduct extensive experiments illustrating different use cases and utilities.
Our results show that \texttt{DataScope} is up to four orders of magnitude faster over state-of-the-art Monte Carlo-based methods, while being comparably, and often even more, effective in data debugging.

\bibliographystyle{ACM-Reference-Format}
\bibliography{main}

\appendix 

\section{Implementation}

We integrate\footnote{\url{https://github.com/schelterlabs/arguseyes/blob/datascope/arguseyes/refinements/_data_valuation.py}} our Shapley value computation approach into the {\em ArgusEyes}~\cite{schelter2022screening} platform. ArgusEyes leverages the {\em mlinspect}~\cite{grafberger2022data,grafberger2021mlinspect} library to execute and instrument Python-based ML pipelines that combine code from popular data science libraries such as pandas, scikit-learn and keras. During execution, mlinspect extracts a ``logical query plan'' of the pipeline operations, modeling them as a dataflow computation with relational operations (e.g., originating from pandas) and ML-specific operations (e.g., feature encoders) which are treated as extended projections. Furthermore, ArgusEyes captures and materialises the relational inputs of the pipeline (e.g., CSV files read via pandas) and the numerical outputs of the pipeline (e.g., the labels and feature matrix for the training data). 

The existing abstractions in mlinspect map well to our pipeline model from \autoref{sec:problem-formal}, where we postulate that a pipeline maps a set of relational inputs $\mathcal{D}_{tr} = \{ \mathcal{D}_{e}, \mathcal{D}_{s_1},\dots,\mathcal{D}_{s_k} \}$ to vectorised labeled training examples $\{z_i = (x_i, y_i)\}$ for a subsequent ML model. Furthermore, mlinspect has built-in support for computing polynomials for the why-provenance~\cite{green2007provenance} of its outputs. This provenance information allows us to match a pipeline to its canonical counterpart as defined in \autoref{sec:approach-characteristics} and apply our techniques from \autoref{sec:framework} to compute Shapley values for the input data.

\autoref{lst:arguseyes} depicts a simplified implementation of data valuation for map pipelines in ArgusEyes. As discussed, the ArgusEyes platform executes the pipeline and extracts and materialises its relational \texttt{inputs}, its numerical \texttt{outputs} and the corresponding \texttt{provenance} polynomials. First, we compute Shapley values for the labeled rows $\{z_i = (x_i, y_i)\}$ of the training feature matrix produced by the pipeline, based on our previously published efficient KNN Shapley method~\cite{Jia2019-kz} (lines~10-13). Next, we retrieve the materialised relational input \texttt{fact\_table} for $\mathcal{D}_e$ (the ``fact table'' in cases of multiple inputs in a star schema), as well as the provenance polynomials \texttt{provenance\_fact\_table} for $\mathcal{D}_e$ and \texttt{provenance\_X\_train} for the training samples $\{z_i\}$~(lines~15-17). Finally, we annotate the rows of the \texttt{fact\_table} with a new column \texttt{shapley\_value} where we store the computed Shapley value for each input row. We assign the values by matching the provenance polynomials of $\mathcal{D}_e$ and $\{z_i = (x_i, y_i)\}$~(lines~19~24).

\begin{Python}[frame=none,captionpos=b,texcl=true,numbers=left,xleftmargin=.25in,caption={Simplified implementation of data valuation for map pipelines in ArgusEyes.},label={lst:arguseyes}]
class DataValuation():
 def compute(self, inputs, outputs, provenance):
  # Access captured pipeline outputs 
  # (train/test features and labels)
  X_train = outputs[Output.X_TRAIN]
  X_test = outputs[Output.X_TEST]
  y_train = outputs[Output.Y_TRAIN]
  y_test = outputs[Output.Y_TEST]
  # Compute Shapley values via KNN approximation
  shapley_values = self._shapley_knn(
    X_train, y_train, self.k,
    X_test[:self.num_test_samples, :], 
    y_test[:self.num_test_samples, :])
  # Input data and provenance
  fact_table = inputs[Input.FACT_TABLE]
  provenance_fact_table = provenance[Input.FACT_TABLE]      	
  provenance_X_train = provenance[Output.X_TRAIN]    
  # Annotate input tuples with their Shapley values	
  for polynomial, shapley_value in 
    zip(provenance_X_train, shapley_values):
    for entry in polynomial:
      if entry.input_id == fact_table.input_id:
    	row = provenance_fact_table.row_id(entry.tuple_id)
        fact_table.at[row, 'shapley_value'] = shapley_value         
\end{Python}              

We provide an executable end-to-end example for data valuation over complex pipelines in the form of a jupyter notebook at \textcolor{blue}{\url{https://github.com/schelterlabs/arguseyes/blob/datascope/arguseyes/example_pipelines/demo-shapley-pipeline.ipynb}}, which shows how to leverage ArgusEyes to identify mislabeled samples in a computer vision pipeline\footnote{\url{https://github.com/schelterlabs/arguseyes/blob/datascope/arguseyes/example_pipelines/product-images.py}} on images of fashion products.

\section{Additional Experiments}

\autoref{fig:exp-label-repair-accuracy-map-logreg-1k} to \autoref{fig:exp-label-repair-accuracy-fork-xgb-1k};
\autoref{fig:exp-label-repair-fairness-map-logreg-1k} to \autoref{fig:exp-label-repair-fairness-fork-xgb-1k}

\begin{figure*}
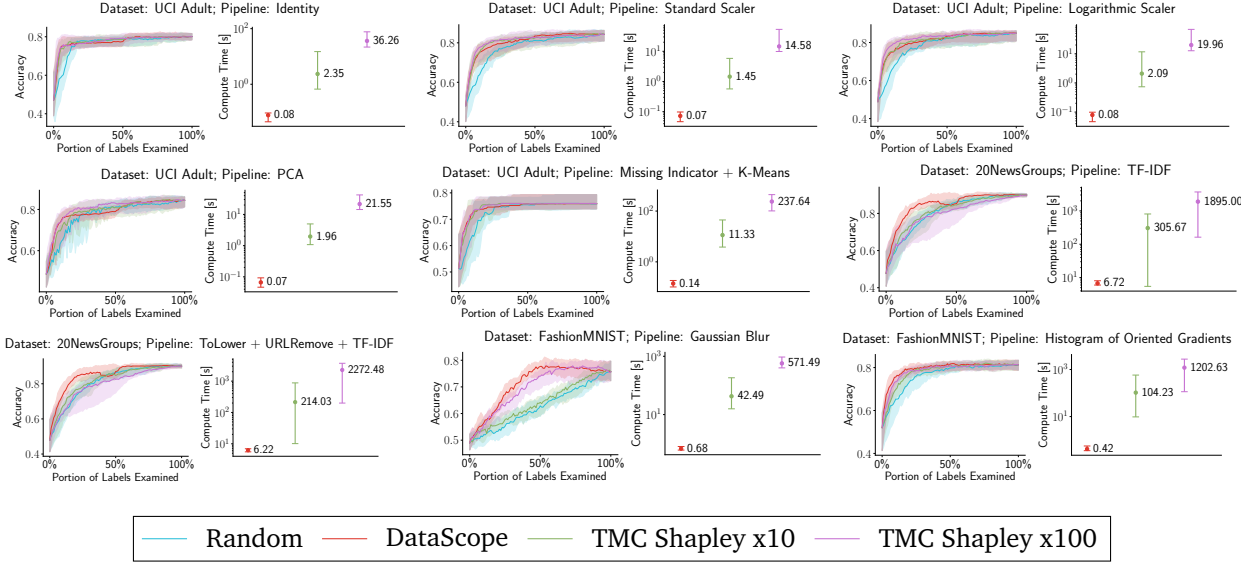

    \centering

    \def \vscenario {label-repair}
    \def \vtrainsize {1k}
    \def \vrepairgoal {accuracy}
    \def \vproviders {0}
    \def \vxpipetype {map}
    \def \vmodel {logreg}
    \def \vxmodel {logistic regression}
    
    \makeatletter
    \def \vdataset {UCI}
    \@for\vpipeline:={\vucipipelines}\do{
        \begin{subfigure}[b]{.32\linewidth}
            \def \vpath {figures/scenario=\vscenario/trainsize=\vtrainsize/repairgoal=\vrepairgoal/providers=\vproviders/model=\vmodel/dataset=\vdataset/pipeline=\vpipeline/report.figure.pdf}
            \includegraphics[width=\linewidth]{\vpath}
        \end{subfigure}
    }
    \def \vdataset {TwentyNewsGroups}
    \@for\vpipeline:={\vtwentynewsgroupspipelines}\do{
        \begin{subfigure}[b]{.32\linewidth}
            \def \vpath {figures/scenario=\vscenario/trainsize=\vtrainsize/repairgoal=\vrepairgoal/providers=\vproviders/model=\vmodel/dataset=\vdataset/pipeline=\vpipeline/report.figure.pdf}
            \includegraphics[width=\linewidth]{\vpath}
        \end{subfigure}
    }
    \def \vdataset {FashionMNIST}
    \@for\vpipeline:={\vfashionmnistpipelines}\do{
        \begin{subfigure}[b]{.32\linewidth}
            \def \vpath {figures/scenario=\vscenario/trainsize=\vtrainsize/repairgoal=\vrepairgoal/providers=\vproviders/model=\vmodel/dataset=\vdataset/pipeline=\vpipeline/report.figure.pdf}
            \includegraphics[width=\linewidth]{\vpath}
        \end{subfigure}
    }
    \makeatother

    \begin{subfigure}[c][][c]{\linewidth}
        \begin{center}
            \vspace{10pt}
            \begin{tikzpicture}
                \begin{customlegend}[legend columns=-1,legend style={column sep=5pt}]
                    \addlegendimage{myblue}\addlegendentry{Random}
                    \addlegendimage{myred}\addlegendentry{DataScope}
                    
                    \addlegendimage{mygreen}\addlegendentry{TMC Shapley x10}
                    \addlegendimage{mypink}\addlegendentry{TMC Shapley x100}
                \end{customlegend}
            \end{tikzpicture}
        \end{center}
    \end{subfigure}

    \caption{Label Repair experiment results over various combinations of datasets (\vtrainsize\xspace samples) and \vxpipetype\xspace pipelines. We optimize for \vrepairgoal\xspace. The model is \vxmodel\xspace.}
    \label{fig:exp-\vscenario-\vrepairgoal-\vxpipetype-\vmodel-\vtrainsize}
\end{figure*}

\begin{figure*}
    \centering

    \def \vscenario {label-repair}
    \def \vtrainsize {1k}
    \def \vrepairgoal {accuracy}
    \def \vproviders {0}
    \def \vxpipetype {map}
    \def \vmodel {knn}
    \def \vxmodel {K-nearest neighbor}
    
    \makeatletter
    \def \vdataset {UCI}
    \@for\vpipeline:={\vucipipelines}\do{
        \begin{subfigure}[b]{.32\linewidth}
            \def \vpath {figures/scenario=\vscenario/trainsize=\vtrainsize/repairgoal=\vrepairgoal/providers=\vproviders/model=\vmodel/dataset=\vdataset/pipeline=\vpipeline/report.figure.pdf}
            \includegraphics[width=\linewidth]{\vpath}
        \end{subfigure}
    }
    \def \vdataset {TwentyNewsGroups}
    \@for\vpipeline:={\vtwentynewsgroupspipelines}\do{
        \begin{subfigure}[b]{.32\linewidth}
            \def \vpath {figures/scenario=\vscenario/trainsize=\vtrainsize/repairgoal=\vrepairgoal/providers=\vproviders/model=\vmodel/dataset=\vdataset/pipeline=\vpipeline/report.figure.pdf}
            \includegraphics[width=\linewidth]{\vpath}
        \end{subfigure}
    }
    \def \vdataset {FashionMNIST}
    \@for\vpipeline:={\vfashionmnistpipelines}\do{
        \begin{subfigure}[b]{.32\linewidth}
            \def \vpath {figures/scenario=\vscenario/trainsize=\vtrainsize/repairgoal=\vrepairgoal/providers=\vproviders/model=\vmodel/dataset=\vdataset/pipeline=\vpipeline/report.figure.pdf}
            \includegraphics[width=\linewidth]{\vpath}
        \end{subfigure}
    }
    \makeatother

    \begin{subfigure}[c][][c]{\linewidth}
        \begin{center}
            \vspace{10pt}
            \begin{tikzpicture}
                \begin{customlegend}[legend columns=-1,legend style={column sep=5pt}]
                    \addlegendimage{myblue}\addlegendentry{Random}
                    \addlegendimage{myred}\addlegendentry{DataScope}
                    
                    \addlegendimage{mygreen}\addlegendentry{TMC Shapley x10}
                    \addlegendimage{mypink}\addlegendentry{TMC Shapley x100}
                \end{customlegend}
            \end{tikzpicture}
        \end{center}
    \end{subfigure}

    \caption{Label Repair experiment results over various combinations of datasets (\vtrainsize\xspace samples) and \vxpipetype\xspace pipelines. We optimize for \vrepairgoal\xspace. The model is \vxmodel\xspace.}
    \label{fig:exp-\vscenario-\vrepairgoal-\vxpipetype-\vmodel-\vtrainsize}
\end{figure*}

\begin{figure*}
    \centering

    \def \vscenario {label-repair}
    \def \vtrainsize {1k}
    \def \vrepairgoal {accuracy}
    \def \vproviders {100}
    \def \vxpipetype {fork}
    \def \vmodel {logreg}
    \def \vxmodel {logistic regression}
    
    \makeatletter
    \def \vdataset {UCI}
    \@for\vpipeline:={\vucipipelines}\do{
        \begin{subfigure}[b]{.32\linewidth}
            \def \vpath {figures/scenario=\vscenario/trainsize=\vtrainsize/repairgoal=\vrepairgoal/providers=\vproviders/model=\vmodel/dataset=\vdataset/pipeline=\vpipeline/report.figure.pdf}
            \includegraphics[width=\linewidth]{\vpath}
        \end{subfigure}
    }
    \def \vdataset {TwentyNewsGroups}
    \@for\vpipeline:={\vtwentynewsgroupspipelines}\do{
        \begin{subfigure}[b]{.32\linewidth}
            \def \vpath {figures/scenario=\vscenario/trainsize=\vtrainsize/repairgoal=\vrepairgoal/providers=\vproviders/model=\vmodel/dataset=\vdataset/pipeline=\vpipeline/report.figure.pdf}
            \includegraphics[width=\linewidth]{\vpath}
        \end{subfigure}
    }
    \def \vdataset {FashionMNIST}
    \@for\vpipeline:={\vfashionmnistpipelines}\do{
        \begin{subfigure}[b]{.32\linewidth}
            \def \vpath {figures/scenario=\vscenario/trainsize=\vtrainsize/repairgoal=\vrepairgoal/providers=\vproviders/model=\vmodel/dataset=\vdataset/pipeline=\vpipeline/report.figure.pdf}
            \includegraphics[width=\linewidth]{\vpath}
        \end{subfigure}
    }
    \makeatother

    \begin{subfigure}[c][][c]{\linewidth}
        \begin{center}
            \vspace{10pt}
            \begin{tikzpicture}
                \begin{customlegend}[legend columns=-1,legend style={column sep=5pt}]
                    \addlegendimage{myblue}\addlegendentry{Random}
                    \addlegendimage{myred}\addlegendentry{DataScope}
                    
                    \addlegendimage{mygreen}\addlegendentry{TMC Shapley x10}
                    \addlegendimage{mypink}\addlegendentry{TMC Shapley x100}
                \end{customlegend}
            \end{tikzpicture}
        \end{center}
    \end{subfigure}

    \caption{Label Repair experiment results over various combinations of datasets (\vtrainsize\xspace samples) and \vxpipetype\xspace pipelines. We optimize for \vrepairgoal\xspace. The model is \vxmodel\xspace.}
    \label{fig:exp-\vscenario-\vrepairgoal-\vxpipetype-\vmodel-\vtrainsize}
\end{figure*}

\begin{figure*}
    \centering

    \def \vscenario {label-repair}
    \def \vtrainsize {1k}
    \def \vrepairgoal {accuracy}
    \def \vproviders {100}
    \def \vxpipetype {fork}
    \def \vmodel {knn}
    \def \vxmodel {K-nearest neighbor}
    
    \makeatletter
    \def \vdataset {UCI}
    \@for\vpipeline:={\vucipipelines}\do{
        \begin{subfigure}[b]{.32\linewidth}
            \def \vpath {figures/scenario=\vscenario/trainsize=\vtrainsize/repairgoal=\vrepairgoal/providers=\vproviders/model=\vmodel/dataset=\vdataset/pipeline=\vpipeline/report.figure.pdf}
            \includegraphics[width=\linewidth]{\vpath}
        \end{subfigure}
    }
    \def \vdataset {TwentyNewsGroups}
    \@for\vpipeline:={\vtwentynewsgroupspipelines}\do{
        \begin{subfigure}[b]{.32\linewidth}
            \def \vpath {figures/scenario=\vscenario/trainsize=\vtrainsize/repairgoal=\vrepairgoal/providers=\vproviders/model=\vmodel/dataset=\vdataset/pipeline=\vpipeline/report.figure.pdf}
            \includegraphics[width=\linewidth]{\vpath}
        \end{subfigure}
    }
    \def \vdataset {FashionMNIST}
    \@for\vpipeline:={\vfashionmnistpipelines}\do{
        \begin{subfigure}[b]{.32\linewidth}
            \def \vpath {figures/scenario=\vscenario/trainsize=\vtrainsize/repairgoal=\vrepairgoal/providers=\vproviders/model=\vmodel/dataset=\vdataset/pipeline=\vpipeline/report.figure.pdf}
            \includegraphics[width=\linewidth]{\vpath}
        \end{subfigure}
    }
    \makeatother

    \begin{subfigure}[c][][c]{\linewidth}
        \begin{center}
            \vspace{10pt}
            \begin{tikzpicture}
                \begin{customlegend}[legend columns=-1,legend style={column sep=5pt}]
                    \addlegendimage{myblue}\addlegendentry{Random}
                    \addlegendimage{myred}\addlegendentry{DataScope}
                    
                    \addlegendimage{mygreen}\addlegendentry{TMC Shapley x10}
                    \addlegendimage{mypink}\addlegendentry{TMC Shapley x100}
                \end{customlegend}
            \end{tikzpicture}
        \end{center}
    \end{subfigure}

    \caption{Label Repair experiment results over various combinations of datasets (\vtrainsize\xspace samples) and \vxpipetype\xspace pipelines. We optimize for \vrepairgoal\xspace. The model is \vxmodel\xspace.}
    \label{fig:exp-\vscenario-\vrepairgoal-\vxpipetype-\vmodel-\vtrainsize}
\end{figure*}

\begin{figure*}
    \centering

    \def \vscenario {label-repair}
    \def \vtrainsize {1k}
    \def \vrepairgoal {accuracy}
    \def \vproviders {100}
    \def \vxpipetype {fork}
    \def \vmodel {xgb}
    \def \vxmodel {XGBoost}
    
    \makeatletter
    \def \vdataset {UCI}
    \@for\vpipeline:={\vucipipelines}\do{
        \begin{subfigure}[b]{.32\linewidth}
            \def \vpath {figures/scenario=\vscenario/trainsize=\vtrainsize/repairgoal=\vrepairgoal/providers=\vproviders/model=\vmodel/dataset=\vdataset/pipeline=\vpipeline/report.figure.pdf}
            \includegraphics[width=\linewidth]{\vpath}
        \end{subfigure}
    }
    \def \vdataset {TwentyNewsGroups}
    \@for\vpipeline:={\vtwentynewsgroupspipelines}\do{
        \begin{subfigure}[b]{.32\linewidth}
            \def \vpath {figures/scenario=\vscenario/trainsize=\vtrainsize/repairgoal=\vrepairgoal/providers=\vproviders/model=\vmodel/dataset=\vdataset/pipeline=\vpipeline/report.figure.pdf}
            \includegraphics[width=\linewidth]{\vpath}
        \end{subfigure}
    }
    \def \vdataset {FashionMNIST}
    \@for\vpipeline:={\vfashionmnistpipelines}\do{
        \begin{subfigure}[b]{.32\linewidth}
            \def \vpath {figures/scenario=\vscenario/trainsize=\vtrainsize/repairgoal=\vrepairgoal/providers=\vproviders/model=\vmodel/dataset=\vdataset/pipeline=\vpipeline/report.figure.pdf}
            \includegraphics[width=\linewidth]{\vpath}
        \end{subfigure}
    }
    \makeatother

    \begin{subfigure}[c][][c]{\linewidth}
        \begin{center}
            \vspace{10pt}
            \begin{tikzpicture}
                \begin{customlegend}[legend columns=-1,legend style={column sep=5pt}]
                    \addlegendimage{myblue}\addlegendentry{Random}
                    \addlegendimage{myred}\addlegendentry{DataScope}
                    
                    \addlegendimage{mygreen}\addlegendentry{TMC Shapley x10}
                    \addlegendimage{mypink}\addlegendentry{TMC Shapley x100}
                \end{customlegend}
            \end{tikzpicture}
        \end{center}
    \end{subfigure}

    \caption{Label Repair experiment results over various combinations of datasets (\vtrainsize\xspace samples) and \vxpipetype\xspace pipelines. We optimize for \vrepairgoal\xspace. The model is \vxmodel\xspace.}
    \label{fig:exp-\vscenario-\vrepairgoal-\vxpipetype-\vmodel-\vtrainsize}
\end{figure*}

\begin{figure*}
    \centering

    \def \vscenario {label-repair}
    \def \vtrainsize {1k}
    \def \vrepairgoal {fairness}
    \def \vproviders {0}
    \def \vdirtybias {0.0}
    \def \vxpipetype {map}
    \def \vmodel {logreg}
    \def \vxmodel {logistic regression}
    
    \makeatletter
    \def \vdataset {FolkUCI}
    \@for\vpipeline:={\vucipipelines}\do{
        \@for\vutility:={acc,eqodds}\do{
            \begin{subfigure}[b]{.49\linewidth}
                \def \vpath {figures/scenario=\vscenario/trainsize=\vtrainsize/repairgoal=\vrepairgoal/providers=\vproviders/model=\vmodel/dirtybias=\vdirtybias/dataset=\vdataset/pipeline=\vpipeline/utility=\vutility/report.figure.pdf}
                \includegraphics[width=\linewidth]{\vpath}
            \end{subfigure}
        }
    }
    \makeatother

    \begin{subfigure}[c][][c]{\linewidth}
        \begin{center}
            \vspace{10pt}
            \begin{tikzpicture}
                \begin{customlegend}[legend columns=-1,legend style={column sep=5pt}]
                    \addlegendimage{myblue}\addlegendentry{Random}
                    \addlegendimage{myred}\addlegendentry{DataScope}
                    \addlegendimage{myyellow}\addlegendentry{DataScope Interactive}
                    \addlegendimage{mygreen}\addlegendentry{TMC Shapley x10}
                    \addlegendimage{mypink}\addlegendentry{TMC Shapley x100}
                \end{customlegend}
            \end{tikzpicture}
        \end{center}
    \end{subfigure}

    \caption{Label Repair experiment results over various combinations of datasets (\vtrainsize\xspace samples) and \vxpipetype\xspace pipelines. We optimize for \vrepairgoal\xspace. The model is \vxmodel\xspace.}
    \label{fig:exp-\vscenario-\vrepairgoal-\vxpipetype-\vmodel-\vtrainsize}
\end{figure*}

\begin{figure*}
    \centering

    \def \vscenario {label-repair}
    \def \vtrainsize {1k}
    \def \vrepairgoal {fairness}
    \def \vproviders {0}
    \def \vdirtybias {0.0}
    \def \vxpipetype {map}
    \def \vmodel {knn}
    \def \vxmodel {K-nearest neighbor}
    
    \makeatletter
    \def \vdataset {FolkUCI}
    \@for\vpipeline:={\vucipipelines}\do{
        \@for\vutility:={acc,eqodds}\do{
            \begin{subfigure}[b]{.49\linewidth}
                \def \vpath {figures/scenario=\vscenario/trainsize=\vtrainsize/repairgoal=\vrepairgoal/providers=\vproviders/model=\vmodel/dirtybias=\vdirtybias/dataset=\vdataset/pipeline=\vpipeline/utility=\vutility/report.figure.pdf}
                \includegraphics[width=\linewidth]{\vpath}
            \end{subfigure}
        }
    }
    \makeatother

    \begin{subfigure}[c][][c]{\linewidth}
        \begin{center}
            \vspace{10pt}
            \begin{tikzpicture}
                \begin{customlegend}[legend columns=-1,legend style={column sep=5pt}]
                    \addlegendimage{myblue}\addlegendentry{Random}
                    \addlegendimage{myred}\addlegendentry{DataScope}
                    \addlegendimage{myyellow}\addlegendentry{DataScope Interactive}
                    \addlegendimage{mygreen}\addlegendentry{TMC Shapley x10}
                    \addlegendimage{mypink}\addlegendentry{TMC Shapley x100}
                \end{customlegend}
            \end{tikzpicture}
        \end{center}
    \end{subfigure}

    \caption{Label Repair experiment results over various combinations of datasets (\vtrainsize\xspace samples) and \vxpipetype\xspace pipelines. We optimize for \vrepairgoal\xspace. The model is \vxmodel\xspace.}
    \label{fig:exp-\vscenario-\vrepairgoal-\vxpipetype-\vmodel-\vtrainsize}
\end{figure*}

\begin{figure*}
    \centering

    \def \vscenario {label-repair}
    \def \vtrainsize {1k}
    \def \vrepairgoal {fairness}
    \def \vproviders {100}
    \def \vdirtybias {0.0}
    \def \vxpipetype {fork}
    \def \vmodel {logreg}
    \def \vxmodel {logistic regression}
    
    \makeatletter
    \def \vdataset {FolkUCI}
    \@for\vpipeline:={\vucipipelines}\do{
        \@for\vutility:={acc,eqodds}\do{
            \begin{subfigure}[b]{.49\linewidth}
                \def \vpath {figures/scenario=\vscenario/trainsize=\vtrainsize/repairgoal=\vrepairgoal/providers=\vproviders/model=\vmodel/dirtybias=\vdirtybias/dataset=\vdataset/pipeline=\vpipeline/utility=\vutility/report.figure.pdf}
                \includegraphics[width=\linewidth]{\vpath}
            \end{subfigure}
        }
    }
    \makeatother

    \begin{subfigure}[c][][c]{\linewidth}
        \begin{center}
            \vspace{10pt}
            \begin{tikzpicture}
                \begin{customlegend}[legend columns=-1,legend style={column sep=5pt}]
                    \addlegendimage{myblue}\addlegendentry{Random}
                    \addlegendimage{myred}\addlegendentry{DataScope}
                    \addlegendimage{myyellow}\addlegendentry{DataScope Interactive}
                    \addlegendimage{mygreen}\addlegendentry{TMC Shapley x10}
                    \addlegendimage{mypink}\addlegendentry{TMC Shapley x100}
                \end{customlegend}
            \end{tikzpicture}
        \end{center}
    \end{subfigure}

    \caption{Label Repair experiment results over various combinations of datasets (\vtrainsize\xspace samples) and \vxpipetype\xspace pipelines. We optimize for \vrepairgoal\xspace. The model is \vxmodel\xspace.}
    \label{fig:exp-\vscenario-\vrepairgoal-\vxpipetype-\vmodel-\vtrainsize}
\end{figure*}

\begin{figure*}
    \centering

    \def \vscenario {label-repair}
    \def \vtrainsize {1k}
    \def \vrepairgoal {fairness}
    \def \vproviders {100}
    \def \vdirtybias {0.0}
    \def \vxpipetype {fork}
    \def \vmodel {knn}
    \def \vxmodel {K-nearest neighbor}
    
    \makeatletter
    \def \vdataset {FolkUCI}
    \@for\vpipeline:={\vucipipelines}\do{
        \@for\vutility:={acc,eqodds}\do{
            \begin{subfigure}[b]{.49\linewidth}
                \def \vpath {figures/scenario=\vscenario/trainsize=\vtrainsize/repairgoal=\vrepairgoal/providers=\vproviders/model=\vmodel/dirtybias=\vdirtybias/dataset=\vdataset/pipeline=\vpipeline/utility=\vutility/report.figure.pdf}
                \includegraphics[width=\linewidth]{\vpath}
            \end{subfigure}
        }
    }
    \makeatother

    \begin{subfigure}[c][][c]{\linewidth}
        \begin{center}
            \vspace{10pt}
            \begin{tikzpicture}
                \begin{customlegend}[legend columns=-1,legend style={column sep=5pt}]
                    \addlegendimage{myblue}\addlegendentry{Random}
                    \addlegendimage{myred}\addlegendentry{DataScope}
                    \addlegendimage{myyellow}\addlegendentry{DataScope Interactive}
                    \addlegendimage{mygreen}\addlegendentry{TMC Shapley x10}
                    \addlegendimage{mypink}\addlegendentry{TMC Shapley x100}
                \end{customlegend}
            \end{tikzpicture}
        \end{center}
    \end{subfigure}

    \caption{Label Repair experiment results over various combinations of datasets (\vtrainsize\xspace samples) and \vxpipetype\xspace pipelines. We optimize for \vrepairgoal\xspace. The model is \vxmodel\xspace.}
    \label{fig:exp-\vscenario-\vrepairgoal-\vxpipetype-\vmodel-\vtrainsize}
\end{figure*}

\begin{figure*}
    \centering

    \def \vscenario {label-repair}
    \def \vtrainsize {1k}
    \def \vrepairgoal {fairness}
    \def \vproviders {100}
    \def \vdirtybias {0.0}
    \def \vxpipetype {fork}
    \def \vmodel {xgb}
    \def \vxmodel {XGBoost}
    
    \makeatletter
    \def \vdataset {FolkUCI}
    \@for\vpipeline:={\vucipipelines}\do{
        \@for\vutility:={acc,eqodds}\do{
            \begin{subfigure}[b]{.49\linewidth}
                \def \vpath {figures/scenario=\vscenario/trainsize=\vtrainsize/repairgoal=\vrepairgoal/providers=\vproviders/model=\vmodel/dirtybias=\vdirtybias/dataset=\vdataset/pipeline=\vpipeline/utility=\vutility/report.figure.pdf}
                \includegraphics[width=\linewidth]{\vpath}
            \end{subfigure}
        }
    }
    \makeatother

    \begin{subfigure}[c][][c]{\linewidth}
        \begin{center}
            \vspace{10pt}
            \begin{tikzpicture}
                \begin{customlegend}[legend columns=-1,legend style={column sep=5pt}]
                    \addlegendimage{myblue}\addlegendentry{Random}
                    \addlegendimage{myred}\addlegendentry{DataScope}
                    \addlegendimage{myyellow}\addlegendentry{DataScope Interactive}
                    \addlegendimage{mygreen}\addlegendentry{TMC Shapley x10}
                    \addlegendimage{mypink}\addlegendentry{TMC Shapley x100}
                \end{customlegend}
            \end{tikzpicture}
        \end{center}
    \end{subfigure}

    \caption{Label Repair experiment results over various combinations of datasets (\vtrainsize\xspace samples) and \vxpipetype\xspace pipelines. We optimize for \vrepairgoal\xspace. The model is \vxmodel\xspace.}
    \label{fig:exp-\vscenario-\vrepairgoal-\vxpipetype-\vmodel-\vtrainsize}
\end{figure*}

\section{Proofs and Details}

\subsection{Proof of \autoref{thm:decision-diagram}} \label{sec:apx-decision-diagram-proof}

\inlinesection{Model Counting for ADD's.} We start off by proving that \autoref{eq:dd-count-recursion} correctly performs model counting.

\begin{lemma} \label{lem:add-count-correct}
For a given node $n \in \mathcal{N}$ of an ADD and a given value $e \in \mathcal{E}$, \autoref{eq:dd-count-recursion} correctly computes $\mathrm{count}_e (n)$ which returns the number of assignments $v \in \mathcal{V}_{A}$ such that $\mathrm{eval}_v (n) = e$. Furthermore, when computing $\mathrm{count}_e (n)$ for any $n \in \mathcal{N}$, the number of computational steps is bounded by $O(|\mathcal{N}| \cdot |\mathcal{E}|)$.
\end{lemma}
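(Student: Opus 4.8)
The plan is to prove the two assertions separately: correctness of the recursion by structural induction over the DAG, and the running-time bound by a memoization argument. Before starting, I would pin down precisely what is being counted. Because the diagram is \emph{full} and \emph{ordered}, evaluating $\mathrm{eval}_v(n)$ only reads the coordinates of $v$ on the variables that appear on the root-to-sink paths emanating from $n$; call this set $A_{\downarrow n}$, so that $A_{\downarrow n} = \{a(n)\} \cup A_{\downarrow c_L(n)}$ and, crucially, $A_{\downarrow c_L(n)} = A_{\downarrow c_H(n)}$ (the two children sit at the same level in the order). This is exactly the domain appearing in the definition \autoref{eq:dd-count-definition} of $\mathrm{count}_e(n)$. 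I would therefore prove the sharper statement that $\mathrm{count}_e(n)$ equals $|\{v \in \mathcal{V}_{A_{\downarrow n}} : \mathrm{eval}_v(n) = e\}|$; for the root node $A_{\downarrow n}$ is all of $A$, recovering the claim about $\mathcal{V}_A$.

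For correctness I would induct on $|A_{\downarrow n}|$ (equivalently, on the longest path from $n$ to $\boxdot$). In the base case $n = \boxdot$, the only assignment is the empty one and $\mathrm{eval}(\boxdot) = 0$, so the count is $1$ when $e = 0$ and $0$ otherwise, matching the first two branches of \autoref{eq:dd-count-recursion}. For the inductive step at an internal node $n$, I would partition the assignments in $\mathcal{V}_{A_{\downarrow n}}$ according to whether $v(a(n)) = 0$ or $1$. By \autoref{eq:dd-eval-definition}, the case $v(a(n)) = 0$ contributes $\mathrm{eval}_v(n) = w_L(n) + \mathrm{eval}_v(c_L(n))$, so such a $v$ evaluates to $e$ exactly when its restriction to $A_{\downarrow c_L(n)}$ evaluates $c_L(n)$ to $e - w_L(n)$; as that restriction ranges freely over $\mathcal{V}_{A_{\downarrow c_L(n)}}$, the induction hypothesis counts these as $\mathrm{count}_{e-w_L(n)}(c_L(n))$, and symmetrically for $v(a(n)) = 1$. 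Summing the two disjoint, exhaustive cases gives exactly the recursion. I would handle the invalid element by the convention that any subtraction $e - w$ landing outside $\mathcal{E}$ is identified with $\infty$ and that $\mathrm{count}_\infty(\cdot) = 0$; this is consistent because a branch whose increment is $\infty$ (or whose required child value is unrealizable) contributes no assignment to any finite target $e$.

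For the complexity bound I would observe that the recursion has at most $|\mathcal{N}| \cdot |\mathcal{E}|$ distinct subproblems, one for each pair $(n, e)$ with $n \in \mathcal{N}$ and $e \in \mathcal{E}$ (the $\infty$ entries are all $0$ and need not be stored). I would evaluate them with memoization, filling a table in reverse topological order of the DAG so that the two child entries referenced in each step are already available. Each entry then costs $O(1)$ work---two table look-ups, two value subtractions, and one addition---giving the total $O(|\mathcal{N}| \cdot |\mathcal{E}|)$.

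I expect the main obstacle to be the careful bookkeeping around the invalid element $\infty$ and the out-of-range subtractions, together with making the counting domain precise: the recursion is naturally a statement about assignments to the \emph{suffix} variables $A_{\downarrow n}$ rather than all of $\mathcal{V}_A$, and it is the fullness-plus-orderedness property $A_{\downarrow c_L(n)} = A_{\downarrow c_H(n)}$ that makes the partition by $v(a(n))$ clean. Once these points are nailed down, both the induction and the memoization argument are routine.
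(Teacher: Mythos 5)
Your proof is correct and takes essentially the same route as the paper's: structural induction with the sink $\boxdot$ as base case, a partition of assignments by the value of $a(n)$ (relying on fullness and orderedness so that both children govern the same variable set), and a memoization table indexed by $(n,e)$ pairs for the $O(|\mathcal{N}|\cdot|\mathcal{E}|)$ bound. Your explicit identification of the counting domain as the suffix variables $A_{\downarrow n}$ (which coincides with the paper's $A[\leq \pi(a(n))]$, recovering $\mathcal{V}_A$ at the root) and your explicit convention for the invalid element $\infty$ are somewhat more careful than the paper's write-up, but the underlying argument is the same.
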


\begin{proof}
We will prove this by induction on the structure of the recursion.

\emph{(Base case.)} Based on \autoref{eq:dd-eval-definition}, when $n = \boxdot$ we get $\mathrm{eval}_v(n) = 0$ for all $v$. Furthermore, when $n = \boxdot$, the set $\mathcal{V}_{A}[a_{>\pi(a(n))}=0]$ contains only one value assignment with all variables set to zero. Hence, the model count will equal to $1$ only for $e=0$ and it will be $0$ otherwise, which is reflected in the base cases of \autoref{eq:dd-count-recursion}.

\emph{(Inductive step.)} Because our ADD is ordered and full, both $c_L(n)$ and $c_H(n)$ are associated with the same variable, which is the predecessor of $a(n)$ in the permutation $\pi$. Based on this and the induction hypothesis, we can assume that
\begin{gather} \label{eq:count-proof-induction-components}
    \begin{split}
        \mathrm{count}_{e - w_L(n)} (c_L(n)) &= \Big|\Big\{ v \in \mathcal{V}_{A [\leq a(c_L(n))] } \ | \ \mathrm{eval}_v (c_L(n)) = e - w_L(n) \Big\}\Big| \\
        \mathrm{count}_{e - w_H(n)} (c_H(n)) &= \Big|\Big\{ v \in \mathcal{V}_{A [\leq a(c_H(n))] } \ | \ \mathrm{eval}_v (c_H(n)) = e - w_H(n) \Big\}\Big|
    \end{split}
\end{gather}
We would like to compute $\mathrm{count}_e (n)$ as defined in \autoref{eq:dd-count-definition}. It computes the size of a set defined over possible value assignments to variables in $A [\leq a(n)]$. The set of value assignments can be partitioned into two distinct sets: one where $a(n) \gets 0$ and one where $a(n) \gets 1$. We thus obtain the following expression:
\begin{align}
    \begin{split}
        \mathrm{count}_e (n) :=
        & \Big|\Big\{ v \in \mathcal{V}_{A [\leq a(n)]} \big[ a(n) \gets 0 \big] \ | \ \mathrm{eval}_v (n) = e \Big\}\Big| \\
        + \ 
        & \Big|\Big\{ v \in \mathcal{V}_{A [\leq a(n)]} \big[ a(n) \gets 1 \big] \ | \ \mathrm{eval}_v (n) = e \Big\}\Big|
    \end{split}
\end{align}
Based on \autoref{eq:dd-eval-definition}, we can transform the $\mathrm{eval}_v (n)$ expressions as such:
\begin{align}
    \begin{split}
        \mathrm{count}_e (n) :=
        & \Big|\Big\{ v \in \mathcal{V}_{A [\leq a(c_L(n))]} \ | \ w_L(n) + \mathrm{eval}_v (c_L(n)) = e \Big\}\Big| \\
        + \ 
        & \Big|\Big\{ v \in \mathcal{V}_{A [\leq a(c_L(n))]} \ | \ w_H(n) + \mathrm{eval}_v (c_H(n)) = e \Big\}\Big|
    \end{split}
\end{align}
Finally, we can notice that the set size expressions are equivalent to those in \autoref{eq:count-proof-induction-components}. Therefore, we can obtain the following expression:
\begin{equation}
    \mathrm{count}_e (n) := \mathrm{count}_{e - w_L(n)} (c_L(n)) + \mathrm{count}_{e - w_H(n)} (c_H(n))
\end{equation}
which is exactly the recursive step in \autoref{eq:dd-count-recursion}. This concludes our inductive proof and we move onto proving the complexity bound.

\emph{(Complexity.)} This is trivially proven by observing that since $\mathrm{count}$ has two arguments, we can maintain a table of results obtained for each $n \in \mathcal{N}$ and $e \in \mathcal{E}$. Therefore, we know that we will never need to perform more than $O(|\mathcal{N}| \cdot |\mathcal{E}|)$ invocations of $\mathrm{count}_e (n)$.

\end{proof}

\inlinesection{ADD Construction.} Next, we prove that the size of an ADD resulting from \emph{diagram summation} as defined in \autoref{sec:additive-decision-diagrams} is linear in the number of variables.

The size of the diagram resulting from a sum of two diagrams with node sets $\mathcal{N}_1$ and $\mathcal{N}_2$ can be loosely bounded by $O(|\mathcal{N}_1| \cdot |\mathcal{N}_2|)$ assuming that its nodes come from a combination of every possible pair of operand nodes. However, given the much more narrow assumptions we made in the definition of the node sum operator, we can make this bound considerably tighter. For this we define the \emph{diameter} of an ADD as the maximum number of nodes associated with any single variable. Formally we can write:
\begin{equation}
    \mathrm{diam}(\mathcal{N}) := \max_{a_i \in A} \big| \{ n \in \mathcal{N} : a(n) = a_i \} \big|
\end{equation}
We can immediately notice that the size of any ADD with set of nodes $\mathcal{N}$ and variables $A$ is bounded by $O(|A| \cdot \mathrm{diam}(\mathcal{N}))$. We can use this fact to prove a tighter bound on the size of an ADD resulting from a sum operation:

\begin{lemma}
Given two full ordered ADD's with nodes $\mathcal{N}_1$ and $\mathcal{N}_2$, noth defined over the set of variables $A$, the number of nodes in $\mathcal{N}_1 + \mathcal{N}_2$ is bounded by $O(|A| \cdot \mathrm{diam}(\mathcal{N}_1) \cdot \mathrm{diam}(\mathcal{N}_2))$.
\end{lemma}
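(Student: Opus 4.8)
The plan is to track exactly which nodes the recursive summation procedure can create, and to exploit the fact that both operands are full and ordered over the same variable set $A$ (and hence share the same permutation $\pi$). First I would argue that every node produced by $\mathcal{N}_1 + \mathcal{N}_2$ is canonically identified with a \emph{pair} $(m_1, m_2) \in \mathcal{N}_1 \times \mathcal{N}_2$, namely the node $m_1 + m_2$ generated when the recursion reaches $m_1$ in the first diagram and $m_2$ in the second simultaneously. Implementing the recursion with memoization on these pairs (so that $m_1 + m_2$ is constructed only once), the number of nodes of $\mathcal{N}_1 + \mathcal{N}_2$ equals the number of \emph{distinct reachable pairs}, plus the sink $\boxdot$.

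The key structural observation is that the recursion descends both diagrams in lockstep: starting from the two roots, the rules $c_L(m_1 + m_2) := c_L(m_1) + c_L(m_2)$ and $c_H(m_1 + m_2) := c_H(m_1) + c_H(m_2)$ always pair a low (resp. high) child of $m_1$ with a low (resp. high) child of $m_2$. Since both diagrams are full and ordered under the same $\pi$, at every depth of this joint traversal the two current nodes are associated with the \emph{same} variable; that is, any reachable pair $(m_1, m_2)$ satisfies $a(m_1) = a(m_2)$. Hence no pair whose two components sit at different levels is ever formed.

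Given this invariant, I would finish with a counting argument grouped by variable. Fix $a_i \in A$. By the definition of the diameter, the number of nodes of $\mathcal{N}_1$ with $a(n) = a_i$ is at most $\mathrm{diam}(\mathcal{N}_1)$, and likewise at most $\mathrm{diam}(\mathcal{N}_2)$ for $\mathcal{N}_2$. Therefore the number of reachable pairs assigned to $a_i$ is at most $\mathrm{diam}(\mathcal{N}_1) \cdot \mathrm{diam}(\mathcal{N}_2)$. Summing over all $|A|$ variables and adding one for the sink yields the claimed bound $O(|A| \cdot \mathrm{diam}(\mathcal{N}_1) \cdot \mathrm{diam}(\mathcal{N}_2))$.

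The main obstacle is the lockstep/same-level claim in the second paragraph: it is precisely what improves on the naive ``all pairs'' bound $O(|\mathcal{N}_1| \cdot |\mathcal{N}_2|)$, and it rests essentially on fullness together with a shared variable ordering. I would make sure to spell out why fullness is needed --- without it one diagram could ``skip'' a variable that the other retains, breaking the invariant $a(m_1) = a(m_2)$ and allowing cross-level pairs to arise. A secondary point worth stating explicitly is that the bound counts \emph{distinct} nodes only under memoization: without sharing, the unfolded summation is a tree whose size could be exponential, so the DAG-with-caching viewpoint is essential to the argument.
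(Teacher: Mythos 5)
Your proposal is correct and follows essentially the same route as the paper: both arguments rest on the observation that, because the two ADDs are full and ordered over the same variable permutation, the sum operation only ever merges nodes associated with the same variable, so each variable level contributes at most $\mathrm{diam}(\mathcal{N}_1) \cdot \mathrm{diam}(\mathcal{N}_2)$ nodes. The paper phrases this as bounding $\mathrm{diam}(\mathcal{N}_1 + \mathcal{N}_2)$ by the product of the diameters and then invoking the generic size bound $O(|A| \cdot \mathrm{diam})$, whereas you count the reachable same-level pairs directly; your added remarks on memoization and on why fullness is needed are useful elaborations of the same argument rather than a different proof.
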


\begin{proof}
It is sufficient to show that $\mathrm{diam} (\mathcal{N}_1 + \mathcal{N}_2) = O(\mathrm{diam} (\mathcal{N}_1) \cdot \mathrm{diam} (\mathcal{N}_2))$. This is a direct consequence of the fact that for full ordered ADD's the node sum operator is defined only for nodes associated with the same variable. Since the only way to produce new nodes is by merging one node in $\mathcal{N}_1$ with one node in $\mathcal{N}_2$, and given that we can merge nodes associated with the same variable, the number of nodes associated with the same variable in the resulting ADD equals the product of the corresponding number of nodes in the constituent ADD's. Since the diameter is simply the upper bound of the number of nodes associated with any single variable, the same upper bound in the resulting ADD cannot be larger than the product of the upper bounds of constituent nodes.
\end{proof}

\inlinesection{Computing the Oracle using ADD's.} Finally, we prove the correctness of \autoref{thm:decision-diagram}.

\begin{lemma} \label{lem:oracle-as-add-count}
Given an Additive Decision diagram with root node $n_{t, t'}$ that computes the Boolean function $\phi_{t, t'}(v)$ as defined in \autoref{eq:oracle-add-function}, the counting oracle $\omega_{t, t'} (\alpha, \gamma, \gamma')$ defined in \autoref{eq:counting-oracle} can be computed as:
\begin{equation}
    \omega_{t, t'} (\alpha, \gamma, \gamma') := \mathrm{count}_{(\alpha, \gamma, \gamma')} (n_{t, t'})
\end{equation}
\end{lemma}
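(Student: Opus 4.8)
The plan is to derive the identity in two moves: first reduce $\mathrm{count}_{(\alpha,\gamma,\gamma')}(n_{t,t'})$ to a cardinality of value assignments using the already-established correctness of model counting, and then show that this cardinality is exactly the number of assignments the oracle $\omega_{t,t'}$ counts.

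First I would invoke \autoref{lem:add-count-correct}. By hypothesis the root node $n_{t,t'}$ realises $\phi_{t,t'}$, i.e. $\mathrm{eval}_v(n_{t,t'}) = \phi_{t,t'}(v)$ for every $v$ ranging over the value assignments to $A\setminus\{a_i\}$ (the variable $a_i$ having been eliminated by the restriction/summation construction $\mathcal{N}[a_i\gets 0] + \mathcal{N}'[a_i\gets 1]$). Applying \autoref{lem:add-count-correct} with target value $e = (\alpha,\gamma,\gamma')$ then gives
\[
\mathrm{count}_{(\alpha,\gamma,\gamma')}(n_{t,t'}) = \big|\{\, v \mid \phi_{t,t'}(v) = (\alpha,\gamma,\gamma')\,\}\big|.
\]
Since $\omega_{t,t'}(\alpha,\gamma,\gamma')$ in \autoref{eq:counting-oracle} is a sum over the same assignments of a product of $0/1$ indicators, it counts exactly those $v$ that simultaneously satisfy all five indicator predicates. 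The goal therefore reduces to proving the set equality between $\{v : \phi_{t,t'}(v) = (\alpha,\gamma,\gamma')\}$ and the set of $v$ on which every indicator in \autoref{eq:counting-oracle} evaluates to $1$.

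Next I would verify this equality predicate-by-predicate. The support and tally clauses match essentially verbatim: $\phi_{t,t'}(v)=(\alpha,\gamma,\gamma')$ forces $|\mathrm{supp}(v)| = \alpha$, $\mathrm{tally}_t \mathcal{D}[v[a_i\gets 0]] = \gamma$ and $\mathrm{tally}_{t'}\mathcal{D}[v[a_i\gets 1]] = \gamma'$ (the boundary tuple for the $a_i\gets 1$ branch being $t'$), which are precisely the corresponding indicators of the oracle. The only genuine content lies in reconciling the \emph{presence} clauses of $\phi_{t,t'}$, namely $t \in \mathcal{D}[v[a_i\gets 0]]$ and $t' \in \mathcal{D}[v[a_i\gets 1]]$ enforced through the $\infty$ cases, with the \emph{boundary} indicators $\mathbbm{1}\{t = \mathrm{top}_K \mathcal{D}[v[a_i\gets 0]]\}$ and $\mathbbm{1}\{t' = \mathrm{top}_K \mathcal{D}[v[a_i\gets 1]]\}$ of the oracle. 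The key observation is that $\Gamma$ consists precisely of tally vectors whose components sum to $K$. Hence, whenever the output is a valid triple with $\gamma,\gamma' \in \Gamma$, the condition $\mathrm{tally}_t\mathcal{D}[v[a_i\gets 0]]=\gamma$ already encodes that exactly $K$ present tuples have similarity $\geq \sigma(t)$; combined with $t$ itself being present, this is equivalent to $t$ occupying the $K$-th position in decreasing-similarity order, i.e. $t = \mathrm{top}_K\mathcal{D}[v[a_i\gets 0]]$. I expect this equivalence to be the main obstacle, and it is where one must be careful: it relies on the definitions of $\mathrm{top}_K$ and $\mathrm{tally}_t$ together with a fixed tie-breaking rule on the similarity scores, so that ``present with exactly $K$ tuples at least as similar'' and ``being the $K$-th nearest present tuple'' coincide. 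The same argument applies symmetrically to $t'$ under the assignment $a_i\gets 1$.

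Finally, combining the two inclusions established above yields equality of the two assignment sets, so their cardinalities agree, giving $\omega_{t,t'}(\alpha,\gamma,\gamma') = \mathrm{count}_{(\alpha,\gamma,\gamma')}(n_{t,t'})$ as claimed. I would also remark in passing that the invalid outputs $\phi_{t,t'}(v) = \infty$, which arise exactly when $t$ or $t'$ is absent, are automatically excluded from the count for any finite target triple $(\alpha,\gamma,\gamma')$; this matches the oracle's implicit requirement that $t$ and $t'$ be genuine top-$K$ boundary tuples, and so no spurious assignments are counted on either side.
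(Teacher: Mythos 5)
Your proposal is correct and follows essentially the same route as the paper's own proof: both reduce the claim to the equivalence between the oracle's $\mathrm{top}_K$-plus-tally predicates and the presence-plus-tally conditions encoded in $\phi_{t,t'}$ (via the $\infty$ cases), using exactly the key observation that a tally vector in $\Gamma$ sums to $K$, which makes the explicit top-$K$ condition redundant once $t$ (resp.\ $t'$) is present. The only cosmetic difference is direction --- the paper rewrites the oracle until it becomes $\mathrm{count}_{(\alpha,\gamma,\gamma')}(n_{t,t'})$, while you start from the count and establish the set equality --- plus your welcome explicit mention of the tie-breaking assumption that the paper leaves implicit.
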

\begin{proof}

Let us define $\mathcal{D}[\geq_\sigma t] \subseteq \mathcal{D}$ as a set of tuples with similarity higher or equal than that of $t$, formally $\mathcal{D}[\geq_\sigma t] := \{ t' \in \mathcal{D} : \sigma(t') \geq \sigma(t) \}$. Similarly to $\mathcal{D}$, the semantics of $\mathcal{D}[\geq_\sigma t]$ is also that of a set of possible candidate sets. Given a value assignment $v$, we can obtain $\mathcal{D}[\geq_\sigma t][v]$ from $\mathcal{D}[v]$. For convenience, we also define $\mathcal{D}[\geq_\sigma t][y]$ as a subset of $\mathcal{D}[\geq_\sigma t]$ with only tuples that have label $y$. Given these definitions, we can define several equivalences. First, for $\mathrm{top}_K$ we have:
\begin{equation}
    \Big( t = \mathrm{top}_K \mathcal{D}[v] \Big) \iff
    \Big(t \in \mathcal{D}[v] \wedge \big| \mathcal{D}[\geq_\sigma t][v] \big| = K \Big)
\end{equation}
In other words, for $t$ to be the tuple with the $K$-th highest similarity in $\mathcal{D}[v]$, it needs to be a member of $\mathcal{D}[v]$ and the number of tuples with similarity greater or equal to $t$ has to be exactly $K$. Similarly, we can define the equivalence for $\mathrm{tally}_{t}$:
\begin{equation} \label{eq:tally-dataset-equivalence-step}
    \Big( \gamma = \mathrm{tally}_t \mathcal{D}[v] \Big) \iff
    \Big( \forall y \in \mathcal{Y}, \gamma_{y} = \big| \mathcal{D}[ \geq_\sigma t][ y][ v] \big|  \Big)
\end{equation}
This is simply an expression that partitions the set $\mathcal{D}[\geq_\sigma t][v]$ based on $y$ and tallies them up. The next step is to define an equivalence for $( t = \mathrm{top}_K \mathcal{D}[v] ) \wedge ( \gamma = \mathrm{tally}_t \mathcal{D}[v] )$. We can notice that since $|\gamma| = K$, if we have $( \forall y \in \mathcal{Y}, \gamma_{y} = |\mathcal{D}[\geq_\sigma t][y][v]| )$ then we can conclude that $(|\mathcal{D}[\geq_\sigma t][v]| = K)$ is redundant. Hence, we can obtain:
\begin{equation}
    \Big( t = \mathrm{top}_K \mathcal{D}[v] \Big) \wedge
    \Big( \gamma = \mathrm{tally}_t \mathcal{D}[v] \Big) \iff
    \Big(t \in \mathcal{D}[v] \Big) \wedge
    \Big( \forall y \in \mathcal{Y}, \gamma_{y} = \big| \mathcal{D}[\geq_\sigma t][y][v] \big|  \Big)
\end{equation}
According to \autoref{eq:tally-dataset-equivalence-step}, we can reformulate the right-hand side of the above equivalence as:
\begin{equation}
    \Big( t = \mathrm{top}_K \mathcal{D}[v] \Big) \wedge
    \Big( \gamma = \mathrm{tally}_t \mathcal{D}[v] \Big) \iff
    \Big(t \in \mathcal{D}[v] \Big) \wedge
    \Big( \gamma = \mathrm{tally}_t \mathcal{D}[v] \Big)
\end{equation}
We can construct a similar expression for $t'$ and $v[a_i = 1]$ so we cover four out of five predicates in \autoref{eq:counting-oracle}. The remaining one is simply the support of the value assignment $v$ which we will leave intact. This leads us with the following equation for the counting oracle:
\begin{gather}
\begin{split}
    \omega_{t, t'} (\alpha, \gamma, \gamma') :=
    \sum_{v \in \mathcal{V}_{A}[a_i \gets 0]}
    & \mathbbm{1} \{ \alpha = |\mathrm{supp}(v)| \} \\
    & \mathbbm{1} \{ t \in f(\mathcal{D}[v]) \}
        \mathbbm{1} \{ t' \in f(\mathcal{D}[v[a_i \gets 1]]) \} \\
    & \mathbbm{1} \{ \gamma = \mathrm{tally}_t \mathcal{D}[v] \}
        \mathbbm{1} \{ \gamma = \mathrm{tally}_t \mathcal{D}[v[a_i \gets 1]] \}
\end{split} \label{eq:counting-oracle-dd-redef}
\end{gather}
We can use the Boolean function $\phi_{t, t'}(v)$ in \autoref{eq:oracle-add-function} to simplify the above equation. Notice that the conditions $t \in f(\mathcal{D}[v])$ and $t' \in f(\mathcal{D}[v[a_i \gets 1]])$ are embedded in the definition of $\phi_{t, t'}(v)$ which will return $\infty$ if those conditions are not met. When the conditions are met, $\phi_{t, t'}(v)$ returns exactly the same triple $(\alpha, \gamma, \gamma')$. Therefore it is safe to replace the five indicator functions in the above formula with a single one as such:
\begin{equation}
    \omega_{t, t'} (\alpha, \gamma, \gamma') :=
    \sum_{v \in \mathcal{V}_{A}[a_i \gets 0]}
    \mathbbm{1} \{ (\alpha, \gamma, \gamma') = \phi_{t, t'}(v) \}
\end{equation}
Given our assumption that $\phi_{t, t'}(v)$ can be represented by an ADD with a root node $n_{t, t'}$, the above formula is exactly the model counting operation:
\begin{equation}
    \omega_{t, t'} (\alpha, \gamma, \gamma') := \mathrm{count}_{(\alpha, \gamma, \gamma')} (n_{t, t'})
\end{equation}

\end{proof}

\begin{theorem} \label{thm:decision-diagram-appendix}
If we can represent the Boolean function $\phi_{t, t'}(v)$ defined in \autoref{eq:oracle-add-function} with an Additive Decision Diagram of size polynomial in $|\mathcal{D}|$ and $|f(\mathcal{D})|$, then we can compute the counting oracle $\omega_{t, t'}$ in time polynomial in $|\mathcal{D}|$ and $|f(\mathcal{D})|$.
\end{theorem}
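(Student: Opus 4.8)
The plan is to obtain \autoref{thm:decision-diagram-appendix} as a direct composition of the results already established in this section, so that no new machinery is required. Under the hypothesis we are handed an ADD with root node $n_{t, t'}$ that computes the Boolean function $\phi_{t, t'}(v)$ of \autoref{eq:oracle-add-function} and whose node set $\mathcal{N}$ has size polynomial in $|\mathcal{D}|$ and $|f(\mathcal{D})|$. The first step is to invoke \autoref{lem:oracle-as-add-count}, which converts the oracle into a model count on this very diagram, namely $\omega_{t, t'}(\alpha, \gamma, \gamma') = \mathrm{count}_{(\alpha, \gamma, \gamma')}(n_{t, t'})$. This reduces the task of evaluating the oracle on every triple $(\alpha, \gamma, \gamma')$ to a single model-counting computation rooted at $n_{t, t'}$.

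The second step is to apply \autoref{lem:add-count-correct}, which certifies that the recursion of \autoref{eq:dd-count-recursion} computes $\mathrm{count}_e(n)$ correctly and that the associated dynamic program, which memoizes one entry per pair $(n, e)$ with $n \in \mathcal{N}$ and $e \in \mathcal{E}$, runs in $O(|\mathcal{N}| \cdot |\mathcal{E}|)$ time. Since a single execution of this dynamic program produces $\mathrm{count}_e(n_{t, t'})$ simultaneously for all $e \in \mathcal{E}$, it yields the oracle $\omega_{t, t'}(\alpha, \gamma, \gamma')$ for every value of $(\alpha, \gamma, \gamma')$ at once, with no per-triple overhead.

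The only estimate that actually needs to be carried out is a bound on $|\mathcal{E}|$. Recall that $\mathcal{E} = \{1, \dots, |A|\} \times \Gamma \times \Gamma$, so $|\mathcal{E}| = |A| \cdot |\Gamma|^2$. The index factor has size $|A|$, and since each input tuple contributes a single provenance variable we have $|A| = O(|\mathcal{D}|)$. For the tally set, I would observe that any $\gamma \in \Gamma$ arising from a top-$K$ boundary records label counts among exactly $K$ tuples, so each of its $|\mathcal{Y}|$ coordinates lies in $\{0, \dots, K\}$ and hence $|\Gamma| \le (K+1)^{|\mathcal{Y}|}$, which is polynomial in $K$ for a fixed number of classes. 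Substituting these bounds gives $|\mathcal{E}| = O(|\mathcal{D}| \cdot (K+1)^{2|\mathcal{Y}|})$, which is polynomial; combining it with the polynomial bound on $|\mathcal{N}|$ supplied by the hypothesis yields the claimed polynomial $O(|\mathcal{N}| \cdot |\mathcal{E}|)$ running time.

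I do not anticipate a genuine obstacle, as the statement is essentially bookkeeping that chains \autoref{lem:oracle-as-add-count} with \autoref{lem:add-count-correct}. The one point deserving care, and the natural place where a careless argument could go wrong, is the bound on $|\Gamma|$: it is essential to use that a tally vector at a top-$K$ boundary sums to $K$ rather than to the full dataset size, since this is precisely what keeps $|\mathcal{E}|$, and therefore the overall complexity, polynomial rather than exponential.
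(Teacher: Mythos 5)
Your proof is correct and takes essentially the same route as the paper's: it composes \autoref{lem:oracle-as-add-count} (the oracle equals a model count at the root $n_{t,t'}$) with \autoref{lem:add-count-correct} (the $O(|\mathcal{N}|\cdot|\mathcal{E}|)$ memoized recursion of \autoref{eq:dd-count-recursion}). The only difference is that you make explicit the polynomial bound on $|\mathcal{E}|$ via $|A| = O(|\mathcal{D}|)$ and $|\Gamma| \le (K+1)^{|\mathcal{Y}|}$, a point the paper merely asserts (``the size of $\mathcal{E}$ is polynomial in the size of data''), so your write-up is, if anything, slightly more complete.
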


\begin{proof}
This theorem follows from the two previously proved lemmas: \autoref{lem:add-count-correct} and \autoref{lem:oracle-as-add-count}. Namely, as a result of \autoref{lem:oracle-as-add-count} we claim that model counting of the Boolean function $\phi_{t, t'}(v)$ is equivalent to computing the oracle result. On top of that, as a result of \autoref{lem:add-count-correct} we know that we can perform model counting in time linear in the size of the decision diagram. Hence, if our function $\phi_{t, t'}(v)$ can be represented with a decision diagram of size polynomial in the size of data, then we can conclude that computing the oracle result can be done in time polynomial in the size of data.
\end{proof}

\subsection{Proof of corollary \autoref{col:complexity-knn-join}} \label{sec:apx-complexity-knn-join-proof}

\begin{proof}
    This follows from the observation that in \autoref{alg:compile-dataset-to-add}, each connected component $A_C$ will be made up from one variable corresponding to the dimension table and one or more variables corresponding to the fact table. Since the fact table variables will be categorized as "leaf variables", the expression $A_C \setminus A_L$ in Line \ref{alg:cmp:line:add-tree} will contain only a single element -- the dimension table variable. Consequently, the ADD tree in $\mathcal{N}'$ will contain a single node. On the other side, the $A_C \cap A_L$ expression will contain all fact table variables associated with that single dimension table variable. That chain will be added to the ADD tree two times for two outgoing branches of the single tree node. Hence, the ADD segment will be made up of two fact table variable chains stemming from a single dimension table variable node. There will be $O(|\mathcal{D}_D|)$ partitions in total. Given that the fact table variables are partitioned, the cumulative size of their chains will be $O(|\mathcal{D}_F|)$. Therefore, the total size of the ADD with all partitions joined together is bounded by $O(|\mathcal{D}_D|+|\mathcal{D}_F|) = O(N)$.
    
    Given fact and combining it with \autoref{thm:decision-diagram} we know that the counting oracle can be computed in time $O(N)$ time. Finally, given \autoref{thm:shapley-using-counting-oracle} and the structure of \autoref{eq:shap-main} we can observe that the counting oracle is invoked $O(N^3)$ times. As a result, we can conclude that the total complexity of computing the Shapley value is $O(N^4)$.
\end{proof}

\subsection{Proof of corollary \autoref{col:complexity-knn-fork}} \label{sec:apx-complexity-knn-fork-proof}

\begin{proof}
    The key observation here is that, since all provenance polynomials contain only a single variable, there is no interdependency between them, which means that the connected components returned in Line \ref{alg:cmp:line:conn-cmp} of \autoref{alg:compile-dataset-to-add} will each contain a single variable. Therefore, the size of the resulting ADD will be $O(N)$. Consequently, similar to the proof of the previous corollary, the counting oracle can be computed in time $O(N)$ time. In this case, the size of the output dataset is $O(M)$ which means that \autoref{eq:shap-main} willinvoke the oracle $O(M^2 N)$ times. Therefore, the total time complexity of computing the Shapley value will be $O(M^2 N^2)$.
\end{proof}

\subsection{Proof of corollary \autoref{col:complexity-knn-map}} \label{sec:apx-complexity-knn-map-proof}

\begin{proof}
    There are two arguments we need to make which will result in the reduction of complexity compared to fork pipelines. The first argument is that, given that each variable can appear in the provenance polynomial of at most one tuple, having its value set to $1$ can result in either zero or one tuple contributing to the top-$K$ tally. It will be one if that tuple is more similar than the boundary tuple $t$ and it will be zero if it is less similar. Consequently, our ADD will have a chain structure with high-child increments being either $0$ or $1$. If we partition the ADD into two chains, one with all increments $1$ and another with all increments $0$, then we end up with two uniform ADD's. As shown in \autoref{eq:dd-count-uniform}, model counting of uniform ADD's can be achieved in constant time. The only difference here is that, since we have to account for the support size each model, computing the oracle $\omega_{t, t'} (\alpha, \gamma, \gamma')$ for a given $\alpha$ will require us to account for different possible ways to split $\alpha$ across the two ADD's. However, since the tuple $t$ needs to be the boundary tuple, which means it is the $K$-th most similar, there need to be exactly $K-1$ variables from the ADD with increments $1$ that can be set to $1$. This gives us a single possible distribution of $\alpha$ across two ADD's. Hence, the oracle can be computed in constant time.
    
    As for the second argument, we need to make a simple observation. For map pipelines, given a boundary tuple $t$ and a tally vector $\gamma$ corresponding to the variable $a_i$ being assigned the value $0$, we know that setting this variable to $1$ can introduce at most one tuple to the top-$K$. That could only be the single tuple associated with $a_i$. If this tuple has a lower similarity score than $t$, there will be no change in the top-$K$. On the other side, if it has a higher similarity, then it will become part of the top-$K$ and it will evict exactly $t$ from it. Hence, there is a unique tally vector $\gamma'$ resulting from $a_i$ being assigned the value $1$. This means that instead of computing the counting oracle $\omega_{t, t'} (\alpha, \gamma, \gamma')$, we can compute the oracle $\omega_t (\alpha, \gamma)$. This means that, in \autoref{eq:shap-main} we can eliminate the iteration over $t'$ which saves us an order of $O(N)$ in complexity.
    
    As a result, \autoref{eq:shap-main} will make $O(N^2)$ invocations to the oracle which can be computed in constant time. Hence, the final complexity of computing the Shapley value will be $O(N^2)$.
\end{proof}

\subsection{Proof of corollary \autoref{col:complexity-1nn-map}} \label{sec:apx-complexity-1nn-map-proof}

\begin{proof}
    We start off by plugging in the oracle definition from \autoref{eq:oracle-1nn} into the Shapley value computation \autoref{eq:shap-main}:
    \begin{gather}
    \begin{split}
        \varphi_i = \frac{1}{N}
        \sum_{t, t' \in f(\mathcal{D})}
        \sum_{\alpha=1}^{N}
        \binom{N-1}{\alpha}^{-1}
        \sum_{\gamma, \gamma' \in \Gamma}
        u_{\Delta} (\gamma, \gamma')
        & \binom{|\{t'' \in \mathcal{D} \ : \ \sigma(t'') < \sigma(t) \}|}{\alpha} \\
        & \mathbbm{1} \{ p(t')=a_i \} \\
        & \mathbbm{1} \{ \gamma = \Gamma_{y(t)} \}
        \mathbbm{1} \{ \gamma' = \Gamma_{y(t')} \} 
    \end{split}
    \end{gather}
    As we can see, the oracle imposes hard constraints on the tuple $t'$ and tally vectors $\gamma$ and $\gamma'$. We will replace the tally vectors with their respective constants and the tuple $t'$ we will denote as $t_i$ because it is the only tuple associated with $a_i$. Because of this, we can remove the sums that iterate over them:
    \begin{gather}
        \varphi_i = \frac{1}{N}
        \sum_{t \in f(\mathcal{D})}
        \sum_{\alpha=1}^{N}
        \binom{N-1}{\alpha}^{-1}
        u_{\Delta} (\Gamma_{y(t)}, \Gamma_{y(t_i)})
        \binom{|\{t'' \in \mathcal{D} \ : \ \sigma(t'') < \sigma(t) \}|}{\alpha}
    \end{gather}
    We could significantly simplify this equation by assuming the tuples in $f(\mathcal{D})$ are sorted by decreasing similarity. We then obtain:
    \begin{equation}
        \varphi_i = \frac{1}{N}
        \sum_{j = 1}^{N}
        \sum_{\alpha=1}^{N}
        \binom{N-1}{\alpha}^{-1}
        u_{\Delta} (\Gamma_{y(t)}, \Gamma_{y(t_i)})
        \binom{N-j}{\alpha}
    \end{equation}
    We shuffle the sums a little by multiplying $\frac{1}{N}$ with $\binom{N-1}{\alpha}^{-1}$ and expanding the $u_{\Delta}$ function according to its definition. We also alter the limit of the innermost sum because $\alpha \leq N - j$. Thus, we obtain:
    \begin{equation}
        \varphi_i =
        \sum_{j = 1}^{N}
        \Big(
            \mathbbm{1} \{ y(t_i) = y(t_v) \} -
            \mathbbm{1} \{ y(t_j) = y(t_v) \}
        \Big)
        \sum_{\alpha=1}^{N-j}
        \binom{N}{\alpha}^{-1}
        \binom{N-j}{\alpha}
    \end{equation}
    The inntermost sum in the above equation can be simplified by applying the so-called Hockey-stick identity \cite{ross1997generalized}. Specifically, $\binom{N}{\alpha}^{-1}\binom{N-j}{\alpha}$ becomes $\binom{N}{j}^{-1}\binom{N-\alpha}{j}$. Then, $\sum_{\alpha=1}^{N-j}\binom{N}{j}^{-1}\binom{N-\alpha}{j}$ becomes $\binom{N}{j}^{-1}\binom{N}{j+1}$. Finally, we obtain the following formula:
    \begin{equation}
        \varphi_i =
        \sum_{j = 1}^{N}
        \Big(
            \mathbbm{1} \{ y(t_i) = y(t_v) \} -
            \mathbbm{1} \{ y(t_j) = y(t_v) \}
        \Big)
        \binom{N-j}{j+1}
    \end{equation}
    As we can see, the above formula can be computed in $O(N)$ iterations. Therefore, given that we still need to sort the dataset beforehand, the ovarall complexity of the entire Shapley value amounts to $O(N \log N)$.
\end{proof}

\subsection{Proof of corollary \autoref{col:complexity-1nn-fork}} \label{sec:apx-complexity-1nn-fork-proof}

\begin{proof}
    We will prove this by reducing the problem of Shapley value computation in fork pipelines to the one of computing it for map pipelines. Let us have two tuples $t_{j,1}, t_{j,2} \in f(\mathcal{D})$, both associated with some variable $a_j \in A$. That means that $p(t_{j,1}) = p(t_{j,2})$. If we examine \autoref{eq:top-1-condition-map-single}, we notice that it will surely evaluate to false if either $\sigma(t_{j,1}) > \sigma(t)$ or $\sigma(t_{j,2}) > \sigma(t)$. The same observation holds for \autoref{eq:top-1-condition-map}.
    
    Without loss of generality, assume $\sigma(t_{j,1}) > \sigma(t_{j,2})$. Then, $\sigma(t_{j,1}) > \sigma(t)$ implies $\sigma(t_{j,2}) > \sigma(t)$. As a result, we only ever need to check the former condition without paying attention to the latter. The outcome of this is that for all sets of tuples associated with the same variable, it is safe to ignore all of them except the one with the highest similarity score, and we will nevertheless obtain the same oracle result. Since we transformed the problem to the one where for each variable we have to consider only a single associated tuple, then we have effectively reduced the problem to the one of computing Shapley value for map pipelines. Consequently, we can apply the same algorithm and will end up with the same time complexity.
\end{proof}

\subsection{Details of \autoref{alg:compile-dataset-to-add}} \label{sec:apx-alg-compile-dataset-to-add-details}

In this section we examine the method of compiling a provenance-tracked dataset $f(\mathcal{D})$ that results from a pipeline $f$. The crux of the method is defined in \autoref{alg:compile-dataset-to-add} which is an algorithm that takes a dataset $\mathcal{D}$ with provenance tracked over a set of variables $\mathcal{X}$ and a boundary tuple $t \in \mathcal{D}$. The result is an ADD that computes the following function:
\begin{equation} \label{eq:oracle-add-function-single}
    \phi_{t}(v) := \begin{cases}
        \infty,     & \mathrm{if} \ t \not\in \mathcal{D}[v], \\
        \mathrm{tally}_t \mathcal{D}[v]    & \mathrm{otherwise}. \\
    \end{cases}
\end{equation}
Assuming that all provenance polynomials are actually a single conjunction of variables, and that the tally is always a sum over those polynomials, it tries to perform factoring by determining if there are any variables that can be isolated. This is achieved by first extracting variables that appear only once (Line \ref{alg:cmp:line:leaves}) separating the total sum into components that don't share any variables (Line \ref{alg:cmp:line:conn-cmp}). Then for the variables that cannot be isolated (because they appear in polynomials in multiple tuples with multiple different variables) we form a group which will be treated as one binary vector and based on the value of that vector we would take a specific path in the tree. We thus take the group of variables and call the \textsc{\small ConstructADDTree} function to construct an ADD tree (Line \ref{alg:cmp:line:add-tree}).

Every path in this tree corresponds to one value assignment to the variables in that tree. Then, for every path we call the \textsc{\small ConstructADDChain} to build a chain made up of the isolated variables and call \textsc{\small AppendToADDPath} to append them to the leaf of that path (Line \ref{alg:cmp:line:append-path}). For each variable in the chain we also define an increment that is defined by the number of tuples that will be more similar than the boundary tuple $t$ and also have their provenance polynomial "supported" by the path. We thus construct a segment of the final ADD made up of different components. We append this segment to the final ADD using the \textsc{\small AppendToADDRoot} function. We don't explicitly define these functions but we illustrate their functionality in \autoref{fig:example-add-compilation-functions}.

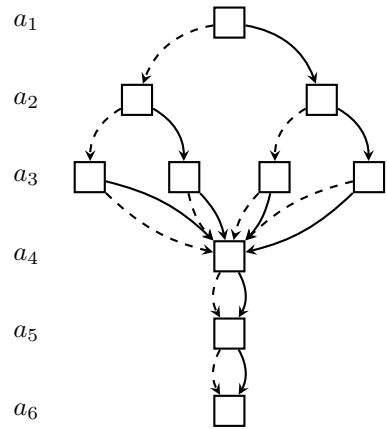
\begin{figure*}[!ht]
    \centering
    \begin{tikzpicture}[align=center, node distance=6mm and 2mm, line width=0.8pt] 
        \tikzstyle{invisible} = [minimum width=0, minimum height=4.5mm]
        \tikzstyle{free} = [inner sep=5pt]
        \tikzstyle{var} = [draw, rectangle, inner sep=2pt, minimum width=4mm, minimum height=4mm]
        \tikzstyle{root} = [draw, rectangle, inner sep=2pt, minimum width=4mm, minimum height=4mm]
        
        \tikzstyle{path} = [draw=myred]
        
        \begin{scope}[local bounding box=n1]
        
            
            \node[var] (n11) {};
            
            \draw (n11 -| -3,0) node[anchor=west] {$a_{1}$};
            
            
            \node[var] (n12) [below left=6mm and 8mm of n11] {};
            \draw[-stealth, dashed] (n11) to [bend right] (n12);
            
            \node[var] (n13) [below right=6mm and 8mm of n11] {};
            \draw[-stealth] (n11) to [bend left] (n13);
            
            \draw (n12 -| -3,0) node[anchor=west] {$a_{2}$};
            
            
            \node[var] (n14) [below left=of n12] {};
            \draw[-stealth, dashed] (n12) to [bend right] (n14);
            
            \node[var] (n15) [below right=of n12] {};
            \draw[-stealth] (n12) to [bend left] (n15);
            
            \node[var] (n16) [below left=of n13] {};
            \draw[-stealth, dashed] (n13) to [bend right] (n16);
            
            \node[var] (n17) [below right=of n13] {};
            \draw[-stealth] (n13) to [bend left] (n17);
            
            \draw (n14 -| -3,0) node[anchor=west] {$a_{3}$};
            
            
            \node[invisible] (n15n16) [shift={($(n15.east)!0.5!(n16.west)$)}] {};

        \end{scope}
        
        \node[free] (n1-cap) [above=of n11] {\small $\mathcal{N}_1 = $ \textsc{ConstructADDTree}$(\{a_1, a_2, a_3\})$};
        
        \begin{scope}[local bounding box=n2, shift={($(n11.east)+(8cm,0)$)}]
        
            
            \node[var] (n21) {};
            \draw (n21 -| -0.9,0) node[anchor=west] {$a_{4}$};
            
            
            \node[var] (n22) [below=of n21] {};
            \draw[-stealth, dashed] (n21) to [bend right] (n22);
            \draw[-stealth] (n21) to [bend left] (n22);
            \draw (n22 -| -0.9,0) node[anchor=west] {$a_{5}$};
            
            
            \node[var] (n23) [below=of n22] {};
            \draw[-stealth, dashed] (n22) to [bend right] (n23);
            \draw[-stealth] (n22) to [bend left] (n23);
            \draw (n23 -| -0.9,0) node[anchor=west] {$a_{6}$};
            
            
        
        \end{scope}
        
        \node[free] (n2-cap) [above=of n21] {\small $\mathcal{N}_2 = $ \textsc{ConstructADDChain}$(\{a_4, a_5, a_6\})$};
        
        \begin{scope}[local bounding box=n3, shift={($(n15n16.south)+(0,-2cm)$)}]
        
            
            \node[var] (n31) {};
            
            \draw (n31 -| -3,0) node[anchor=west] {$a_{1}$};
            
            
            \node[var] (n32) [below left=6mm and 8mm of n31] {};
            \draw[-stealth, dashed] (n31) to [bend right] (n32);
            
            \node[var] (n33) [below right=6mm and 8mm of n31] {};
            \draw[-stealth] (n31) to [bend left] (n33);
            
            \draw (n32 -| -3,0) node[anchor=west] {$a_{2}$};
            
            
            \node[var] (n34) [below left=of n32] {};
            \draw[-stealth, dashed] (n32) to [bend right] (n34);
            
            \node[var] (n35) [below right=of n32] {};
            \draw[-stealth] (n32) to [bend left] (n35);
            
            \node[var] (n36) [below left=of n33] {};
            \draw[-stealth, dashed] (n33) to [bend right] (n36);
            
            \node[var] (n37) [below right=of n33] {};
            \draw[-stealth] (n33) to [bend left] (n37);
            
            \draw (n34 -| -3,0) node[anchor=west] {$a_{3}$};
            
            
            \node[var] (n38) [below right=of n36] {};
            \draw[-stealth] (n36) to [bend left] (n38);
            \draw (n38 -| -3,0) node[anchor=west] {$a_{4}$};
            
            
            \node[var] (n39) [below=of n38] {};
            \draw[-stealth, dashed] (n38) to [bend right] (n39);
            \draw[-stealth] (n38) to [bend left] (n39);
            \draw (n39 -| -3,0) node[anchor=west] {$a_{5}$};
            
            
            \node[var] (n310) [below=of n39] {};
            \draw[-stealth, dashed] (n39) to [bend right] (n310);
            \draw[-stealth] (n39) to [bend left] (n310);
            \draw (n310 -| -3,0) node[anchor=west] {$a_{6}$};
            
            
        
        \end{scope}
        
        \node[free] (n3-cap) [above=of n31] {\small \textsc{AppendToADDPath}$(\mathcal{N}_1, \mathcal{N}_2, \{ a_1 \rightarrow 1, a_2 \rightarrow 0, a_3 \rightarrow 1 \})$};
        
        \begin{scope}[local bounding box=n3, shift={($(n31.east)+(8cm,0)$)}]
        
            
            \node[var] (n41) {};
            
            \draw (n41 -| -3,0) node[anchor=west] {$a_{1}$};
            
            
            \node[var] (n42) [below left=6mm and 8mm of n41] {};
            \draw[-stealth, dashed] (n41) to [bend right] (n42);
            
            \node[var] (n43) [below right=6mm and 8mm of n41] {};
            \draw[-stealth] (n41) to [bend left] (n43);
            
            \draw (n42 -| -3,0) node[anchor=west] {$a_{2}$};
            
            
            \node[var] (n44) [below left=of n42] {};
            \draw[-stealth, dashed] (n42) to [bend right] (n44);
            
            \node[var] (n45) [below right=of n42] {};
            \draw[-stealth] (n42) to [bend left] (n45);
            
            \node[var] (n46) [below left=of n43] {};
            \draw[-stealth, dashed] (n43) to [bend right] (n46);
            
            \node[var] (n47) [below right=of n43] {};
            \draw[-stealth] (n43) to [bend left] (n47);
            
            \draw (n44 -| -3,0) node[anchor=west] {$a_{3}$};
            
            
            \node[invisible] (n45n46) [shift={($(n45.east)!0.5!(n46.west)$)}] {};
            \node[var] (n48) [below=of n45n46] {};
            \draw[-stealth] (n44) to [bend left=15] (n48);
            \draw[-stealth, dashed] (n44) to [bend right=15] (n48);
            \draw[-stealth] (n45) to [bend left=15] (n48);
            \draw[-stealth, dashed] (n45) to [bend right=15] (n48);
            \draw[-stealth] (n46) to [bend left=15] (n48);
            \draw[-stealth, dashed] (n46) to [bend right=15] (n48);
            \draw[-stealth] (n47) to [bend left=15] (n48);
            \draw[-stealth, dashed] (n47) to [bend right=15] (n48);
            
            \draw (n48 -| -3,0) node[anchor=west] {$a_{4}$};
            
            
            \node[var] (n49) [below=of n48] {};
            \draw[-stealth, dashed] (n48) to [bend right] (n49);
            \draw[-stealth] (n48) to [bend left] (n49);
            \draw (n49 -| -3,0) node[anchor=west] {$a_{5}$};
            
            
            \node[var] (n410) [below=of n49] {};
            \draw[-stealth, dashed] (n49) to [bend right] (n410);
            \draw[-stealth] (n49) to [bend left] (n410);
            \draw (n410 -| -3,0) node[anchor=west] {$a_{6}$};
            
            
        
        \end{scope}
        
        \node[free] (n4-cap) [above=of n41] {\small \textsc{AppendToADDRoot}$(\mathcal{N}_1, \mathcal{N}_2)$};
        
        
    \end{tikzpicture}
    
    \caption{An example of a ADD compilation functions.
    }
    \label{fig:example-add-compilation-functions}
    
\end{figure*}

\end{document}